\def\Figref#1{Figure~\ref{#1}}
\def\eqref#1{equation~\ref{#1}}
\def\Eqref#1{Equation~\ref{#1}}
\def\1{\bm{1}}
\def\rmd{\mathrm{d}}
\def\bfY{\mathbf{Y}}
\def\bfX{\mathbf{X}}
\def\rset{\mathbb{R}}
\def\rvs{{\mathbf{s}}}
\def\rvw{{\mathbf{w}}}
\def\rvx{{\mathbf{x}}}
\def\rvz{{\mathbf{z}}}
\DeclareMathAlphabet{\mathsfit}{\encodingdefault}{\sfdefault}{m}{sl}
\SetMathAlphabet{\mathsfit}{bold}{\encodingdefault}{\sfdefault}{bx}{n}
\newcommand{\R}{\mathbb{R}}
\DeclareMathOperator*{\argmin}{arg\,min}
\newcommand{\rebut}[1]{#1}
\newenvironment{rebuttal}{}{}
\title{Particle Guidance: non-I.I.D. Diverse \\ Sampling with Diffusion Models}
\author{
Gabriele Corso\thanks{Correspondence to \texttt{gcorso@mit.edu}}$\;\,^1$, Yilun Xu$^1$, Valentin de Bortoli$^2$, Regina Barzilay$^1$, Tommi Jaakkola$^1$ \\
$^1$CSAIL, Massachusetts Institute of Technology, $^2$ENS, PSL University
}
\newtheorem{theorem}{Theorem}
\newcommand{\xx}{\mathbf{x}}
\def\onedot{$\mathsurround0pt\ldotp$}
\def\ie{\emph{i.e}\onedot, }
\newtheorem{proposition}{Proposition}
\def\Figref#1{Fig.~\ref{#1}}
\def\eqref#1{equation~\ref{#1}}
\def\Eqref#1{Eq.~(\ref{#1})}
\def\1{\bm{1}}
\def\rvs{{\mathbf{s}}}
\def\rvw{{\mathbf{w}}}
\def\rvx{{\mathbf{x}}}
\def\rvz{{\mathbf{z}}}
\DeclareMathAlphabet{\mathsfit}{\encodingdefault}{\sfdefault}{m}{sl}
\SetMathAlphabet{\mathsfit}{bold}{\encodingdefault}{\sfdefault}{bx}{n}
\newcommand\numberthis{\addtocounter{equation}{1}\tag{\theequation}}
\newcommand{\bfB}{\mathbf{B}}
\newcommand*{\@rowstyle}{}
\newcommand*{\rowstyle}[1]{
  \gdef\@rowstyle{#1}%
  \@rowstyle\ignorespaces%
}
\newcolumntype{=}{
  >{\gdef\@rowstyle{}}%
}
\newcolumntype{+}{
  >{\@rowstyle}%
}
\DeclareMathOperator{\rmsd}{RMSD}
\renewcommand{\ss}{\mathbf{s}}
\newcommand{\lnorm}{\left|\left|}
\newcommand{\rnorm}{\right|\right|}
\def\setstretch#1{\renewcommand{\baselinestretch}{#1}}
\begin{document}

\maketitle

\begin{abstract}

In light of the widespread success of generative models, a significant amount of research has gone into speeding up their sampling time. However, generative models are often sampled multiple times to obtain a diverse set incurring a cost that is orthogonal to sampling time. We tackle the question of how to improve diversity and sample efficiency by moving beyond the common assumption of independent samples. We propose \textit{particle guidance}, an extension of diffusion-based generative sampling where a joint-particle time-evolving potential enforces diversity. We analyze theoretically the joint distribution that particle guidance generates, \rebut{how to learn a potential that achieves optimal diversity,} and the connections with methods in other disciplines. Empirically, we test the framework both in the setting of conditional image generation, where we are able to increase diversity without affecting quality, and molecular conformer generation, where we reduce the state-of-the-art median error by 13\% on average.

\end{abstract}

\section{Introduction}

Deep generative modeling has become pervasive in many computational tasks across computer vision, natural language processing, physical sciences, and beyond. 
In many applications, these models are used to take a number of representative samples of some distribution of interest like Van Gogh's style paintings or the 3D conformers of a small molecule.
Although independent samples drawn from a distribution will perfectly represent it in the limit of infinite samples, for a finite number, this may not be the optimal strategy. Therefore, while deep learning methods have so far largely focused on the task of taking independent identically distributed (I.I.D.) samples from some distribution, this paper examines how one can use deep generative models to take a finite number of samples that can better represent the distribution of interest.

In other fields where \textit{finite-samples} approximations are critical, researchers have developed various techniques to tackle this challenge. In molecular simulations, several enhanced sampling methods, like metadynamics and replica exchange, have been proposed to sample diverse sets of low-energy structures and estimate free energies. In statistics, Stein Variational Gradient Descent (SVGD) is an iterative technique to match a distribution with a finite set of particles. However, these methods are not able to efficiently sample complex distributions like images.

Towards the goal of better \textit{finite-samples} generative models, that combine the power of recent advances with sample efficiency, we propose a general framework for sampling sets of particles using diffusion models. This framework, which we call \textit{particle guidance} (PG), is based on the use of a time-evolving potential to guide the inference process. 
\rebut{We present two different strategies to instantiate this new framework: the first, \textit{fixed potential particle guidance}, provides ready-to-use potentials that require no further training and have little inference overhead; the second, \textit{learned potential particle guidance}, requires a training process but offers better control and theoretical guarantees.}

The theoretical analysis of the framework leads us to two key results. On one hand, we obtain an expression for the joint marginal distribution of the sampled process when using any arbitrary guidance potential. On the other, we derive a simple objective one can use to train a model to learn a time-evolving potential that exactly samples from a joint distribution of interest. \rebut{We show this provides optimal joint distribution given some diversity constraint and it can be adapted to the addition of further constraints such as the preservation of marginal distributions. } Further, we also demonstrate the relations of particle guidance to techniques for non-I.I.D. sampling developed in other fields and natural processes and discuss its advantages. 

Empirically, we demonstrate the effectiveness of the method in both synthetic experiments and two of the most successful applications of diffusion models: text-to-image generation and molecular conformer generation. In the former, we show that particle guidance can improve the diversity of the samples generated with Stable Diffusion~\citep{Rombach2021HighResolutionIS} while maintaining a quality comparable to that of I.I.D. sampling. For molecular conformer generation, applied to the state-of-the-art method Torsional Diffusion \citep{jing2022torsional}, particle guidance is able to simultaneously improve precision and coverage, reducing their median error by respectively 19\% and 8\%. In all settings, we also study the critical effect that different potentials can have on the diversity and sample quality.

\begin{figure}[t]
  \centering
  \includegraphics[width=0.9 \textwidth]{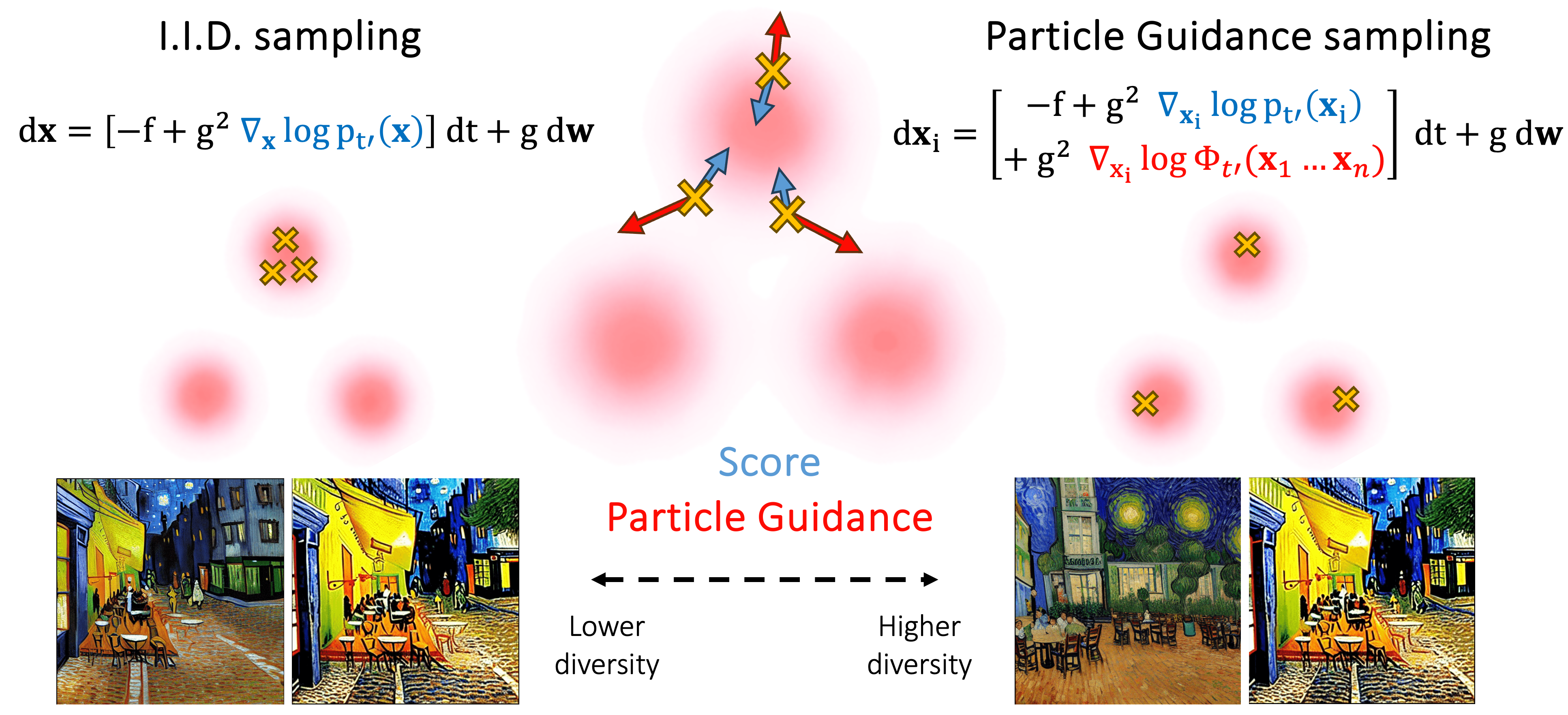}
  \caption{ Comparison of I.I.D. and particle guidance sampling. The center figure represents each step, with the distribution in pink and the samples as yellow crosses, where particle guidance uses not only the score (in blue) but also the guidance from joint-potential (red), leading it to discover different modes (right-hand samples vs those on the left). At the bottom, Van Gogh cafe images samples generated with Stable Diffusion with and without particle guidance. \rebut{A more detailed discussion on the suboptimality of I.I.D. sampling is presented in Appendix \ref{app:suboptimalityIID}.} }
  \label{fig:main_figure}
\end{figure}

\section{Background}

\paragraph{Diffusion models} Let $p(x)$ be the data distribution we are interested in learning. Diffusion models \citep{song2021score} define a forward diffusion process that has $p$ as the initial distribution and is described by $d\mathbf{x}=\mathbf{f}(\mathbf{x}, t)dt+g(t)d\mathbf{w}$, where $\mathbf{w}$ is the Wiener process. This forward diffusion process is then reversed using the corresponding reverse diffusion SDE $d\mathbf{x}=[-\mathbf{f}(\mathbf{x}, T-t) + g(T-t)^2 \nabla_{\textbf{x}} \log p_{T-t}(\mathbf{x})]dt + g(T-t) d\mathbf{w}$ (using the forward time convention) where the evolving score $\nabla_{\textbf{x}} \log p_t(\mathbf{x})$ is approximated with a learned function $s_\theta(\mathbf{x}, t)$. 

One key advantage of diffusion models over the broad class of energy-based models \citep{teh2003energy} is their \textit{finite-time sampling} property for taking a single sample. Intuitively, by using a set of smoothed-out probability distributions diffusion models are able to overcome energy barriers and sample every mode in finite time as guaranteed by the existence of the reverse diffusion SDE \citep{anderson1982reverse}. \rebut{In general, for the same order of discretization error, reverse diffusion SDE can efficiently sample from data distribution in much fewer steps than Langevin dynamics in energy-based models. For instance, Theorem 1 of \citet{Chen2022SamplingIA} shows that, assuming accurate learning of score, convergence of diffusion SDE is independent of isoperimetry constant of target distribution. Langevin dynamics mixing speed can be exponentially slow if spectral gap/isoperimetry constant is small.
} This critical property is orthogonal to the efficiency in the number of samples one needs to generate to cover a distribution; in this work, we aim to achieve sample efficiency while preserving the \textit{finite-time sampling} of diffusion models.

Diffusion models were extended to Riemannian manifolds by \citet{de2022riemannian}, this formulation has found particular success \citep{jing2022torsional, corso2022diffdock, yim2023se} in scientific domains where data distributions often lie close to predefined submanifolds \citep{corso2023modeling}. 

Classifier guidance (CG) \citep{dhariwal2021diffusion} has been another technical development that has enabled the success of diffusion models on conditional image generation. Here a classifier $p_\theta (y|\mathbf{x}_t,t)$, trained to predict the probability of $\mathbf{x}_t$ being obtained from a sample of class $y$, is used to guide a conditional generation of class $y$ following:
$$d\mathbf{x}=[-\mathbf{f}(\mathbf{x}, t') + g(t')^2 (s_\theta(\mathbf{x}, t') + \alpha \nabla_{\mathbf{x}} \log p_{\theta}(y|\mathbf{x},t'))]dt + g(t') d\mathbf{w} \quad \text{where } t'=T-t$$
where $\alpha$ in theory should be 1, but, due to overspreading of the distribution, researchers often set it to larger values. This, however, often causes the collapse of the generation to a single or few modes, hurting the samples' diversity.

\section{Particle Guidance} \label{sec:method}


Our goal is to define a sampling process that promotes the diversity of a finite number of samples while retaining the advantages and flexibility that characterize diffusion models.
Let $p(\rvx)$ be some probability distribution of interest and $\nabla_{\rvx} \log p_t(\rvx)$ be the score that we have learned to reverse the diffusion process $d\rvx = f(\rvx,t) dt + g(t) d\rvw$. Similarly to how classifier guidance is applied, we modify the reverse diffusion process by adding the gradient of a potential. However, we are now sampling together a whole set of particles $\rvx_1$, ..., $\rvx_n$, and the potential $\log \Phi_t$ is not only a function of the current point but a permutation invariant function of the whole set:
\begin{align} \label{eq:guidance_general}
    d\rvx_i = \bigg[-f(\rvx_i,t') + g^2(t') \bigg(\nabla_{\rvx_i} \log p_{t'} (\rvx_i) + \nabla_{\rvx_i} \log  \Phi_{t'}(\rvx_1, ..., \rvx_n) \bigg) \bigg] dt + g(t') d\rvw.
\end{align}
where the points are initially sampled I.I.D. from a prior distribution $p_T$. We call this idea \textit{particle guidance} (PG). This framework allows one to impose different properties, such as diversity, on the set of particles being sampled without the need to retrain a new score model operating directly on the space of sets.

\begin{rebuttal}
    
We will present and study two different instantiations of this framework:
\begin{enumerate}
    \item \textbf{Fixed Potential PG} where the time-evolving joint potential is handcrafted, leading to very efficient sampling of diverse sets without the need for any additional training. We present this instantiation in Section \ref{sec:fixed_pot} and show its effectiveness on critical real-world applications of diffusion models in Section \ref{sec:experiments}.
    \item \textbf{Learned Potential PG} where we learn the time-evolving joint potential to provably optimal joint distributions. Further, this enables direct control of important properties such as the preservation of marginal distributions. We present this instantiation in Section \ref{sec:learning_pot}. 
\end{enumerate}
\end{rebuttal}

\section{Connections with Existing Methods} \label{sec:connections}


As discussed in the introduction, other fields have developed methods to improve the tradeoff between sampling cost and coverage of the distribution of interest. 
In this section, we will briefly introduce four methods~(coupled replicas, metadynamics, SVGD and electrostatics) and draw connections with \textit{particle guidance}. 

\subsection{Coupled Replicas and Metadynamics} \label{sec:replica}

In many domains linked to biochemistry and material science, researchers study the properties of the physical systems by collecting several samples from their Boltzmann distributions using molecular dynamics or other enhanced sampling methods. 
Motivated by the significant cost that sampling each individual structure requires, researchers have developed a range of techniques to go beyond I.I.D. sampling and improve sample efficiency. The most popular of these techniques are parallel sampling with coupled replicas and sequential sampling with metadynamics. 

As the name suggests, replica methods involve directly taking $n$ samples of a system with the different sampling processes, replicas, occurring in parallel. 
In particular, coupled replica methods \citep{hummer2015bayesian, pasarkar2023vendi} create a dependency between the replicas by adding, like \textit{particle guidance}, an extra potential $\Phi$ to the energy function to enforce diversity or better match experimental observables. This results in energy-based sampling procedures that target:
\begin{align*} 
    \Tilde{p}(\rvx_1, \dots, \rvx_n) = \Phi(\rvx_1, \dots, \rvx_n) \prod_{i=1}^n p(\rvx_i).
\end{align*}
Metadynamics \citep{laio2002escaping, barducci2008well} was also developed to more efficiently sample the Boltzmann distribution of a given system. Unlike replica methods and our approach, metadynamics is a sequential sampling technique where new samples are taken based on previously taken ones to ensure diversity, typically across certain collective variables of interest $s(\rvx)$. In its original formulation, the Hamiltonian at the $k^\text{th}$ sample is augmented with a potential as:
$$\Tilde{H_k} = H - \omega \sum_{j<k} \exp \bigg( - \frac{\Vert s(\rvx)-s(\rvx^0_j) \Vert ^2}{2 \sigma^2}\bigg) $$
where $H$ is the original Hamiltonian, $\rvx^0_j$ are the previously sampled elements and $\omega$ and $\sigma$ parameters set a priori. Once we take the gradient and perform Langevin dynamics to sample, we obtain dynamics that, with the exception of the fixed Hamiltonian, resemble those of \textit{particle guidance} in Eq. \ref{eq:guidance_kernel} where 
$$\nabla_{\rvx_i} \log \Phi_t(\rvx_1, \cdots, \rvx_n) \leftarrow \nabla_{\rvx_i} \omega \sum_{j<i} \exp \bigg( - \frac{\Vert s(\rvx_i)-s(\rvx^0_j) \Vert ^2}{2 \sigma^2}\bigg) .$$
Although they differ in their parallel or sequential approach, both coupled replicas and metadynamics can be broadly classified as energy-based generative models. As seen here, energy-based models offer a simple way of controlling the joint distribution one converges to by simply adding a potential to the energy function. On the other hand, however, the methods typically employ an MCMC sampling procedure, which lacks the critical \textit{finite-time sampling} property of diffusion models and significantly struggles to cover complex probability distributions such as those of larger molecules and biomolecular complexes. Additionally, the MCMC typically necessitates a substantial number of steps, generally proportional to a polynomial of the data dimension~\citep{Chewi2020OptimalDD}. With particle guidance, we instead aim to achieve both properties (controllable diversity and finite time sampling) at the same time. We can simulate the associated SDE/ODE with a total number of steps that is independent of the data dimension.

\subsection{SVGD} \label{sec:svgd}

Stein Variational Gradient Descent (SVGD) \citep{liu2016stein} is a well-established method in the variational inference community to iteratively transport a set of particles to match a target distribution. Given a set of initial particles $\{ \rvx^0_1 \dots \rvx^0_n\}$, it updates them at every iteration as:
\begin{align} \label{eq:svgd}
    \rvx_i^{\ell - 1} \leftarrow \rvx_i^\ell + \epsilon_\ell \psi(\rvx_i^\ell) \quad \text{where} \quad \psi(\rvx) =  \frac{1}{n-1} \sum_{j=1}^{n} [ k(\rvx_j^\ell, \rvx) \nabla_{\rvx_j^\ell} \log p(\rvx_j^\ell) + \nabla_{\rvx_j^\ell} k(\rvx_j^\ell,\rvx)]
\end{align}
where $k$ is some (similarity) kernel and $\epsilon_\ell$ the step size. Although SVGD was developed with the intent of sampling a set of particles that approximate some distribution $p$ without the direct goal of obtaining diverse samples, SVGD and our method have a close relation. 

This relation between our method and SVGD can be best illustrated under specific choices for drift and potential under which
the probability flow ODE discretization of \textit{particle guidance} can be approximated as (derivation in Appendix \ref{app:svgd_derivation}):
\small
\begin{align} \label{eq:ps_as_svgd}
    \rvx_i^{t + \Delta t} \approx \rvx_i^t + \epsilon_t (\rvx_i) \psi_t(\rvx_i^t) \quad  \text{\normalsize where} \quad \psi(\rvx) =  \frac{1}{n-1} \sum_{j=1}^{n} [ k_t(\rvx_j^t, \rvx) \nabla_{\rvx} \log p_t(\rvx) + \nabla_{\rvx_j^t} k_t(\rvx_j^t,\rvx)]
\end{align}
\normalsize
Comparing this with Eq. \ref{eq:svgd}, we can see a clear relation in the form of the two methods, with some key distinctions. Apart from the different constants, the two methods use different terms for the total score component. Interestingly both methods use smoothed-out scores, however, on the one hand, particle guidance uses the \textit{diffused} score at the specific particle $\rvx_i$, $\nabla_{\rvx_i} \log p_t(\rvx_i)$, while on the other, SVGD smoothes it out by taking a weighted average of the score of nearby particles weighted by the similarity kernel $(\sum_j k(\rvx_i, \rvx_j) \nabla_{\rvx_j} \log p(\rvx_j))/(\sum_j k(\rvx_i, \rvx_j))$. 

The reliance of SVGD on other particles for the ``smoothing of the score'', however, causes two related problems, firstly, it does not have the \textit{finite-time sampling} guarantee that the time evolution of diffusion models provides and, secondly, it suffers from the collapse to few local modes near the initialization and cannot discover isolated modes in data distribution~\citep {Wenliang2020BlindnessOS}. This challenge has been theoretically \citep{zhuo2018message} and empirically \citep{zhang2020stochastic} studied with several works proposing practical solutions. In particular, relevant works use an annealing schedule to enhance exploration \citep{d2021annealed} or use score matching to obtain a noise-conditioned kernel for SVGD \citep{chang2020kernel}. Additionally, we empirically observe that the score smoothing in SVGD results in blurry samples in image generation.

\subsection{Electrostatics} \label{sec:electrostatic}

Recent works~\citep{xu2022poisson, Xu2023PFGMUT} have shown promise in devising novel generative models inspired by the evolution of point charges in high-dimensional electric fields defined by the data distribution. It becomes natural therefore to ask whether particle guidance could be seen as describing the evolution of point charges when these are put in the same electric field such that they are not only attracted by the data distribution but also repel one another. One can show that this evolution can indeed be seen as the combination of Poisson Flow Generative Models with particle guidance, where the similarity kernel is the extension of Green's function in $N{+}1$-dimensional space, \ie $k(x,y)\propto 1/{||x-y||^{N-1}}$. We defer more details to Appendix~\ref{app:pfgm}.

\begin{rebuttal}
    
\section{Fixed Potential Particle Guidance} \label{sec:fixed_pot}

In this section, we will present and study a simple, yet effective, instantiation of particle guidance based on the definition of the time-evolving potential as a combination of predefined kernels. As we will see in the experiments in Section \ref{sec:experiments}, this leads to significant sample efficiency improvements with no additional training required and little inference overhead.
\end{rebuttal}

To promote diversity and sample efficiency, in our experiments, we choose the potential $\log \Phi_t$ to be the negative of the sum of a pairwise similarity kernel $k$ between each pair of particles $\log \Phi_t(\rvx_1, ... \rvx_n) = -  \frac{\alpha_t}{2} \sum_{i,j} k_t(\rvx_i, \rvx_j)$ obtaining:
\begin{align} \label{eq:guidance_kernel}
    d\rvx_i = \bigg[-f(\rvx_i,t') + g^2(t') \bigg(\nabla_{\rvx_i} \log p_{t'} (\rvx_i) - \alpha_{t'} \nabla_{\rvx_i} \sum_{j=1}^n k_{t'}(\rvx_i, \rvx_j) \bigg) \bigg] dt + g(t') d\rvw
\end{align}
Intuitively, \rebut{the kernel term} will push our different samples to be dissimilar from one another while at the same time  \rebut{the score term} will try to match our distribution. Critically, this does not come at a significant additional runtime as, in most domains, the cost of running the pairwise similarity kernels is very small compared to the execution of the large score network architecture. Moreover, it allows the use of domain-specific similarity kernels and does not require training any additional classifier or score model. We can also view the particle guidance~\Eqref{eq:guidance_kernel} as a sum of reverse-time SDE and a guidance term. Thus, to attain a more expedited generation speed, the reverse-time SDE can also be substituted with the probability flow ODE~\citep{song2021score}.

\subsection{Theoretical analysis} \label{sec:joint_distr}

To understand the effect that \textit{particle guidance} has beyond simple intuition, we study the joint distribution of sets of particles generated by the proposed reverse diffusion. However, unlike methods related to energy-based models (see coupled replicas, metadynamics, SVGD in Sec. \ref{sec:connections}) analyzing the effect of the addition of a time-evolving potential $\log \Phi_t$ in the reverse diffusion is non-trivial. 

While the score component in \textit{particle guidance} is the score of the sequence of probability distributions $\Tilde{p}_t(\rvx_1, \dots, \rvx_n) = \Phi_t(\rvx_1, \dots, \rvx_n) \prod_{i=1}^n p_t(\rvx_i)$, we are not necessarily sampling exactly $\Tilde{p}_0$ because, for an arbitrary time-evolving potential $\Phi_t$, this sequence of marginals does not correspond to a diffusion process. One strategy used by other works in similar situations \citep{du2023reduce} relies on taking, after every step or at the end, a number of Langevin steps to reequilibrate and move the distribution back towards $\Tilde{p}_t$. This, however, increases significantly the runtime cost (every Langevin step requires score evaluation) and is technically correct only in the limit of infinite steps leaving uncertainty in the real likelihood of our samples. Instead, in Theorem \ref{th:joint_distribution}, we use the Feynman-Kac theorem to derive a formula for the exact reweighting that particle guidance has on a distribution (derivation in Appendix \ref{app:joint}):

\begin{theorem} \label{th:joint_distribution}
Under integrability assumptions, sampling $\rvx_1^T, ..., \rvx_n^T$ from $p_T$ and following the particle guidance reverse diffusion process, we obtain samples from the following joint probability distribution at time $t=0$:
\begin{align*}
  \textstyle  \hat{p}_0(\rvx_1, \dots, \rvx_n) = \mathbb{E}[Z \exp[-\int_0^T g(t)^2 \{ \langle \nabla \log \Phi_{t}(\mathbf{X}_{t}), \nabla \log \hat{p}_{t}(\mathbf{X}_{t}) \rangle +  \Delta \log \Phi_{t}(\mathbf{X}_{t})\} \rmd t  ]] ,
\end{align*}
with $Z$ (explicit in the appendix) such that 
\begin{align*}
   \textstyle \prod_{i=1}^N p_0(\rvx_i) = \mathbb{E}[Z] ,
\end{align*}
$(\mathbf{X}_t)_{t \in [0,T]}$ is a stochastic process driven by the equation  
\begin{align*}
\textstyle    \mathrm{d} \mathbf{X}_t = \{ f(\mathbf{X}_t, t) -g(t)^2 \nabla \log p_{t}(\mathbf{X}_t) \} \mathrm{d}t +  g(t) \mathrm{d}\mathbf{w} , \qquad \mathbf{X}_0 = \{\rvx_i\}_{i=1}^N . 
\end{align*}
\end{theorem}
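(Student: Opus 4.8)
The plan is to (i) pass to the concatenated variable $\rvx=(\rvx_1,\dots,\rvx_n)$, (ii) compare the Fokker--Planck equation of the particle-guidance law with that of the potential-free reverse diffusion, (iii) turn the resulting equation for the density ratio into a Feynman--Kac representation, and (iv) use Girsanov's theorem to re-express it in terms of the process $(\mathbf{X}_t)$ of the statement, which is where the random variable $Z$ comes from. The key preliminary observation is that with the potential switched off the $n$ coordinates evolve independently, so the potential-free reverse diffusion has time-$t$ density $\bar p_t(\rvx)=\prod_{i=1}^n p_t(\rvx_i)$ with $\nabla\log\bar p_t(\rvx)=(\nabla\log p_t(\rvx_i))_{i=1}^n$; this is the natural reference against which to compare the particle-guidance density $\hat p_t$.

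\textbf{Step 1: a transport/potential equation for the density ratio.} I would write the Fokker--Planck PDE for $\hat p_t$ (the marginal law of \Eqref{eq:guidance_general}) and the one for $\bar p_t$, set $h_t:=\hat p_t/\bar p_t$, and derive the equation for $h_t$ by substituting $\hat p_t=h_t\bar p_t$. The drift contribution $g^2\nabla\log\bar p_t$ shared by both equations cancels out of the transport term, and the divergence of the added drift $g^2\nabla\log\Phi_t$ produces a zeroth-order term $-g^2 h_t\Delta\log\Phi_t$; what remains is an equation whose differential part is $\tfrac12 g^2\Delta h_t+f\!\cdot\!\nabla h_t-g^2\langle\nabla\log\Phi_t,\nabla h_t\rangle$ and whose zeroth-order part is $-g^2 h_t\big(\Delta\log\Phi_t+\langle\nabla\log\Phi_t,\nabla\log\bar p_t\rangle\big)$. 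The one genuinely non-routine move is to recombine these using $\nabla\log\hat p_t=\nabla\log\bar p_t+\nabla\log h_t$, which absorbs the $-g^2\langle\nabla\log\Phi_t,\nabla h_t\rangle$ transport term into the zeroth-order part, turning it into $-g^2 h_t\big(\Delta\log\Phi_t+\langle\nabla\log\Phi_t,\nabla\log\hat p_t\rangle\big)$. The upshot is that $h_t$ solves the backward (in data-time) Kolmogorov equation $\partial_t h_t+f\!\cdot\!\nabla h_t+\tfrac12 g^2\Delta h_t-g^2\big(\Delta\log\Phi_t+\langle\nabla\log\Phi_t,\nabla\log\hat p_t\rangle\big)h_t=0$ with terminal value $h_T\equiv 1$, since both processes start from the prior $p_T$. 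Because $\hat p_t$ is merely the marginal law of the SDE being simulated, this is a \emph{linear} equation in $h_t$ whose zeroth-order coefficient is a known (if only implicitly defined) function.

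\textbf{Step 2: Feynman--Kac, then Girsanov.} The Feynman--Kac formula applied to Step~1 gives $\hat p_0(\rvx)/\bar p_0(\rvx)=\mathbb{E}\big[\exp\!\big(-\int_0^T g(t)^2\{\Delta\log\Phi_t(\mathbf{Y}_t)+\langle\nabla\log\Phi_t(\mathbf{Y}_t),\nabla\log\hat p_t(\mathbf{Y}_t)\rangle\}\,\rmd t\big)\big]$, where $(\mathbf{Y}_t)$ is the forward diffusion $\rmd\mathbf{Y}_t=f(\mathbf{Y}_t,t)\,\rmd t+g(t)\,\rmd\mathbf{w}$ started at $\mathbf{Y}_0=\{\rvx_i\}_{i=1}^n$. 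It remains to swap $\mathbf{Y}$ (drift $f$) for the process $\mathbf{X}$ of the statement (drift $f-g^2\nabla\log p_t$) and to absorb the deterministic prefactor $\bar p_0(\rvx)=\prod_i p_0(\rvx_i)$. Girsanov's theorem does exactly this: letting $\mathcal{E}_T$ be the Radon--Nikodym density of the law of $\mathbf{Y}$ with respect to the law of $\mathbf{X}$ and setting $Z:=\bar p_0(\mathbf{X}_0)\mathcal{E}_T$, one has $\mathbb{E}[Z]=\bar p_0(\rvx)\mathbb{E}[\mathcal{E}_T]=\prod_i p_0(\rvx_i)$ (the claimed normalization), and feeding the change of measure into the Feynman--Kac expectation yields precisely $\hat p_0(\rvx)=\mathbb{E}\big[Z\exp\!\big(-\int_0^T g^2\{\langle\nabla\log\Phi_t,\nabla\log\hat p_t\rangle+\Delta\log\Phi_t\}(\mathbf{X}_t)\,\rmd t\big)\big]$. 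Finally, Itô's formula applied to $\log\bar p_t(\mathbf{X}_t)$ together with the \emph{forward} Fokker--Planck equation for $\bar p_t$ collapses the stochastic integral hidden in $Z$ to the closed form $Z=\prod_i p_T\big((\mathbf{X}_T)_i\big)\exp\!\big(\int_0^T\{\nabla\!\cdot f(\mathbf{X}_t,t)-g(t)^2\Delta\log p_t(\mathbf{X}_t)\}\,\rmd t\big)$, the explicit expression of the appendix. As a sanity check, $\Phi_t\equiv 1$ reduces the formula to $\hat p_0=\mathbb{E}[Z]=\prod_i p_0(\rvx_i)$, i.e.\ the ordinary reverse diffusion recovers the product of marginals.

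\textbf{Main obstacle.} The crux is Step~1: one must keep three reparametrizations straight --- SDE-time versus data-time, forward versus reverse drift, and the two distinct Fokker--Planck equations (forward for $\bar p_t$, particle-guidance for $\hat p_t$) --- and realize that the cross term must be written against $\nabla\log\hat p_t$, not $\nabla\log\bar p_t$; only with that substitution do all signs and the $\tfrac12 g^2|\nabla\log\bar p_t|^2$ contributions cancel to leave the clean equation. The accompanying conceptual subtlety is the self-referential appearance of $\nabla\log\hat p_t$: the argument is legitimate only if one treats $\hat p_t$ as a fixed, sufficiently regular, strictly positive function --- the honest marginal of the SDE being simulated --- which is what the ``integrability assumptions'' are there to license, together with the usual Novikov-type condition making $\mathcal{E}$ a true martingale, non-explosion of $\mathbf{X}$ and $\mathbf{Y}$, and integrability of the Feynman--Kac weight. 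Once Step~1 is settled, Step~2 is the standard Feynman--Kac $\leftrightarrow$ Girsanov dictionary.
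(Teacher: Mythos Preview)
Your argument is correct and lands on the same formula, but by a genuinely different route from the paper. The paper never forms the ratio $h_t=\hat p_t/\bar p_t$. Instead it observes that the densities $u_t:=p_t$ and $\hat u_t:=\hat p_t$ themselves, written in data-time, each satisfy a parabolic equation whose first-order part carries the \emph{same} drift $b=f-g^2\nabla\log p_t$; the divergence of the Fokker--Planck drift becomes the zeroth-order potential $V$, and the guided equation acquires the extra potential $g^2\{\langle\nabla\log\Phi_t,\nabla\log\hat p_t\rangle+\Delta\log\Phi_t\}$. Feynman--Kac is applied \emph{twice}, once to $u_t$ and once to $\hat u_t$, both along the process $\mathbf X$ with drift $b$ started at $x$, giving $u_0(x)=\mathbb E[Z]$ and $\hat u_0(x)=\mathbb E[\hat Z]$ with $\hat Z/Z$ equal to the displayed exponential. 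No Girsanov step appears: $Z$ falls out directly as $p_T(\mathbf X_T)\exp\!\big(-\int_0^T\operatorname{div}(-f+g^2\nabla\log p_t)(\mathbf X_t)\,\rmd t\big)$, which matches your closed form after expanding the divergence. Your path --- ratio equation with the bare forward drift $f$, Feynman--Kac along the forward process $\mathbf Y$, then Girsanov from $\mathbf Y$ to $\mathbf X$ --- costs one extra step but buys a clean interpretation: $Z$ is literally $\bar p_0(x)$ times the Radon--Nikodym density $d\mathbb P^{\mathbf Y}/d\mathbb P^{\mathbf X}$, and your It\^o computation on $\log\bar p_t(\mathbf X_t)$ is the standard collapse of that density to a pathwise functional. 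The paper's approach reaches $\mathbf X$ in one move but leaves this change-of-measure reading of $Z$ implicit. Your identification of the key manipulation --- rewriting the first-order term $g^2\langle\nabla\log\Phi_t,\nabla h_t\rangle$ against $\nabla\log\hat p_t$ rather than $\nabla\log\bar p_t$ --- is exactly what the paper does as well when isolating the extra potential $\hat V-V$.
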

Hence the density $\hat{p}_0$ can be understood as a reweighting of the random variable $Z$ that represents I.I.D. sampling.



\paragraph{Riemannian Manifolds. } Note that our theoretical insights can also be extended to the manifold framework. This is a direct consequence of the fact that the Feynman-Kac theorem can be extended to the manifold setting, see for instance \citet{benton2022denoising}.


\subsection{Preserving Invariances} \label{sec:invariances}

The objects that we learn to sample from with generative models often present invariances such as the permutation of the atoms in a molecule or the roto-translation of a conformer. To simplify the learning process and ensure these are respected, it is common practice to build such invariances in the model architecture. In the case of diffusion models, to obtain a distribution that is invariant to the action of some group $G$ such as that of rotations or permutations, it suffices to have an invariant prior and build a score model that is $G$-equivariant \citep{xu2021geodiff}. Similarly, in our case, we are interested in distributions that are invariant to the action of $G$ on any of the set elements (see Section \ref{sec:conf_gen}), we show that a sufficient condition for this invariance to be maintained is that the time-evolving potential $\Phi_t$ is itself invariant to $G$-transformations of any of its inputs (see Proposition \ref{th:invariance} in Appendix \ref{app:invariance_proof}).

\section{Experiments} \label{sec:experiments}

Fixed potential particle guidance can be implemented on top of any existing trained diffusion model with the only requirement of specifying the potential/kernel to be used in the domain. 
We present three sets of empirical results in three very diverse domains. First, in Appendix \ref{sec:synthetic_exp}, we work with a synthetic experiment formed by a two-dimensional Gaussian mixture model, where we can visually highlight some properties of the method. \rebut{In this section instead, we consider text-to-image and molecular conformer generation, two important tasks where diffusion models have established new state-of-the-art performances, and show how, in each of these tasks, particle guidance can provide improvements in sample efficiency pushing the diversity-quality Pareto frontier. }
The code is available at \url{https://github.com/gcorso/particle-guidance}.

\subsection{Text-to-image generation} \label{sec:image_exp}

In practice, the most prevalent text-to-image diffusion models, such as Stable Diffusion~\citep{Rombach2021HighResolutionIS} or Midjourney, generally constrain the output budget to four images per given prompt. Ideally, this set of four images should yield a diverse batch of samples for user selection. However, the currently predominant method of classifier-free guidance~\citep{Ho2022ClassifierFreeDG} tends to push the mini-batch samples towards a typical mode to enhance fidelity, at the expense of diversity. 

To mitigate this, we apply the proposed particle guidance to text-to-image generation. We use Stable Diffusion v1.5~\footnote{\url{https://huggingface.co/runwayml/stable-diffusion-v1-5}}, pre-trained on LAION-5B~\citep{Schuhmann2022LAION5BAO} with a resolution of $512 \times 512$, as our testbed. We apply an Euler solver with 30 steps to solve for the ODE version of particle guidance. Following~\citep{xu2023restart}, we use the validation set in COCO 2014~\citep{Lin2014MicrosoftCC} for evaluation, and the CLIP~\citep{Hessel2021CLIPScoreAR}/Aesthetic score~\citep{LAION-AI} (higher is better) to assess the text-image alignment/visual quality, respectively. To evaluate the diversity within each batch of generated images corresponding to a given prompt, we introduce the \textit{in-batch similarity score}. This metric represents the average pairwise cosine similarity of features within an image batch, utilizing the pre-trained DINO~\citep{caron2021emerging} as the feature extractor. Contrasting the FID score, the in-batch similarity score specifically measures the diversity of a batch of images generated for a given prompt. \rebut{We use a classifier-free guidance scale from 6 to 10 to visualize the trade-off curve between the diversity and CLIP/Aesthetic score, in line with prior works~\citep{xu2023restart, Saharia2022PhotorealisticTD}.} For particle guidance, we implement the RBF kernel on the down-sampled pixel space~(the latent space of the VAE encoder-) in Stable Diffusion, as well as the feature space of DINO. Please refer to Appendix~\ref{app:img-setup} for more experimental details.

\begin{figure*}[htbp]
\vspace{-8pt}
    \centering
    \subfigure[In-batch similarity versus CLIP score]{\includegraphics[width=0.45\textwidth,]{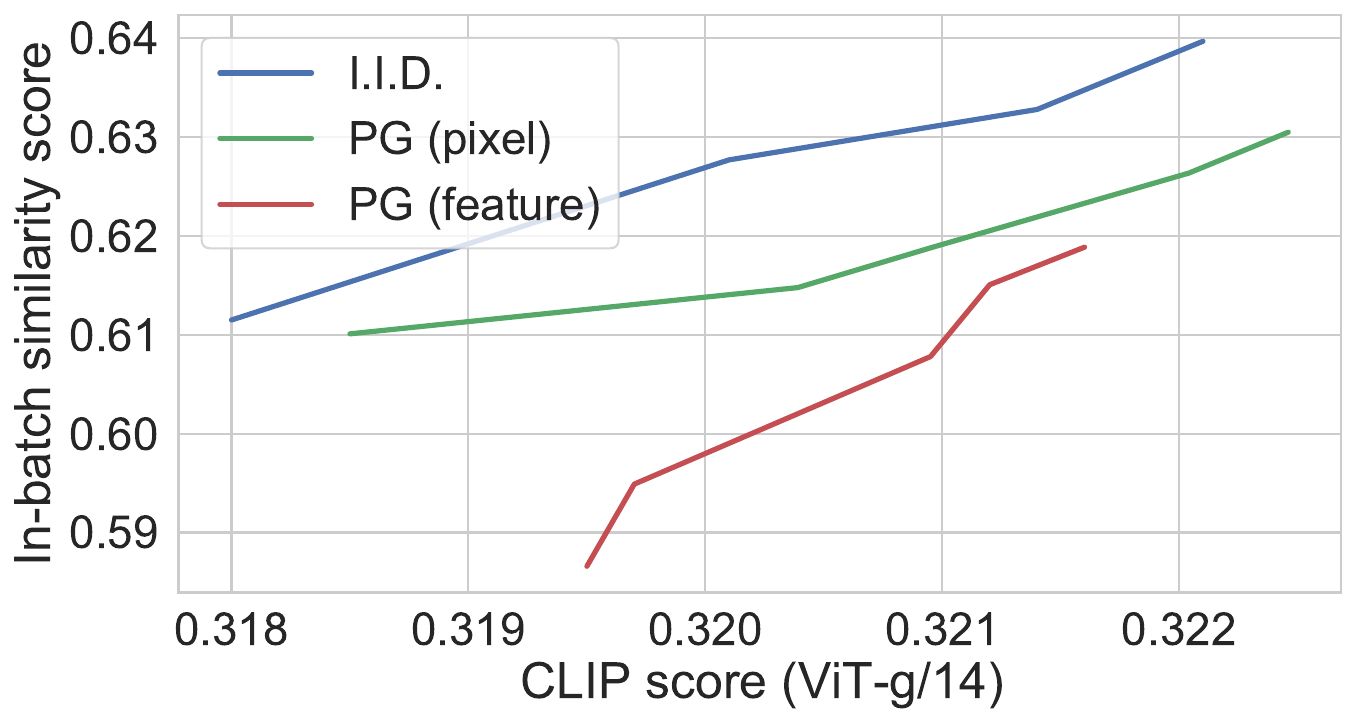}  \label{fig:sim_clip}}\hfill
   \subfigure[In-batch similarity versus Aesthetic score]{\includegraphics[width=0.45\textwidth]{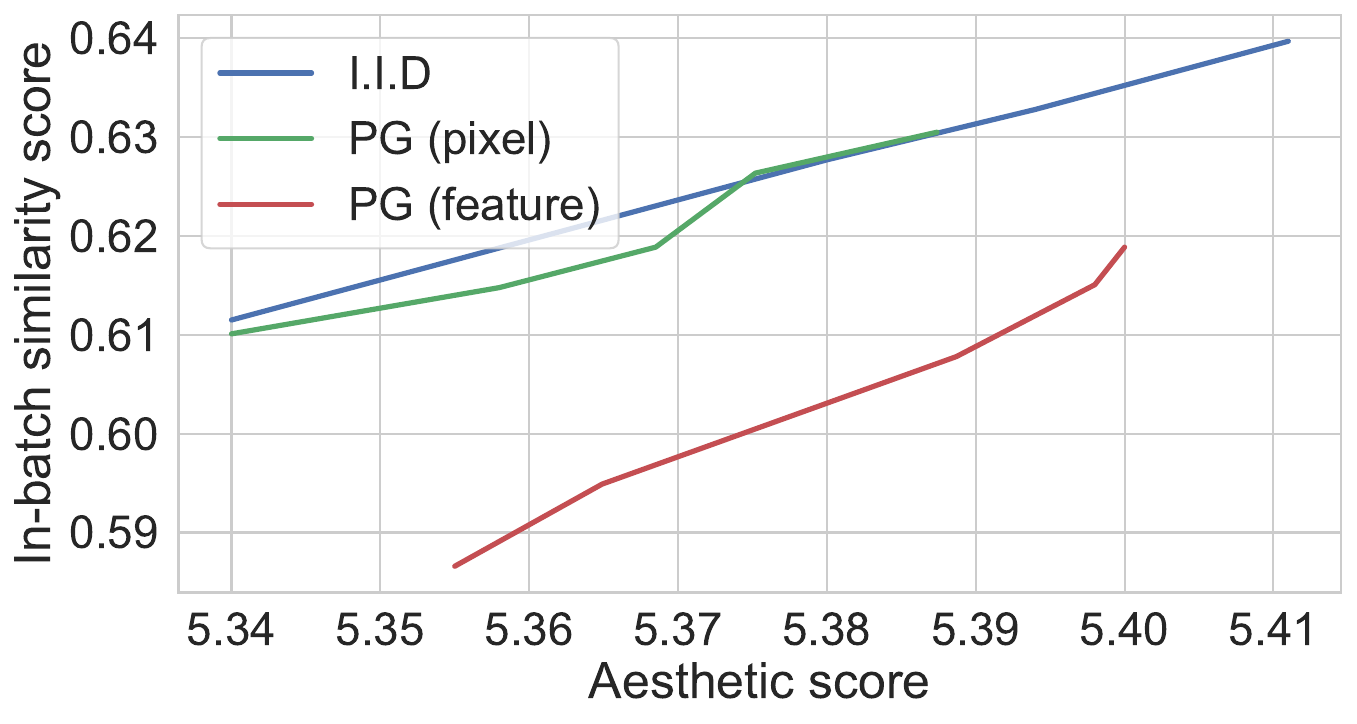}    \label{fig:sim_aes}}
   \vspace{-6pt}
   \label{fig:quant}
    \caption{In-batch similiarity score versus \textbf{(a)} CLIP ViT-g/14 score and \textbf{(b)} Aesthetic score for text-to-image generation at $512 \times 512$ resolution,  using Stable Diffusion v1.5 with a varying guidance scale from $6$ to $10$.}
    \vspace{-4pt}
\end{figure*}

\begin{figure*}[htbp]
\vspace{-4pt}
\subfigure[I.I.D.]{\includegraphics[width=0.18\textwidth]{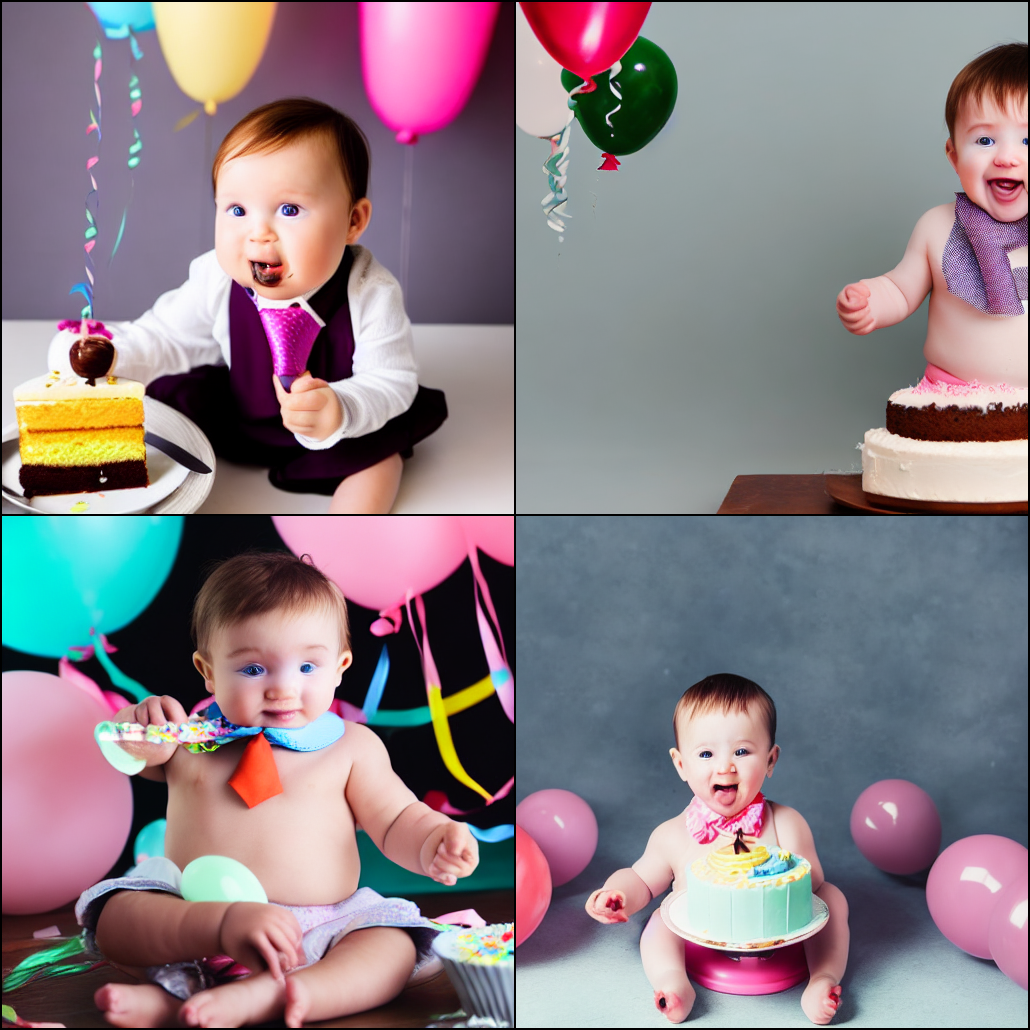}}
\subfigure[PG]{\includegraphics[width=0.18\textwidth]{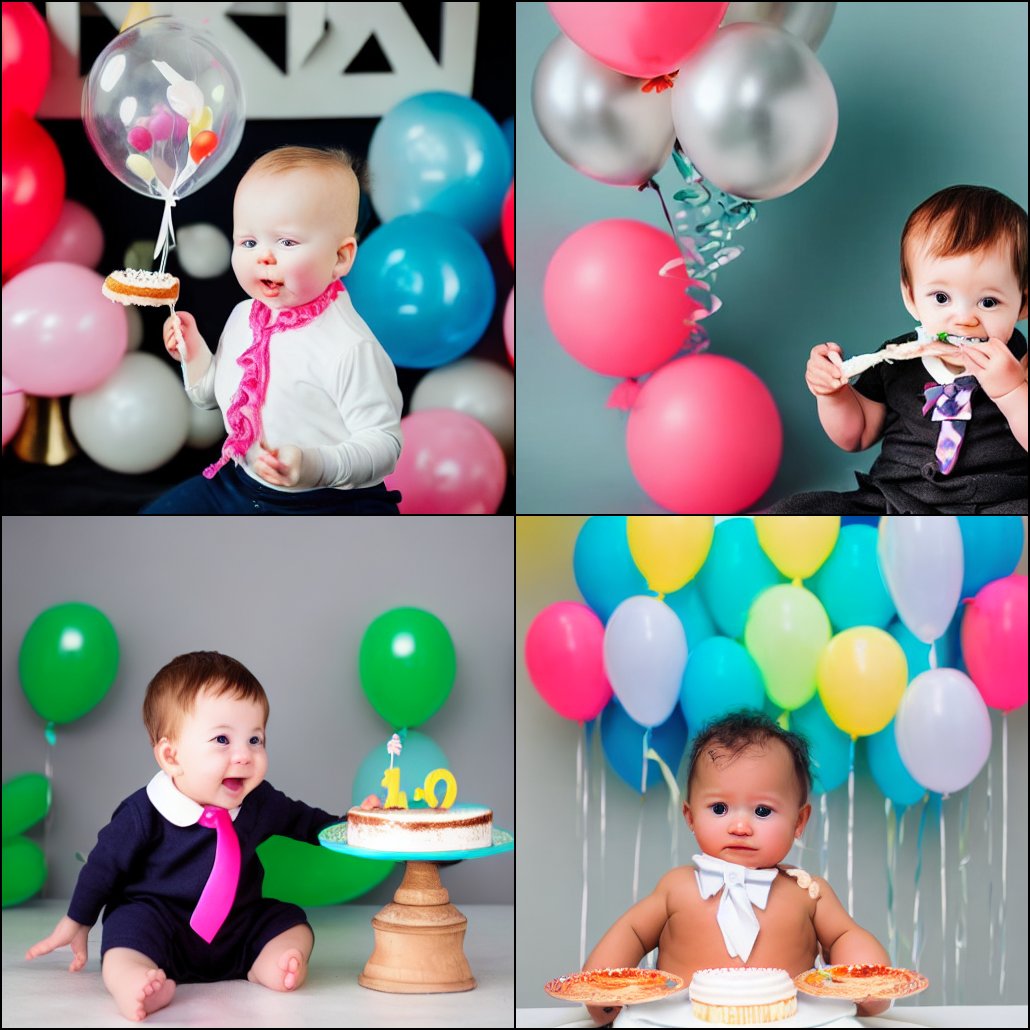}}\hfill
\subfigure[Training data]{\includegraphics[width=0.17\textwidth]{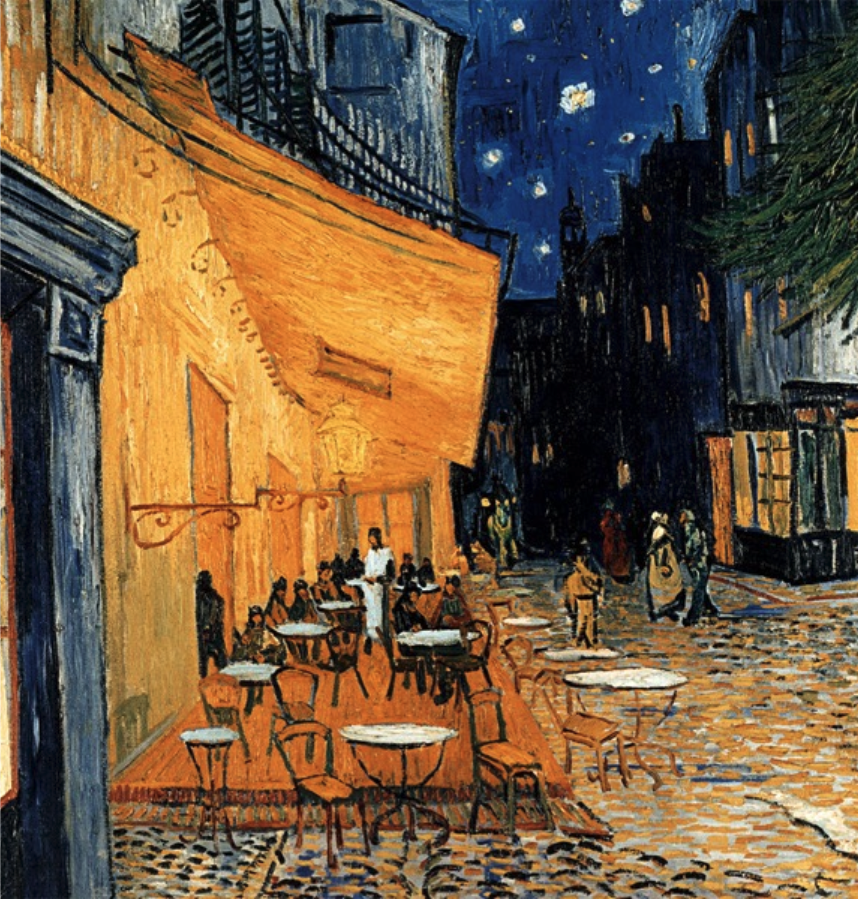}\label{fig:vis-main-c}}
\subfigure[I.I.D.]{\includegraphics[width=0.18\textwidth]{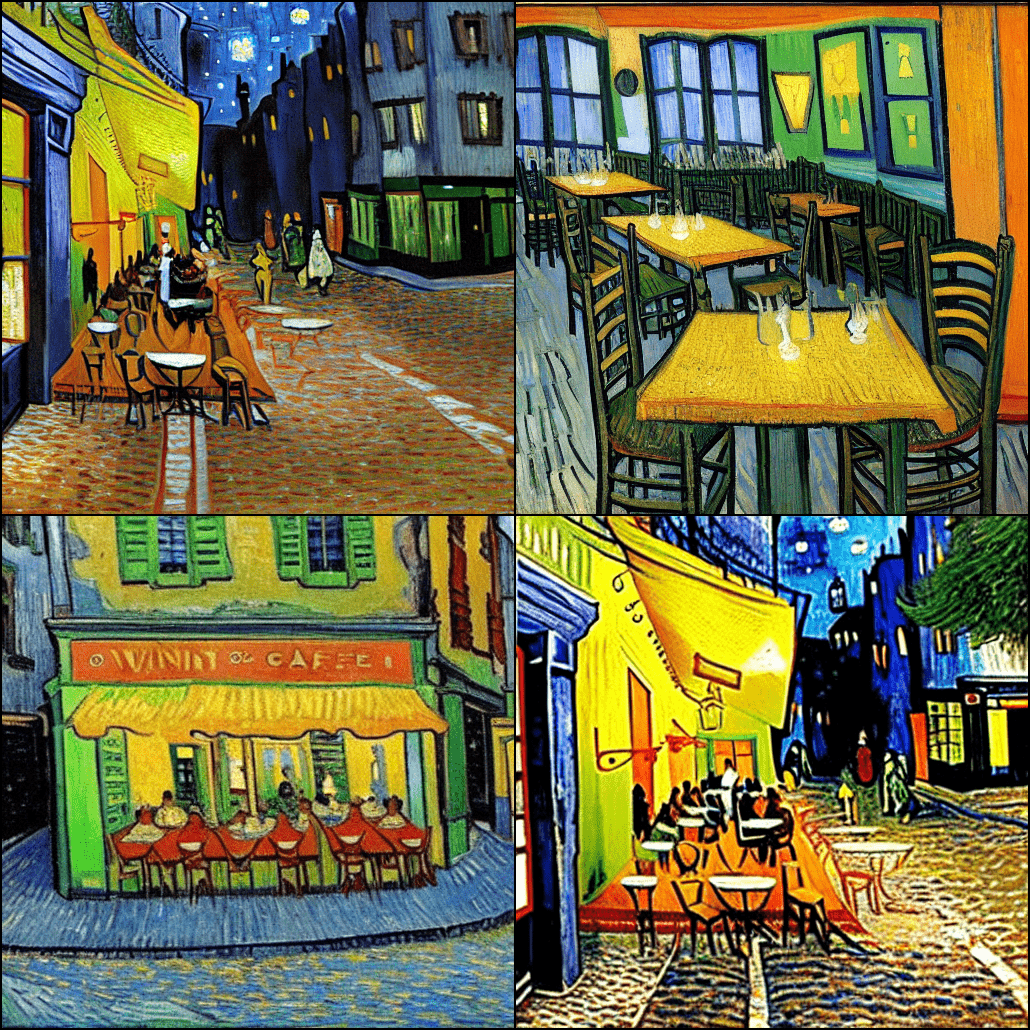}\label{fig:vis-main-c}}
\subfigure[PG]{\includegraphics[width=0.18\textwidth]{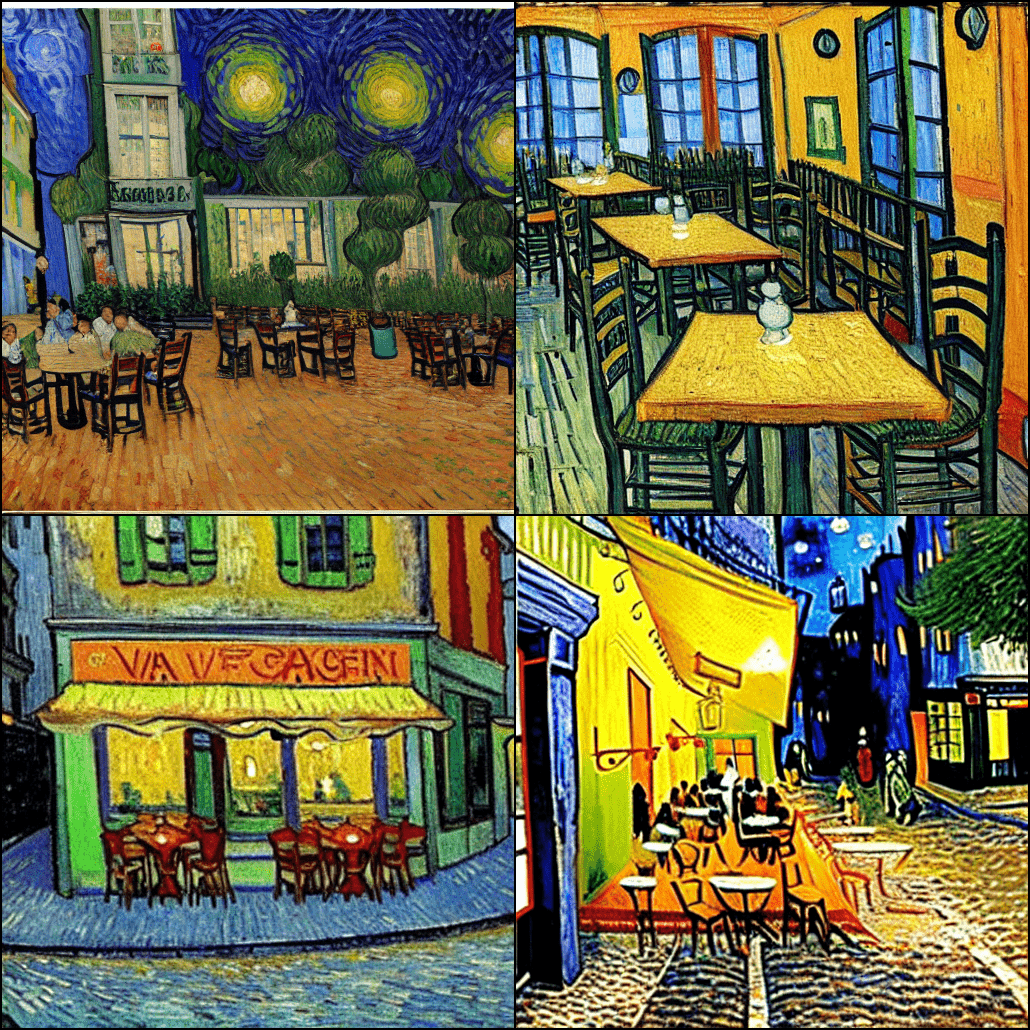}\label{fig:vis-main-d}}
\vspace{-5pt}
\caption{Text prompt: (a,b) ``A baby eating a cake with a tie around his neck with balloons in the background" (COCO); (c,d,e) ``VAN GOGH CAFE TERASSE copy.jpg", with original training data in (c).}
\label{fig:vis-main}
\vspace{-4pt}
\end{figure*}

As shown in \Figref{fig:sim_clip} and \Figref{fig:sim_aes}, particle guidance~(PG) consistently obtains a better~(lower) in-batch similarity score in most cases, given the same CLIP/Aesthetic score, with a classifier-free guidance scale ranging from 6 to 10. Conversely, we observe that while in-batch similarity score of I.I.D. sampling improves with the reduced classifier-free guidance scale, particle guidance continues to surpass I.I.D. sampling in terms of CLIP/Aesthetic score given the same in-batch similarity. When the \rebut{potential is the similarity kernel} applied in the feature space, particle guidance notably attains a lower in-batch similarity score compared to I.I.D. sampling or to the approach in the original downsampled pixel space. This suggests that utilizing a semantically meaningful feature space is more appropriate for determining distances between images.

In \Figref{fig:vis-main}, we further visualize generated batches of four images per prompt by I.I.D. sampling and particle guidance (feature) with the same random seeds, when fixing the classifier-free guidance scale to $9$. We can see that particle guidance improves the visual diversity in the generated batch. Interestingly, particle guidance can also help to alleviate the memorization issue of Stable Diffusion~\citep{somepalli2023diffusion}. For example, given the text prompt of a painting from LAION dataset, particle guidance~(\Figref{fig:vis-main-d}) avoids the multiple replications of the training data in the I.I.D. setting~(the top-left and the bottom-right images in \Figref{fig:vis-main-c}). We provide extended samples in Appendix~\ref{app:extended}, and additionally show that SVGD~(Eq. \ref{eq:svgd}) fails to promote diversity, instead yielding a set of blurry images.

\subsection{Molecular conformer generation} \label{sec:conf_gen}

Molecular conformer generation is a key task in computational chemistry that consists of finding the set of different conformations that a molecule most likely takes in 3D space. Critically it is often important to find all or most of the low-energy conformers as each can determine a different behavior (e.g. by binding to a protein). This necessity is reflected in the metrics used by the community that look both at coverage (also called recall) and precision over the set predictions.

Over the past few years, molecular conformer generation has been extensively studied by the machine learning community, with well-established benchmarks \citep{axelrod2022geom} and several generative models designed specifically for this task \citep{ganea2021geomol, xu2021geodiff, jing2022torsional}. However, all these methods are based on training a generative model to generate single samples and then running this model several times (more than 200 on average in the standard GEOM-DRUGS dataset) to generate a large number of I.I.D. samples. 

As discussed before, however, this strategy is suboptimal to generate representative sets of samples and cover the distribution. Therefore, we take the state-of-the-art conformer generation model, \textit{torsional diffusion}, and, without retraining the model itself, we show that we can obtain significant improvements in both coverage and precision via particle guidance. 

\textit{Torsional diffusion} \citep{jing2022torsional} defines the diffusion process over the manifold defined by changes in torsion angles from some initial conformer because of the relative rigidity of the remaining degrees of freedom. Given this observation, we also define the guidance kernel on this manifold as an RBF kernel over the dihedral angle differences. 

Another important consideration when dealing with molecular conformers is given by the permutation symmetries that characterize several molecules: conformers that appear different might be very similar under permutations of the order of the atoms that do not change the bond structure. To maximize the sample efficiency and avoid generating similar conformers, we make the kernel invariant to these transformations. For this, we employ the simple strategy to take the minimum value of the original kernel under the different perturbations (formalized in Appendix \ref{app:mol_details}).

\begin{table}[t]
\caption{Quality of generated conformer ensembles for the GEOM-DRUGS test set in terms of Coverage (\%) and Average Minimum RMSD (\AA). We follow the experimental setup from \citep{ganea2021geomol}, \rebut{for experimental details and introduction of the baselines please refer to Appendix \ref{app:mol_details}}. }\label{tab:conformer_gen}
     \vspace{5pt}
 \centering
    \begin{tabular}{l|cccc|cccc} \toprule & \multicolumn{4}{c|}{Recall} & \multicolumn{4}{c}{Precision}  \\
                   & \multicolumn{2}{c}{Coverage $\uparrow$} & \multicolumn{2}{c|}{AMR $\downarrow$} & \multicolumn{2}{c}{Coverage $\uparrow$} & \multicolumn{2}{c}{AMR $\downarrow$} \\
 Method & Mean & Med & Mean & Med & Mean & Med & Mean & Med \\ \midrule
 RDKit ETKDG & 38.4 & 28.6 & 1.058 & 1.002 & 40.9 & 30.8 & 0.995 & 0.895 \\
 OMEGA & 53.4 & 54.6 & 0.841 & 0.762 & 40.5 & 33.3 & 0.946 & 0.854 \\

 GeoMol & 44.6 & 41.4 & 0.875 & 0.834 & 43.0 & 36.4 & 0.928 & 0.841 \\
 GeoDiff & 42.1 & 37.8 & 0.835 & 0.809 & 24.9 & 14.5 & 1.136 & 1.090 \\
 Torsional Diffusion & 72.7 & 80.0 & 0.582 & 0.565 & 55.2 & 56.9 & 0.778 & 0.729     \\   \midrule
 TD w/ particle guidance &  \textbf{77.0} & \textbf{82.6} & \textbf{0.543} & \textbf{0.520} & \textbf{68.9} & \textbf{78.1} & \textbf{0.656} & \textbf{0.594}     \\\bottomrule
\end{tabular}
\end{table}

Table \ref{tab:conformer_gen} shows that \rebut{by applying particle guidance to SDE-based reverse process of torsional diffusion (see Appendix \ref{app:mol_details} for details)} we are able to balance coverage and precision being able to obtain, without retraining the model, significantly improved results on both metrics with 8\% and 19\% simultaneous reductions respectively in recall and precision median AMR.

\begin{rebuttal}
    
\section{Learned Potential Particle Guidance}
\label{sec:learning_pot}
\end{rebuttal}

\rebut{While the fixed potential particle guidance seen so far is very effective in improving the diversity of samples with little overhead, it is hard to argue about the optimality of the resulting joint distribution. This is because of the complexity of the expression obtained in Theorem \ref{th:joint_distribution} and its dependence on the data distribution itself. }
Furthermore, in some domains, particularly in scientific applications, researchers need to control the distribution that they are sampling. This is necessary, for example, to apply correct importance weights or compute free energy differences. While Theorem \ref{th:joint_distribution} allows us to theoretically analyze properties of the distribution, the joint and marginal distributions remain largely intractable.

\begin{rebuttal}
    
In this section we analyse how we can sample from desired joint probability distribution by learning a taylored time-evolving potential for particle guidance. Using the maximum entropy theorem \citep{csiszar1975divergence}, we can show that the distribution satisfying a bound on the expected value of a (diversity) metric $\Phi_0$ while minimizing the KL divergence with the independent distribution is: 
\begin{align}
    \hat{p}_0(\rvx_1, ..., \rvx_n) \propto \Phi_0(\rvx_1, ..., \rvx_n)^{\beta(\alpha)} \prod_{i=1}^n p(\rvx_i)
\end{align}
where $\beta$ is a function of $\alpha$, the value of the bound on $E_{\hat{p}}[\log \Phi_0]$.

\vspace{10pt}

\subsection{Training Procedure}
\end{rebuttal}

\rebut{We now have to learn a time-evolving potential $\Phi_t$ that when used as part of the particle guidance framework generates $\hat{p}_0$ (we assume $\Phi_0$ is chosen such that $\beta(\alpha)=1$). To achieve this, we mandate that the generation process of particle guidance in Eq. \ref{eq:guidance_general} adheres to the sequence of marginals $\hat{p}_t(\rvx_1^t, ..., \rvx_n^t) = \Phi_t(\rvx_1^t, ..., \rvx_n^t) \prod_{i=1}^n p_t(\rvx_i^0)$ and learn $\Phi_t^{\theta}$ to satisfy this evolution.} Under mind assumptions, using Doob $h$-theory (derivation in Appendix \ref{app:learn_pot}), we show that we can learn the $\Phi_t^{\theta}$ by the following objective:
\begin{align} \label{eq:learned_potential_objective}
    \theta^* = \argmin_{\theta} \mathbb{E}_{\rvx_1^0, ..., \rvx_n^0 \sim {p}_0} \; \mathbb{E}_{\rvx_i^t \sim p_{t|0}(\cdot | \rvx_i^0)} [\lVert \Phi_0(\rvx_1^0, ..., \rvx_n^0) - \Phi_t^{\theta}(\rvx_1^t, ..., \rvx_n^t) \rVert^2]
\end{align}
where $p_{t|0}$ is the Gaussian perturbation kernel in diffusion models. Importantly, here the initial $\rvx_i^0$ are sampled independently from the data distribution so this training scheme can be easily executed in parallel to learning the score of $p_t$.

\begin{rebuttal}

\vspace{5pt}
\subsection{Preserving Marginal Distributions}

While the technique discussed in the previous section is optimal in the maximum entropy perspective, it does not (for arbitrary $\Phi_0$) preserve the marginal distributions of individual particles, i.e. marginalizing $\rvx_i$ over $\hat{p}$ does not recover $p$. Although not critical in many settings and not respected, for a finite number of particles, neither by the related methods in Section \ref{sec:connections} nor by the fixed potential PG, this is an important property in some applications. 

Using again the maximum entropy theorem, we can show that the distribution satisfying a bound on the expected value of a (diversity) metric $\Phi'_0$ and preserving the marginal distribution while minimizing the KL divergence with the independent distribution can be written as: 
\begin{align}
    \hat{p}_0(\rvx_1, ..., \rvx_n) \propto \Phi'_0(\rvx_1, ..., \rvx_n)^{\beta(\alpha)} \prod_{i=1}^n p(\rvx_i) \gamma_{\theta}(\rvx_i)
\end{align}
for some scalar function over individual particles $\gamma_{\theta}$. In Appendix \ref{app:marginals}, we derive a new training scheme to learn the parameters of $\gamma_{\theta}$. This relies on setting the normalization constant to an arbitrary positive value and learning values of $\theta$ that respect the marginals. Once $\gamma_{\theta}$ is learned, its parameters can be freezed and the training procedure of Eq. \ref{eq:learned_potential_objective} can be started.

\end{rebuttal}

\section{Conclusion}

In this paper, we have analyzed how one can improve the sample efficiency of generative models by moving beyond I.I.D. sampling and enforcing diversity, a critical challenge in many real applications that has been largely unexplored. Our proposed framework, particle guidance, steers the sampling process of diffusion models toward more diverse sets of samples via the definition of a time-evolving joint potential. We have studied the theoretical properties of the framework such as the joint distribution it converges to \rebut{for an arbitrary potential and how to learn potential functions that sample some given joint distribution achieving optimality and, if needed, preserving marginal distributions.} Finally, we evaluated its performance in two important applications of diffusion models text-to-image generation and molecular conformer generation, and showed how in both cases it is able to push the Pareto frontier of sample diversity vs quality. 

\rebut{We hope that particle guidance can become a valuable tool for practitioners to ensure diversity and fair representation in existing tools even beyond the general definition of diversity directly tackling known biases of generative models. Further, we hope that our methodological and theoretical contributions can spark interest in the research community for better joint-particle sampling methods.}

\section*{Acknowledgments}

We thank Xiang Cheng, Bowen Jing, Timur Garipov, Shangyuan Tong, Renato Berlinghieri, Saro Passaro, Simon Olsson, and Hannes St\"ark for their invaluable help with discussions and review of the manuscript. We also thank the anonymous reviewers for their useful feedback and suggestions. Concurrently with our work, Alex Pondaven also developed a similar idea\footnote{https://alexpondaven.github.io/projects/repulsion/}.

This work was supported by the NSF Expeditions grant (award 1918839: Collaborative Research: Understanding the World Through Code), the Machine Learning for Pharmaceutical Discovery and Synthesis (MLPDS) consortium, the Abdul Latif Jameel Clinic for Machine Learning in Health, the DTRA Discovery of Medical Countermeasures Against New and Emerging (DOMANE) threats program, the DARPA Accelerated Molecular Discovery program, the NSF AI Institute CCF-2112665, the NSF Award 2134795, the GIST-MIT Research Collaboration grant, MIT-DSTA Singapore collaboration, and MIT-IBM Watson AI Lab.

\bibliography{references}
\bibliographystyle{iclr2023_conference}
\newpage
\appendix

\section{Derivations} \label{app:derivations}

\subsection{Joint Distribution under Particle Guidance} \label{app:joint}

In this section, we provide the proof of Theorem \ref{th:joint_distribution}. 
First, we restate the Feynman-Kac theorem. Let $u: \ [0,T] \times \mathbb{R}^d$ such that for any $t \in [0,T]$ and $x \in \mathbb{R}^d$ we have
\begin{equation}
\label{eq:quasi_parabolic_pde}
  \partial_t u(t,x) + \langle b(t,x), \nabla u(t,x) \rangle + (1/2) \langle \Sigma(t,x), \nabla^2 u(t,x) \rangle - V(t,x) u(t,x) + f(t,x) = 0 ,
\end{equation}
with $u(T,x) = \Phi(T,x)$. 
Then, under integrability and regularity   assumptions, see \cite{karatzas1991brownian} for instance, we have 
\begin{equation}
\label{eq:feynman-kac}
  \textstyle 
  u(0,x) = \mathbb{E}[\int_0^T \exp[-\int_0^r V(\tau, \mathbf{X}_\tau) \rmd \tau] f(r, \mathbf{X}_r)  \rmd r + \exp[-\int_0^T V(\tau, \mathbf{X}_\tau) \rmd \tau] \Phi(T, \bfX_T) \ | \ \mathbf{X}_0 = x] ,
\end{equation}
with $u(T,x) = \Phi(T,x)$ and
$\rmd \bfX_t = b(t, \bfX_t) \rmd t + \Sigma(t, \bfX_t) \rmd \bfB_t$. In the rest of this section, we derive the specific case of Theorem \ref{th:joint_distribution}. 

We recall that the generative model with particle guidance is given by $(\hat{p}_t)_{t \in [0,T]}$ and is associated with the generative model 
\begin{equation}
\label{eq:generative_with_guidance}
    \rmd \hat{\bfY}_t = \{-f(\hat{\bfY}_t, T-t) + g(T-t)^2 (s_\theta(\hat{\bfY}_t, T-t) + \nabla \log \Phi_{T-t}(\hat{\bfY}_t)) \} \rmd t + g(T-t) \rmd \bf{w} . 
\end{equation}
We also recall that the generative model without particle guidance is given by $(q_t)_{t \in [0,T]}$ and is associated with the generative model 
\begin{equation}
\label{eq:generative_without_guidance}
    \rmd \bfY_t = \{-f(\bfY_t,T-t) + g(T-t)^2 s_\theta( \bfY_t, T-t) \} \rmd t + g(T-t) \rmd \bf{w} . 
\end{equation}
Using the Fokker-Planck equation associated with \eqref{eq:generative_without_guidance} we have for any $x \in (\rset^d)^N$
\begin{equation}
    \partial_t q_t(x) + \mathrm{div}(\{-f(T-t,\cdot) + g(T-t)^2 s_\theta(T-t, \cdot)\}q_t)(x) - (g(T-t)^2/2) \Delta q_t(x) = 0 .
\end{equation}
This can also be rewritten as 
\begin{align}
&\partial_t q_t(x) + \langle -f(T-t,x) + g(T-t)^2 s_\theta( x, T-t), \nabla q_t(x) \rangle - (g(T-t)^2/2) \Delta q_t(x) \\ & \qquad \qquad + \mathrm{div}(\{-f(\cdot, T-t) + g(T-t)^2 s_\theta(T-t, \cdot)\})(x)  q_t(x) = 0 
\end{align}
Denoting $u_t = q_{T-t}$ we have 
\begin{align}
&\partial_t u_t(x) + \langle f(x, t) - g(t)^2 s_\theta(x, t), \nabla u_t(x) \rangle + (g(t)^2/2) \Delta u_t(x) \\ & \qquad \qquad - \mathrm{div}(\{-f(t,\cdot) + g(t)^2 s_\theta( \cdot, t)\})(x)  u_t(x) = 0 .
\end{align}
Note that since $u_t = q_{T-t}$, we have that $u_t = p_t$ with the conventions from \ref{sec:method}. 
Now combining this result with \eqref{eq:quasi_parabolic_pde} and \eqref{eq:feynman-kac} with $V(t,x) = \mathrm{div}(\{-f(\cdot, t) + g(t)^2 s_\theta(t, \cdot)\})(x)$ and $f=0$ we have that 
\begin{equation}
    u_0(x) = \mathbb{E}[Z] ,
\end{equation}
with 
\begin{equation}
    \textstyle Z = \exp[-\int_0^T V(\tau, \bfX_\tau) \rmd \tau] p_0(\bfX_T) ,
\end{equation}
and 
\begin{equation}
    \rmd \bfX_t = \{ f(t, \bfX_t) - g(t)^2 s_\theta(\bfX_t,t) \} \rmd t + g(t) \rmd \bf{w} . 
\end{equation}
with $\bfX_0=x$. We now consider a similar analysis in the case of the generative with particle guidance. Using the Fokker-Planck equation associated with \eqref{eq:generative_with_guidance} we have for any $x \in (\rset^d)^N$
\begin{equation}
    \partial_t \tilde{q}_t(x) + \mathrm{div}(\{-f(\cdot,T-t) + g(T-t)^2 (s_\theta(\cdot,T-t) + \nabla \log \Phi_{T-t}) \}\tilde{q}_t)(x) - (g(T-t)^2/2) \Delta \tilde{q}_t(x) = 0 .
\end{equation}
This can also be rewritten as 
\begin{align}
&\partial_t \tilde{q}_t(x) + \langle -f(x,T-t) + g(T-t)^2 s_\theta(x,T-t), \nabla \tilde{q}_t(x) \rangle - (g(T-t)^2/2) \Delta \tilde{q}_t(x) \\ & \qquad \qquad + \mathrm{div}(\{-f(\cdot,T-t) + g(T-t)^2 s_\theta(\cdot,T-t)\})(x)  \tilde{q}_t(x) \\
& \qquad \qquad + g(T-t)^2 (\langle \log \Phi_{T-t}(x), \nabla \log \tilde{q}_t(x) \rangle + \Delta \log \Phi_{T-t}(x)) \tilde{q}_t(x)  = 0 .
\end{align}
Denoting $\hat{u}_t = \tilde{q}_{T-t}$ we have 
\begin{align}
&\partial_t \hat{u}_t(x) + \langle f(t,x) - g(t)^2 s_\theta(x,t), \nabla \hat{u}_t(x) \rangle + (g(t)^2/2) \Delta \hat{u}_t(x) \\ & \qquad \qquad - \mathrm{div}(\{-f(\cdot,t) + g(t)^2 s_\theta(t, \cdot)\})(x)  \hat{u}_t(x) \\
& \qquad \qquad - g(t)^2 (\langle \nabla \log \Phi_{t}(x), \nabla \log \tilde{q}_{T-t}(x) \rangle + \Delta \log \Phi_{t}(x)) \hat{u}_t(x)  = 0 .
\end{align}
Following the convetion of \ref{sec:method}, we have that $\hat{u}_t = \hat{p}_0$. 
Now combining this result with \eqref{eq:quasi_parabolic_pde} and \eqref{eq:feynman-kac} with $\hat{V}(t,x) = \mathrm{div}(\{-f(\cdot,t) + g(t)^2 s_\theta(\cdot,t)\})(x) + g(t)^2 (\langle\nabla \log \Phi_{t}(x), \nabla \log \tilde{p}_{T-t}(x) \rangle + \Delta \log \Phi_{t}(x))$ and $f=0$ we have that 
\begin{equation}
    \hat{u}_0(x) = \mathbb{E}[\hat{Z}] ,
\end{equation}
with 
\begin{equation}
    \textstyle \hat{Z} = \exp[-\int_0^T \hat{V}(\tau, \bfX_\tau) \rmd \tau] p_0(\bfX_T) ,
\end{equation}
and 
\begin{equation}
    \rmd \bfX_t = \{ f(\bfX_t, t) - g(t)^2 s_\theta(\bfX_t,t) \} \rmd t + g(t) \rmd \bf{w} . 
\end{equation}
again with $\bfX_0=x$. We conclude the proof upon noting that 
\begin{equation}
    \textstyle \hat{Z} = Z \exp[-\int_0^T g(t)^2 (\langle \nabla \log \Phi_{t}(\bfX_t), \nabla \log \hat{u}_{t}(\bfX_t) \rangle+ \Delta \log \Phi_{t}(\bfX_t))dt] . 
\end{equation}

\begin{rebuttal}
    
\paragraph{Interpretation} An interpretation of this reweighting term can be given through the lens of SVGD. We introduce the \textit{Stein operator} as in \cite{liu2016stein} given by for any $\Psi: \ (\R^d)^N \to (\R^d)^N$ by 
\begin{equation}
     \mathcal{A}_{\hat{p}_t}(\Psi_t) = \nabla \log \hat{p}_t \Psi_t^\top + \nabla \Psi_t .
\end{equation}
Using $\Psi_t = \nabla \log \Phi_t$, we get that 
\begin{equation}
     \mathrm{Tr}(\mathcal{A}_{\hat{p}_t}(\Psi_t)) = \langle \nabla \log \Phi_t, \nabla \log \hat{p}_t \rangle + \Delta \log \Phi_t . 
\end{equation}
The squared expectation of this quantity w.r.t. a distribution $q$ on $(\R^d)^N$ is the \textit{Kernel Stein Discrepancy} (KSD) between $q$ and $\hat{p}_t$ given the kernel $\log \Phi_t$.
\end{rebuttal}

\subsection{Sampling a Predefined Joint Distribution} \label{app:learn_pot}

For ease of derivation via the Doob h-transform, we temporarily reverse the time from $t$ to $T-t$. Here, $p_T$ is treated as the data distribution, and $\Phi_T$ is regarded as the potential, as specified by users. We now consider another model. Namely, we are looking for a generative model $\hat{p}_t$ with $t \in [0,T]$ such that for any $t \in [0,T]$ we have $\hat{p}_t = p_t \Phi_t$ with $\Phi_T$ given by the user. In layman's terms, this means that we are considering a \emph{factorized} model for all times $t$ with the additional requirement that at the final time $T$, the model is given by $p_T = \hat{p}_T \Phi_T$ with $\Phi_T$ known. This is to be compared with Theorem \ref{th:joint_distribution}. Indeed in Theorem \ref{th:joint_distribution} while the update on the generative dynamics is explicit (particle guidance term), the update on the density is not. In what follows, we are going to see, using tools from Doob $h$-transform theory, that we can obtain an expression for the update of the drift in the generative process when considering models of the form $\hat{p}_t = p_t \Phi_t$. 

More precisely, we consider the following model. Let $\hat{p}_T = p_T$ and for any $s,t \in [0,T]$ with $s < t$ and $\rvx_s^{1:n} = \{\rvx_t^i\}_{i=1}^n \in (\rset^d)^n$ and $\rvx_t^{1:n} = \{\rvx_t^i\}_{i=1}^n \in (\rset^d)^n$ we define 
\begin{equation}
    \hat{p}_{t|s}(\rvx_t^{1:n} | \rvx_s^{1:n}) = p_{t|s}(\rvx_t^{1:n}|\rvx_s^{1:n}) \Phi_t(\rvx_t^{1:n}) / \Phi_s(\rvx_s^{1:n}) ,
\end{equation}
with $\Phi_t$ which satisfies for any $\rvx_t^{1:n} \in (\rset^d)^n$
\begin{equation}
\label{eq:backward_kolmo}
    \partial_t \Phi_t(\rvx_t^{1:n}) + \langle -f_{T-t}(\rvx_t^{1:n}) + g(T-t)^2 \nabla \log p_t(\rvx_t^{1:n}), \nabla \Phi_t(\rvx_t^{1:n}) \rangle + (g(T-t)^2/2) \Delta \Phi_t(\rvx_t^{1:n}) = 0 ,
\end{equation}
with $\Phi_T$ given. Note that \eqref{eq:backward_kolmo} expresses that $\Phi_t$ satisfies the backward Kolmogorov equation.
Under mild assumptions, using Doob $h$-theory, we get that there exists $(\hat{\bfX}_t)_{t \in [0,T]}$ such that for any $t \in [0,T]$ we have $\hat{\bfY}_t \sim \hat{p}_t$ and for any $t  \in [0,T]$
\begin{equation}
    \rmd \hat{\bfY}_t = \{-f_{T-t}(\hat{\bfY}_t) + g(T-t)^2 [\nabla \log p_{t}(\hat{\bfY}_t) + \nabla \log \Phi_t(\hat{\bfY}_t)] \} \rmd t + g(T-t) \rmd \bf{w} . 
\end{equation}
The main difficulty is to compute $\Phi_t$ for any $t \in [0,T]$. Under mild assumptions, solutions to the backward Kolmogorov  \eqref{eq:backward_kolmo} are for any $t \in [0,T]$ by 
\begin{equation}
\label{eq:solution_backward}
\textstyle
    \Phi_t(\rvx_t^{1:n}) = \mathbb{E}[\Phi_T(\bfY_T) | \bfY_t=\rvx_t^{1:n}]=\int \Phi_T(\bfY_T=\rvx_T^{1:n})p_{T|t}(\rvx_T^{1:n}|\rvx_t^{1:n})d\rvx_T^{1:n} ,
\end{equation}
where we have 
\begin{equation}
    \rmd \bfY_t = \{-f_{T-t}(\bfY_t) + g(T-t)^2 \nabla \log p_{t}(\bfY_t)  \} \rmd t + g(T-t) \rmd \bf{w} . 
\end{equation}
This means that $(\bfY_t)_{t \in [0,T]}$ is given by the original generative model, with time-dependent marginals $p_t$. The expression \eqref{eq:solution_backward}, suggests to parameterize $\Phi_t$ by $\Phi_t^\theta$ and to consider the loss function 
\begin{equation}
\label{eq:loss_function_t}
    \ell_t(\theta) = \mathbb{E}_{\bfY_T}\mathbb{E}_{\bfY_t\sim p_{t|T}(\cdot|\bfY_T)}[\| \Phi_T(\bfY_T) - \Phi_t^\theta(\bfY_t)\|^2] . 
\end{equation}
Then, we can define a global loss function $\mathcal{L}(\theta) = \int_0^T \lambda(t) \ell_t(\theta) \rmd t$ where $\lambda_t$ is some weight. One problem with this original loss function is that it requires sampling and integrating with respect to $\bfY_t$ which requires sampling from the generative model. 

Recall that we reverse the time from $t$ to $T-t$ at the beginning. Reverse back to the original convention in the main text, \Eqref{eq:loss_function_t} can be expressed as 
\begin{equation}
\label{eq:loss_function_t}
    \ell_t(\theta) = \mathbb{E}_{\bfX_0\sim p_0}\mathbb{E}_{\bfX_t\sim p_{t|0}(\cdot|\bfX_T)}[\| \Phi_0(\bfX_0) - \Phi_t^\theta(\bfX_t)\|^2] . 
\end{equation}

\begin{rebuttal}

\subsection{Preserving Marginal Distributions} \label{app:marginals}

For arbitrary time evolving potentials $\Phi_t(\rvx_1, ..., \rvx_n)$ sampling using particle guidance does not preserve the marginals $p(\rvx_i) \neq \int_{\rvx_1, ..., \rvx_{i-1}, \rvx_{i+1}, ..., \rvx_{n}} \hat{p}(\rvx_1, ..., \rvx_n) d\rvx_1 ... d\rvx_{i-1} d\rvx_{i+1} ... d\rvx_{n}$. In many domains this is not required and none of the methods discussed in Section \ref{sec:connections} have this property for finite number of particles, however, in domains where, for example, one wants to obtain unbiased estimates of some function this property may be useful. 

While the technique discussed in Section \ref{sec:learning_pot} allows us to use any potential $\Phi_0(\rvx_1, ..., \rvx_n)$ choosing $\Phi_0$ in such a way that preserves marginals is hard for non-trivial potentials and distributions. Therefore, we propose to learn a non-trivial marginal preserving $\Phi_0^\theta$ from the data in the following way. Let $\Phi^{\theta}_0(\rvx_1, ..., \rvx_n) = \Phi'_0(\rvx_1, ..., \rvx_n)\prod_i \gamma_{\theta}(\rvx_i)$ where $\Phi'_0$ is some predefined joint potential that, for example, encourages diversity in the joint distribution and $\gamma_{\theta}$ is a learned scalar function operating on individual points that counterbalances the effect that $\Phi'_0$ has on marginals while maintaining its effect on sample diversity.

How do we learn such scaling function $\gamma_{\theta}$ to preserve marginals? By definition, the individual marginal distribution is (consider $i=1$ w.l.o.g.):
\begin{align*}
    \hat{p}_1(\rvx_1) = & \frac{\int_{\rvx_2, ..., \rvx_n} \Phi'_0(\rvx_1, ..., \rvx_n) \prod_{i} p(\rvx_i) \gamma_{\theta}(\rvx_i) d\rvx_2 ... d\rvx_n}{\int_{\rvx'_1, ..., \rvx'_n} \Phi'_0(\rvx'_1, ..., \rvx'_n) \prod_{i} p(\rvx'_i) \gamma_{\theta}(\rvx'_i) d\rvx'_1 ... d\rvx'_n} = \\ = & \frac{p(\rvx_1) E_{\rvx_2, ..., \rvx_n \sim \prod_{i>1} p(\rvx_i)} \big[ \Phi'_0(\rvx_1, ..., \rvx_n) \prod_i \gamma_{\theta}(\rvx_i) \big]}{E_{\rvx'_1, ..., \rvx'_n \sim \prod_{i} p(\rvx'_i)} \big[\Phi'_0(\rvx'_1, ..., \rvx'_n) \prod_{i} \gamma_{\theta}(\rvx'_i)\big]} = \\ = & \; p(\rvx_1) \, \frac{ E_{\rvx_2, ..., \rvx_n \sim \prod_{i>1} p(\rvx_i)} \big[ \Phi^{\theta}_0(\rvx_1, ..., \rvx_n)\big]}{E_{\rvx'_1, ..., \rvx'_n \sim \prod_{i} p(\rvx'_i)} \big[\Phi^{\theta}_0(\rvx'_1, ..., \rvx'_n) \big]}
\end{align*}
Therefore $\hat{p}_1 = p$ if and only if the fraction is always equal to 1 (intuitively for the marginal to be maintained on average the potential should have no effect). Assuming that the joint potential $\Phi_0'$ is invariant to the permutation to its inputs. Then one can also prove that, $\hat{p}_1 = p$ if and only if $E_{\rvx_2, ..., \rvx_n \sim \prod_{i>1} p(\rvx_i)} \big[ \Phi^{\theta}_0(\rvx_1, ..., \rvx_n)\big]$ is equal to a positive constant $C$ for any $x_1$. We can then minimize the following regression loss to learn the scalar function $\gamma_\theta$, by matching $E_{\rvx_2, ..., \rvx_n \sim \prod_{i>1} p(\rvx_i)} \big[ \Phi^{\theta}_0(\rvx_1, ..., \rvx_n)\big]$ and $C$:
\begin{align*}
    \min_\theta \quad E_{\rvx_1} (E_{ \rvx_2, ..., \rvx_n \sim \prod_{i} p(\rvx_i)} \big[ \Phi'_0(\rvx_1, ..., \rvx_n) \prod_{i\not = 1}  \gamma_{\theta}(\rvx_i)\big]- \frac{C}{\gamma_{\theta}(\rvx_1) })^2
\end{align*}
However, achieving this objective necessitates costly Monte Carlo estimations for the expectation over $n-1$ independent samples. Moreover, obtaining an unbiased estimator for the full-batch gradient in a mini-batch setup is challenging. To bypass this issue, we implement a greedy update rule that optimizes the value $\gamma_\theta(\rvx_1)$ on the single sample $\rvx_1$. The greedy update can be viewed as continuous extension of the Iterative Proportional Fitting~(IPF) for symmetry matrices. The essence of the IPF algorithm lies in determining scaling factors for each row and column of a given matrix, ensuring that the sum of each row and column (the marginals) aligns with a specified target value. IPF is known for its uniqueness and convergency guarantees~\citep{Sinkhorn1964ARB}. Let's denote $\phi_i(\rvx_1, ..., \rvx_n)=$ \texttt{stop\_grad}($\Phi'_0(\rvx_1, ..., \rvx_n) \prod_{j \neq i} \gamma_{\theta}(\rvx_j)$), the objective for the greedy update rule is as follows:
\begin{align*}
    \min_{\gamma_{\theta}(\rvx_1)} \quad E_{\rvx_1} (E_{\rvx_2, ..., \rvx_n \sim \prod_{i\not=1} p(\rvx_i)}\phi_1(\rvx_1, ..., \rvx_n) - \frac{C}{\gamma_{\theta}(\rvx_1) })^2
\end{align*}
The gradient w.r.t $\theta$ in the objective above is:
\begin{align*}
    \theta' = \theta - \beta E_{\rvx_1, \rvx_2, ..., \rvx_n \sim \prod_{i} p(\rvx_i)}\left[ \frac{2C}{\gamma_{\theta}^2(\rvx_1)} \left( \phi_1(\rvx_1, ..., \rvx_n) - \frac{C}{\gamma_{\theta}(\rvx_1)} \right) \right] \nabla_\theta \gamma_\theta(x_1)\numberthis\label{eq:full}
\end{align*}
where $\beta$ is the learning rate. The corresponding update by stochastic gradient is:
\begin{align} \label{eq:stoch_gradient}
    \theta' = & \theta - \beta \frac{1}{n}\sum_{i=1}^n\left[ \frac{2C}{\gamma_{\theta}^2(\rvx_i)} \left( \phi_i(\rvx_1, ..., \rvx_n) - \frac{C}{\gamma_{\theta}(\rvx_i)} \right) \right] \nabla_\theta \gamma_\theta(x_i) \\
    = & \theta - \beta \frac{1}{n}\sum_{i=1}^n\left[ \frac{2C}{\gamma_{\theta}^3(\rvx_i)} \left( \Phi^{\theta}_0(\rvx_1, ..., \rvx_n) - C \right) \right] \nabla_\theta \gamma_\theta(x_i) 
\end{align}
for $\rvx_1, ..., \rvx_n \sim \prod_{i} p(\rvx_i)$. The stochastic gradient is an unbiased estimator of the gradient in \Eqref{eq:full}.

\subsubsection{Empirical Synthetic Experiments}

We demonstrate this training paradigm in a synthetic experiment using a mixture of Gaussian distributions in 2D. In particular, we set $p$ to be a mixture of 7 Gaussians with the same variance with placed as shown in Figure \ref{fig:learning_gamma}.A. The middle Gaussian has a weight that is four times that of the others. 

\begin{figure}[h]
  \centering
  \includegraphics[width=0.8\textwidth]{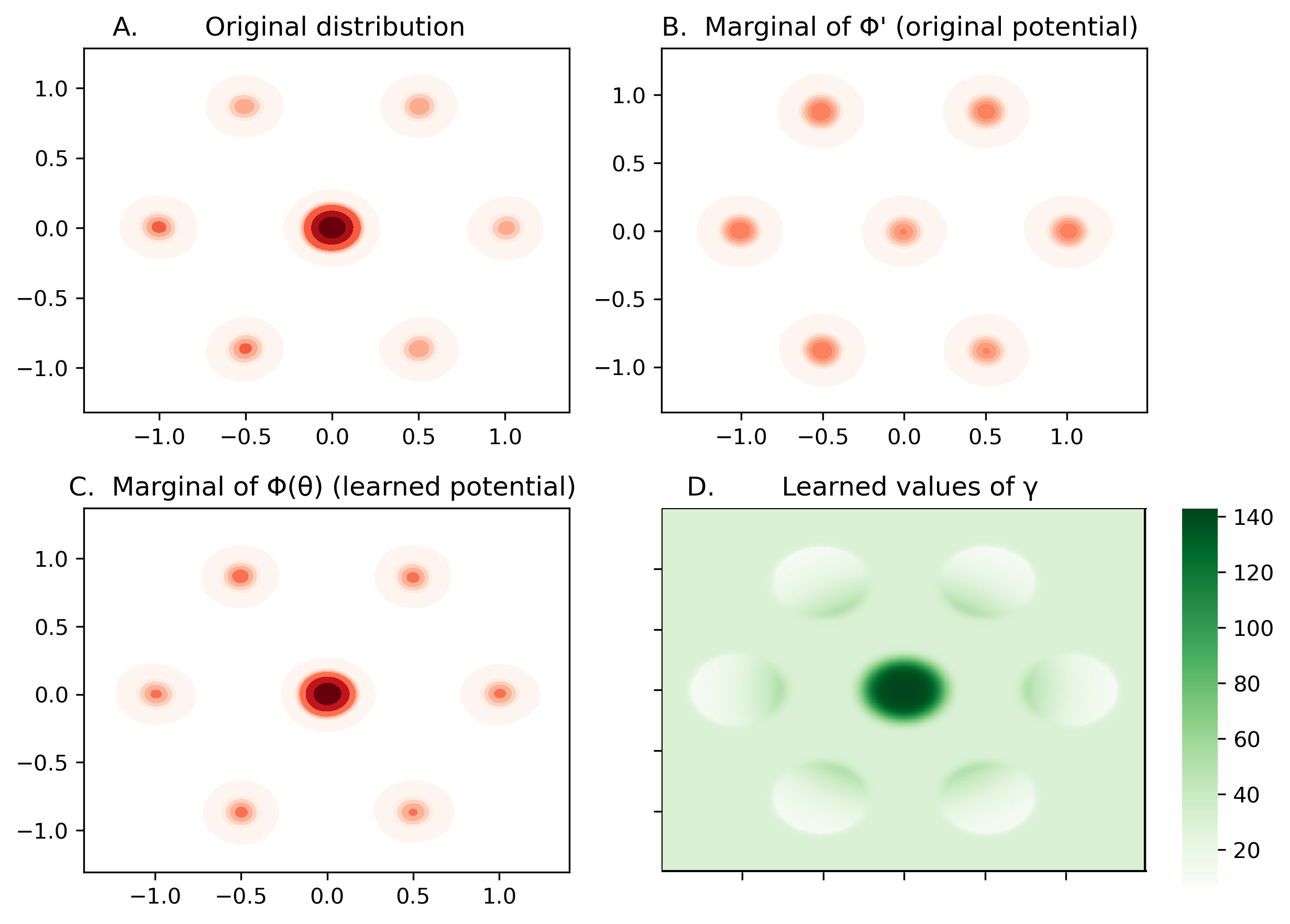}
  \caption{ \rebut{Synthetic experiment on learning a potential that preserves marginal distributions. The description of each plot can be found in the text. }}
  \label{fig:learning_gamma}
\end{figure}

Figure \ref{fig:learning_gamma}.B shows the marginal distribution when sampling 10 particles with joint distribution:
\begin{align*}
    \tilde{p}_0(\rvx_1, ..., \rvx_n) \propto \Phi'_0(\rvx_1, ..., \rvx_n) \prod_{i=1}^n p(\rvx_i)
\end{align*}
where $\Phi'_0$ is a measure of diversity that in this case we take to be the exponential of the negative of the mean of pairwise Euclidean RBF similarity kernels. Table \ref{tab:learning_gamma} highlights how sampling from this modified joint distribution significantly increases the divesity of the samples, especially when it comes to the expected value of $\log \Phi'_0$ itself. However, as clear from Figure \ref{fig:learning_gamma}.B, the marginal distribution is significantly changed with the middle mode being sampled only 13\% of the times instead of 40\% and even the shape of the outer modes being altered.

To fix this we learn a function $\gamma_{\theta}$, here parameterized simply as the set of values on a grid representing the domain. We follow the training scheme presented in \Eqref{eq:stoch_gradient} obtaining the function presented in Figure \ref{fig:learning_gamma}.D. As evident from the plot, this has the effect of overweighing samples in the central mode, while downsampling samples in the outside modes, especially when coming from the outer parts. In Figure \ref{fig:learning_gamma}.C we plot the marginal of the resulting distribution:
\begin{align*}
    \hat{p}_0(\rvx_1, ..., \rvx_n) \propto \Phi'_0(\rvx_1, ..., \rvx_n) \prod_{i=1}^n p(\rvx_i) \gamma_{\theta}(\rvx_i).
\end{align*}
Although the marginals closely match, $\hat{p}_0$ still has more diversity in its sets of samples compared to I.I.D. sampling of $p$, however, as seen in Table \ref{tab:learning_gamma} the level of diversity in $\hat{p}_0$ is lower than that of $\tilde{p}_0$, highlighting the ``cost" of imposing the preservation of marginals.

\begin{table}[h]
\caption{ \rebut{Values of different observables under different joint probability distributions. For every method we take 5000 samples (of 10 particles), samples from $\tilde{p}_0$ and $\hat{p}_0$ were obtained reweighting 50000 samples of the independent I.I.D. distribution.} } \label{tab:learning_gamma}
\vspace{10pt}
\centering
\begin{tabular}{lccc} \toprule
Observable                                       & I.I.D. $p$ & $\tilde{p}_0$ & $\hat{p}_0$ \\ \midrule
Number of modes recovered at every sample        & 4.9    & 5.9           & 5.3         \\
Expected value of $\log \Phi'_0$  & -37.3    & -31.8            & -36.3        \\ \bottomrule
\end{tabular}
\end{table}

\end{rebuttal}

\subsection{Invariance of Particle Guidance} \label{app:invariance_proof}

\begin{proposition} \label{th:invariance}
    Let $G$ be the group of rotations or permutations of a set of vectors. Assuming that $p_T(\rvx)$ is a $G$-invariant distribution, the learned score $s(\rvx, t)$ and $f(\rvx,t)$ are $G$-equivariant and the potential $\log \Phi_t(\rvx_1 \dots \rvx_n)$ is $G$-invariant to a transformation of any of its inputs, then the resulting distribution we sample from will also be $G$-invariant to a transformation of any of the elements of the set.
\end{proposition}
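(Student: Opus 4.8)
The plan is to show that the particle-guidance reverse SDE of \Eqref{eq:guidance_general} is \emph{equivariant} under the action of $G$ on any single particle, and then combine this with the assumed invariance of the prior $p_T$ to conclude that the terminal law $\hat p_0$ is invariant. (One could instead try to read invariance off the explicit formula of Theorem \ref{th:joint_distribution}, checking that $Z$, $\nabla\log\Phi_t$, $\nabla\log\hat p_t$, $\Delta\log\Phi_t$ and the auxiliary process $\mathbf{X}_t$ are all suitably equivariant, but the SDE route is cleaner.) Fix $R\in G$ — represented, whether a rotation or a permutation of the vectors, by an orthogonal matrix, so that $R^\top=R^{-1}$ — fix an index $i$, and let $\rho$ be the map on $(\rset^d)^n$ sending $(\rvx_1,\dots,\rvx_n)$ to $(\rvx_1,\dots,R\rvx_i,\dots,\rvx_n)$. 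It suffices to prove $\rho_\#\hat p_0=\hat p_0$: transforming several elements then follows by composing such maps, and since $\rho$ is linear with $|\det\rho|=1$, invariance of the pushforward measure is the same as invariance of the density.

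Let $(\rvx_1(t),\dots,\rvx_n(t))_{t\in[0,T]}$ solve \Eqref{eq:guidance_general} started from $(\rvx_1(0),\dots,\rvx_n(0))\sim\prod_j p_T$, and set $\tilde\rvx_j=\rvx_j$ for $j\ne i$ and $\tilde\rvx_i=R\rvx_i$. I would then compute the SDE obeyed by $(\tilde\rvx_1,\dots,\tilde\rvx_n)$ and check that it is again \Eqref{eq:guidance_general}. For $j\ne i$ the drift is literally unchanged but has to be re-expressed in the tilded variables: $\nabla_{\rvx_j}\log p_{t'}(\rvx_j)=\nabla_{\tilde\rvx_j}\log p_{t'}(\tilde\rvx_j)$, and since $\log\Phi_{t'}$ is $G$-invariant in its $i$-th slot, $\log\Phi_{t'}(\dots,R^{-1}\tilde\rvx_i,\dots)=\log\Phi_{t'}(\dots,\tilde\rvx_i,\dots)$, whence $\nabla_{\rvx_j}\log\Phi_{t'}=\nabla_{\tilde\rvx_j}\log\Phi_{t'}(\tilde\rvx_1,\dots,\tilde\rvx_n)$. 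For $j=i$ one multiplies the whole increment by $R$ and invokes: $G$-equivariance of $f$, giving $Rf(R^{-1}y,t')=f(y,t')$; and the chain-rule identity $\nabla h(Ry)=R\,\nabla h(y)$, valid for any $G$-invariant $h$, applied to $\log p_{t'}$ (which is $G$-invariant because $p_T$, hence $p_{t'}$, is) and to $x\mapsto\log\Phi_{t'}(\dots,x,\dots)$. These turn the $\rvx_i$-drift into exactly the drift of \Eqref{eq:guidance_general} written in tilded variables, while the diffusion term becomes $g(t')\,R\,\rmd\rvw$, which has the same law as $g(t')\,\rmd\rvw$ because $R$ is orthogonal. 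If one assumes only that the \emph{learned} $s_\theta$ is $G$-equivariant, the computation goes through verbatim with $s_\theta$ in place of $\nabla\log p_{t'}$ — which is in fact what is simulated.

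Hence $(\tilde\rvx_1,\dots,\tilde\rvx_n)$ solves \Eqref{eq:guidance_general} again, driven by a Brownian motion of the same law, with initial distribution $\rho_\#\big(\prod_j p_T\big)=\prod_j p_T$ by $G$-invariance of $p_T$. By weak uniqueness for \Eqref{eq:guidance_general} — covered by the integrability and regularity assumptions already used in Theorem \ref{th:joint_distribution} — the two processes have the same law, so their terminal marginals agree: $\rho_\#\hat p_0=\hat p_0$. Since $i$ and $R$ were arbitrary, this proves the proposition; the argument carries over to the Riemannian setting of Section \ref{sec:joint_distr} because the SDE, the gradient and divergence operators, and Brownian motion are all intrinsic while $G$ acts by isometries. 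The step I expect to be the main obstacle is not the term-by-term algebra — a routine chain-rule exercise — but making the ``same SDE $\Rightarrow$ same law'' implication rigorous under the standing hypotheses (pinning down the precise weak-uniqueness statement), together with the bookkeeping required to treat rotations and atom-permutations uniformly as orthogonal transformations so that the diffusion term and the Jacobian factor behave as claimed.
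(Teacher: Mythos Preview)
Your proposal is correct and follows essentially the same approach as the paper: establish equivariance of the dynamics under the action on a single particle, combine with the $G$-invariant prior, and conclude invariance of the terminal law. The only presentational difference is that the paper carries this out for the Euler-discretized scheme (showing directly that the one-step transition kernels satisfy $p_\theta(T_g(\rvx_n^{(t-1)})\mid \rvx_{1:n-1}^{(t)},T_g(\rvx_n^{(t)}))=p_\theta(\rvx_n^{(t-1)}\mid \rvx_{1:n}^{(t)})$ and then integrating over intermediate states), whereas you work in continuous time and close with a weak-uniqueness argument; the paper explicitly notes that the continuous version also goes through.
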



Note that in this section we will derive this specific formulation for the group of rotations or permutations and the Brownian motion in Euclidean space. For a more general statement on Lie groups $G$ and Brownian motions associated with a given metric, one could generalize the result from \citet{yim2023se} Proposition F.2.

\begin{proof}
For simplicity, we will consider Euler discretization steps going with time from $T$ to $0$ (as used in our experiments), however, the proposition applies in the continuous setting too:
$$p_{\theta}(\rvx_{i}^{(t-1)}|\rvx_{1:n}^{(t)}) = p_{\rvz}(\rvx_{i}^{(t-1)} - \rvx_{i}^{(t)} + f(\rvx_{i}^{(t)}, t) - g^2(\rvs_{\theta}(\rvx_{i}^{(t)}, t) + \nabla_{\rvx_{i}^{(t)}} \log \Phi_t(\rvx_{1:n}^{(t)})))$$

where $\rvz\sim N(0, g^2 I)$. Without loss of generality since the whole method is invariant to permutations of the particles, consider $\rvx_n$ to be the particle to which we apply $T_g$ the transformation of an arbitrary group element $g$.

Since by assumption $\log \Phi_t (\rvx_{1:n}^{(t)}) = \log \Phi_t (\rvx_{1:n-1}^{(t)}, T_g(\rvx_{n}^{(t)}))$ we have $p_{\theta}(\rvx_{i}^{(t-1)}|\rvx_{1:n}^{(t)}) = p_{\theta}(\rvx_{i}^{(t-1)}|\rvx_{1:n-1}^{(t)},T_g(\rvx_{n}^{(t)}))$. 

On the other hand, since $\log \Phi_t (\rvx_{1:n}^{(t)})$ is invariant to $G$ transformations of $\rvx_{n}^{(t)}$, its gradient w.r.t. the same variable will be $G$-equivariant. Therefore:
\begin{flalign*}
    & p_{\theta}(T_g(\rvx_{n}^{(t-1)})|\rvx_{1:n-1}^{(t)},T_g(\rvx_{n}^{(t)})) = \\ & = p_{\rvz}(T_g(\rvx_{n}^{(t-1)}) - T_g(\rvx_{n}^{(t)}) + f(T_g(\rvx_{n}^{(t)}), t) - g^2(\rvs_{\theta}(T_g(\rvx_{n}^{(t)}), t) + \nabla_{\rvx_{n}^{(t)}} \log \Phi_t(\rvx_{1:n-1}^{(t)}, T_g(\rvx_{n}^{(t)})))) \\ &
    = p_{\rvz}(T_g(\rvx_{n}^{(t-1)}) - T_g(\rvx_{n}^{(t)}) + T_g(f(\rvx_{n}^{(t)}, t)) - g^2(T_g(\rvs_{\theta}(\rvx_{n}^{(t)}, t)) + T_g(\nabla_{\rvx_{n}^{(t)}} \log \Phi_t(\rvx_{1:n}^{(t)})))) \quad \\ &
     = p_{\rvz}(T_g(\rvx_{n}^{(t-1)} - \rvx_{n}^{(t)} + f(\rvx_{n}^{(t)}, t) - g^2(\rvs_{\theta}(\rvx_{n}^{(t)}, t) + \nabla_{\rvx_{n}^{(t)}} \log \Phi_t(\rvx_{1:n}^{(t)})))) = p_{\theta}(\rvx_{n}^{(t-1)}|\rvx_{1:n}^{(t)})
\end{flalign*}

where between lines 2 and 3 we have used the equivariance assumptions and in the latter two the properties of elements of $G$. 

Putting these together, we follow a similar derivation of Proposition 1 from \citet{xu2021geodiff}:

\begin{flalign*}
    & p_{\theta}(\rvx_{1:n-1}^{(0)},T_g(\rvx_{n}^{(0)})) = \\ & = \int p(\rvx_{1:n-1}^{(T)},T_g(\rvx_{n}^{(T)})) \prod_{t=1}^T p_{\theta}(\rvx_{1:n-1}^{(t-1)},T_g(\rvx_{n}^{(t-1)})|\rvx_{1:n-1}^{(t)},T_g(\rvx_{n}^{(t)})) = \\ 
    & = \int \bigg( \prod_{i<n} p(\rvx_{i}^{(T)}) \bigg) \bigg( \prod_{t=1}^T \prod_{i<n} p_{\theta}(\rvx_{i}^{(t-1)}|\rvx_{1:n-1}^{(t)},T_g(\rvx_{n}^{(t)})) \bigg) \cdot \\ & \quad \quad \quad \cdot \bigg(p(T_g(\rvx_{n}^{(T)})) \prod_{t=1}^T p_{\theta}(T_g(\rvx_{n}^{(t-1)})|\rvx_{1:n-1}^{(t)},T_g(\rvx_{n}^{(t)}))\bigg) = \\
    & = \int \bigg( \prod_{i<n} p(\rvx_{i}^{(T)}) \bigg) \bigg( \prod_{t=1}^T \prod_{i<n} p_{\theta}(\rvx_{i}^{(t-1)}|\rvx_{1:n}^{(t)}) \bigg) \bigg(p(\rvx_{n}^{(T)}) \prod_{t=1}^T p_{\theta}(\rvx_{n}^{(t-1)}|\rvx_{1:n}^{(t)})\bigg) = p_{\theta}(\rvx_{1:n}^{(0)})
\end{flalign*}

\end{proof}

\subsection{Particle Guidance as SVGD} \label{app:svgd_derivation}

In this section, we derive the approximation of Eq. \ref{eq:ps_as_svgd} starting from the probability flow ODE equivalent of Eq. \ref{eq:guidance_kernel} under the assumptions of no drift $f(x,t)=0$ and using the following form for $\Phi_t (x_1, ..., x_n) = (\sum_{i,j} k_t(x_i, x_j) )^{-\frac{n-1}{2}}$ where $k_t$ is a similarity kernel based on the Euclidean distance (e.g. RBF kernel).
\begin{align*}
    dx_i = \bigg[f(x_i,t) - \frac{1}{2} g^2(t) \bigg(\nabla_{x_i} \log p_t (x_i) + \nabla_{x_i} \log \Big(\sum_{ij} k_t(x_i, x_j)\Big)^{-\frac{n-1}{2}} \bigg) \bigg] dt
\end{align*}
\begin{align*}
  = - \frac{1}{2} g^2(t) dt \bigg(\nabla_{x_i} \log p_t (x_i) - \frac{\frac{1}{2} \nabla_{x_i} \sum_{ij} k_t(x_i, x_j)}{ \frac{1}{n-1}\sum_{ij} k_t(x_i, x_j)} \bigg)
\end{align*}
Now we can simplify the numerator using the fact that $k_t$ is symmetric and approximate the denominator assuming that different particles will have similar average distances to other particles:
\begin{align*}
  \approx - \frac{1}{2} g^2(t) dt \bigg(\nabla_{x_i} \log p_t (x_i) - \frac{\nabla_{x_i} \sum_{j} k_t(x_i, x_j)}{\sum_{j} k_t(x_i, x_j)} \bigg)
\end{align*}
\begin{align*}
  = - \frac{g^2(t) dt}{2 \; S(x_i)}  \bigg(\sum_j k_t(x_i, x_j) \nabla_{x_i} \log p_t (x_i) - \nabla_{x_i} k_t(x_i, x_j) \bigg)
\end{align*}
where $S(x_i) = \sum_j k_t(x_i, x_j)$. Now we can use the fact that $\nabla_{x_i} k_t(x_i, x_j) = - \nabla_{x_j} k_t(x_i, x_j)$ because the kernel only depends on the Euclidean distance between the two points:
\begin{align*}
  = - \frac{n \; g^2(t) dt}{2 \; S(x_i)}  \bigg( \frac{1}{n-1} \sum_j k_t(x_i, x_j) \nabla_{x_i} \log p_t (x_i) + \nabla_{x_j} k_t(x_i, x_j) \bigg)
\end{align*}
Letting $\epsilon_t(x_i) = \frac{n \; g^2(t) \Delta t}{2 \; S(x_i)}$, we obtain Eq. \ref{eq:ps_as_svgd}:
\begin{align*} 
    x_i^{t - \Delta t} \approx x_i^t + \epsilon_t (x_i) \psi_t(x_i^t) \quad  \text{where} \quad \psi(x) =  \frac{1}{n-1} \sum_{j=1}^{n} [ k_t(x_j^t, x) \nabla_{x} \log p_t(x) + \nabla_{x_j^t} k_t(x_j^t,x)]
\end{align*}

\subsection{Particle Guidance in Poisson Flow Generative Models}
\label{app:pfgm}

In this section, we consider the more general Poisson Flow Generative Models++~\citep{Xu2023PFGMUT} framework in which the $N$-dimensional data distribution is embedded into $N+D$-dimensional space, where $D$ is a positive integer~($D=1$/$D\to\infty$ recover PFGM~\citep{xu2022poisson}/diffusion models). The data distribution is interpreted as a positive charge distribution. Each particle independently follows the electric field generated by the $N$-dimensional data distribution $p(x)$ embedded in a $N+D$-dimensional space. One can similarly do particle guidance in the PFGM++ scenarios, treating the group of particles as negative charges, not only attracted by the data distribution but also exerting the mutually repulsive force. Formally, for the augmented data the ODE in PFGM++~(Eq.4 in \citet{Xu2023PFGMUT}) is
\begin{align*}
    \frac{dx}{dr} = \frac{{E}(x,r)_{x}}{E(x,r)_{r}} 
\end{align*}
where $E_x$ and $E_r$ are the electric fields for different coordinates:
\begin{align*}
     E(x,r)_x &= \frac{1}{S_{N+D-1}(1)}\int \frac{x-y}{(\|x-y\|^2+r^2)^{\frac{N+D}{2}}}{p}({y}) d{y}
\end{align*}

\begin{align*}
     E(x,r)_r &= \frac{1}{S_{N+D-1}(1)}\int \frac{r}{(\|x-y\|^2+r^2)^{\frac{N+D}{2}}}{p}({y}) d{y}
\end{align*}

Note that when $r=\sigma\sqrt{D}, D\to \infty$, the ODE is $\frac{dx}{dr} = \frac{{E}(x,r)_{x}}{E(x,r)_{r}} = - \frac{\sigma}{\sqrt{D}}\nabla_x \log p_\sigma(x)$ and the framework degenerates to diffusion models.

Now if we consider the repulsive forces among a set of (uniformly weighted) particles with the same anchor variables $r$, $\{(x_i, r)\}_{i=1}^n$, only the electric field in the $x$ coordinate changes (the component in the $r$ coordinate is zero). Denote the new electric field in $x$ component as $\hat{E}_x$:

\begin{align*}
    \hat{E}(x_i,r)_x &= \underbrace{E(x_i,r)_x}_{\text{attractive force by data}} + \underbrace{\frac{1}{S_{N+D-1}(1)} \frac{1}{n-1}\sum_{j\not= i}\frac{x_j-x_i}{(\|x_j-x_i\|^2)^{\frac{N+D}{2}}}}_{\text{repulsive force between particles}} 
\end{align*}

The corresponding new ODE for the $i$-th particle is 
\begin{align*}
    \frac{dx_i}{dr} &= \frac{\hat{E}(x_i,r)_{x}}{E(x_i,r)_{r}}\\
    &= {\frac{{E}(x_i,r)_{x}}{E(x_i,r)_{r}}} + \frac{\frac{1}{S_{N+D-1}(1)} \frac{1}{n-1}\sum_{j\not= i}\frac{x_j-x_i}{(\|x_j-x_i\|^2)^{\frac{N+D}{2}}}}{E(x_i,r)_{r}}\\
    &=\underbrace{\frac{{E}(x_i,r)_{x}}{E(x_i,r)_{r}}}_{\text{predicted by pre-trained models}} + \underbrace{\frac{ \frac{1}{n-1}\sum_{j\not= i}\frac{x_j-x_i}{(\|x_j-x_i\|^2)^{\frac{N+D}{2}}}}{\int \frac{r}{(\|x-y\|^2+r^2)^{\frac{N+D}{2}}}{p}({y}) d{y}}}_{\text{particle guidance}}\\
    &=\frac{{E}(x_i,r)_{x}}{E(x_i,r)_{r}} + \frac{ \frac{1}{n-1}\sum_{j\not= i}\frac{x_j-x_i}{\|x_j-x_i\|^{N+D}}}{\frac{S_{N+D-1}}{r^{D-1}S_{D-1}}p_r(x_i)}
\end{align*}

where $p_r$ is the intermidate distribution, and $S_n$ is the surface area of $n$-sphere. Clearly, the direction of the guidance term can be regarded as the sum of the gradient of $N+D$-dimensional Green's function $G(x,y)\propto 1/||x-y||^{N+D-2}$, up to some scaling factors:
\begin{align*}
\nabla_{x_i}G(x_i,x_j)=\frac{x_i-x_j}{\|x_j-x_i\|^{{N+D}}}
\end{align*}

\subsection{Combinatorial Analysis of Synthetic Experiments} \label{app:combinatorial}

\begin{proposition}
Let us have a random variable taking a value equiprobably between N distinct bins. The expectation of the proportion of bins discovered (i.e. sampled at least once) after N samples is $1-(\frac{N-1}{N})^N$ which tends to $1-1/e$ as $N$ tends to infinity.
\end{proposition}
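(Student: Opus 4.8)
The plan is a standard indicator-variable argument combined with linearity of expectation. Let the $N$ samples be i.i.d.\ uniform on the $N$ bins. For each bin $k \in \{1, \dots, N\}$, introduce the indicator random variable $X_k = \mathbbm{1}\{\text{bin } k \text{ is sampled at least once}\}$, and let $X = \sum_{k=1}^N X_k$ be the number of distinct bins discovered. The proportion of bins discovered is then $X/N$, and the quantity we want is $\mathbb{E}[X/N] = \tfrac{1}{N}\sum_{k=1}^N \mathbb{E}[X_k]$.

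First I would compute $\mathbb{E}[X_k]$ for a fixed $k$. Since $\mathbb{E}[X_k] = \Pr[\text{bin } k \text{ hit at least once}] = 1 - \Pr[\text{bin } k \text{ never hit}]$, and each of the $N$ independent draws misses bin $k$ with probability $\tfrac{N-1}{N}$, we get $\Pr[\text{bin } k \text{ never hit}] = \left(\tfrac{N-1}{N}\right)^N$, hence $\mathbb{E}[X_k] = 1 - \left(\tfrac{N-1}{N}\right)^N$. By symmetry this value is the same for every $k$, so $\mathbb{E}[X/N] = \tfrac{1}{N}\cdot N\cdot\left(1 - \left(\tfrac{N-1}{N}\right)^N\right) = 1 - \left(\tfrac{N-1}{N}\right)^N$, which is the claimed formula. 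The only point worth flagging is that the $X_k$ are \emph{not} independent (knowing some bins were hit shifts the mass available for others), but linearity of expectation does not require independence, so nothing is lost.

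For the asymptotic statement, I would write $\left(\tfrac{N-1}{N}\right)^N = \left(1 - \tfrac{1}{N}\right)^N$ and invoke the standard limit $\left(1 - \tfrac{1}{N}\right)^N \to e^{-1}$ as $N \to \infty$ (e.g.\ by taking logarithms and using $N\log(1 - 1/N) = -1 + O(1/N) \to -1$). Therefore the expected proportion of discovered bins converges to $1 - 1/e$. I do not anticipate any real obstacle here; the entire argument is elementary, and the only care needed is the explicit per-bin miss probability and the observation that dependence among the $X_k$ is irrelevant for the expectation.
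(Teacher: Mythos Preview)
Your proof is correct and follows essentially the same approach as the paper: both use indicator variables for each bin, apply linearity of expectation, compute the per-bin miss probability as $((N-1)/N)^N$, and then take the limit via the logarithm. The only cosmetic difference is that the paper applies L'H\^opital's rule for the limit whereas you invoke the standard expansion $N\log(1-1/N)\to -1$; the substance is identical.
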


\begin{proof}
Let $n_i$ be the number of samples in the i$^{\text{it}}$ bin. The proportion of discovered bins is equal to:
\begin{align*}
    \frac{1}{N} \mathbb{E}\bigg[\sum_{i=1}^N I_{n_i>0}\bigg] = \frac{1}{N} \sum_{i=1}^N \mathbb{E}[I_{n_i>0}] = P(n_i>0) = 1 - P(n_i=0) = 1 - \bigg(\frac{N-1}{N}\bigg)^N
\end{align*}

In the limit of $N\rightarrow\infty$:
\begin{align*}
    \lim_{N\rightarrow \infty} 1 - \bigg(\frac{N-1}{N}\bigg)^N = 1-y = 1-\frac{1}{e}
\end{align*}

since (using L'Hôpital's rule):
\begin{align*}
    \log y = \lim_{N\rightarrow \infty} N \log \bigg(\frac{N-1}{N}\bigg) = \lim_{N\rightarrow \infty} \frac{\log (\frac{N-1}{N})}{1/N} = \lim_{N\rightarrow \infty} \frac{1/N^2}{-1/N^2} = -1
\end{align*}
\end{proof}
Therefore for $N=10$ we would expect $10*(1-0.9^{10}) \approx 6.51$, which corresponds to what is observed in the empirical results of Section \ref{sec:synthetic_exp}.

\begin{proposition} 
\textbf{(Coupon collector's problem)}
Let us have a random variable taking a value equiprobably between N distinct bins. The expectation of the number of samples required to discover all the bins is $N \, H_N$, where $H_N$ is the N$^{\text{th}}$ harmonic number, which is $\Theta(N \log N)$ as $N$ tends to infinity.
\end{proposition}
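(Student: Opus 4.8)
The plan is to decompose the total number of samples by the standard ``stages'' argument. Let $T$ denote the number of samples needed until every one of the $N$ bins has been discovered, and for $i = 1, \dots, N$ let $T_i$ be the number of draws taken while exactly $i-1$ distinct bins have already been seen (equivalently, the waiting time between the appearance of the $(i-1)$-th new bin and the $i$-th new bin). Then $T = \sum_{i=1}^N T_i$, and by linearity of expectation $\mathbb{E}[T] = \sum_{i=1}^N \mathbb{E}[T_i]$, so it suffices to compute each $\mathbb{E}[T_i]$.

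Next I would argue that each $T_i$ is geometrically distributed. Conditioned on having already seen some set of $i-1$ bins, each subsequent draw is uniform over the $N$ bins and independent of the past, so it lands in a not-yet-seen bin with probability $p_i = (N-(i-1))/N$; by symmetry this probability does not depend on \emph{which} $i-1$ bins have been seen. Hence $T_i$ is geometric with parameter $p_i$ and $\mathbb{E}[T_i] = 1/p_i = N/(N-i+1)$. Substituting and reindexing with $k = N-i+1$ gives
\[
\mathbb{E}[T] = \sum_{i=1}^N \frac{N}{N-i+1} = N \sum_{k=1}^N \frac{1}{k} = N\, H_N,
\]
which is the claimed exact formula.

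Finally, to obtain the asymptotic statement I would bound $H_N$ by comparison with $\int 1/x\,\mathrm{d}x$: namely $\ln(N+1) \le H_N \le 1 + \ln N$, so that $H_N = \Theta(\log N)$ and therefore $N H_N = \Theta(N \log N)$. The only point requiring care—and the ``crux'' of the argument—is the independence/symmetry claim justifying that $T_i$ is genuinely geometric with the stated parameter; once that is stated cleanly, the rest is a routine computation and an integral estimate.
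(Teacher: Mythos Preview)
Your proof is correct and follows essentially the same ``stages'' decomposition as the paper: both split the total waiting time into increments $T_i$ (the paper's $L_{j|j-1}$) between successive new bins, compute each expectation as $N/(N-i+1)$, and sum to $N H_N$. The only cosmetic difference is that you invoke the geometric distribution directly while the paper derives $\mathbb{E}[L_{j|j-1}]$ via a one-step recursion, and you spell out the integral bound for $H_N = \Theta(\log N)$ where the paper simply asserts it.
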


\begin{proof}
Len $L_{i|j}$ be the number of samples it takes to go from $j$ to $i$ bins discovered. We are therefore interested in $\mathbb{E}[L_{N|0}]$.
\begin{align*}
    \mathbb{E}[L_{j|j-1}] = \frac{N - (j-1)}{N} * 1 + \frac{j-1}{N} \; [\mathbb{E}[L_{j|j-1}] + 1] \implies \mathbb{E}[L_{j|j-1}] = \frac{N}{N- (j-1)}
\end{align*}
Therefore:
\begin{align*}
    \mathbb{E}[L_{N|0}] = \mathbb{E}\bigg[\sum_{j=1}^N L_{j|j-1}\bigg] = \sum_{j=1}^N \mathbb{E}[ L_{j|j-1}] = N \sum_{j=1}^N \frac{1}{N-(j-1)} = N \sum_{j=1}^N \frac{1}{j} = N \, H_N
\end{align*}
Since $H_N$ is $\Theta(\log N)$, then $\mathbb{E}[L_{N|0}] $ is $\Theta(N \log N)$.
\end{proof}

For $N=10$, $\mathbb{E}[L_{10|0}] \approx 29.29$.

\begin{rebuttal}

\section{Discussions}

\subsection{Suboptimality of I.I.D. sampling} \label{app:suboptimalityIID}

\paragraph{Combinatorial analysis} 
Let us consider again the setting of a random variable taking a value equiprobably between N distinct bins. In Fig. \ref{fig:samples_vs_modes}, we plot the expected number of modes (or bins) captured as a function of the number of steps as derived in Appendix \ref{app:combinatorial}. This suggests that the region where I.I.D. sampling is considerably suboptimal in these regards (capturing the modes of the distribution) is when the number of samples is comparable with the number of modes: if the number of modes is much larger than I.I.D. samples are still likely to capture separate modes, and if the number of samples is much larger the number of uncaptured modes is likely to be small.

\begin{figure}[h]
  \centering
  \includegraphics[width=0.7\textwidth]{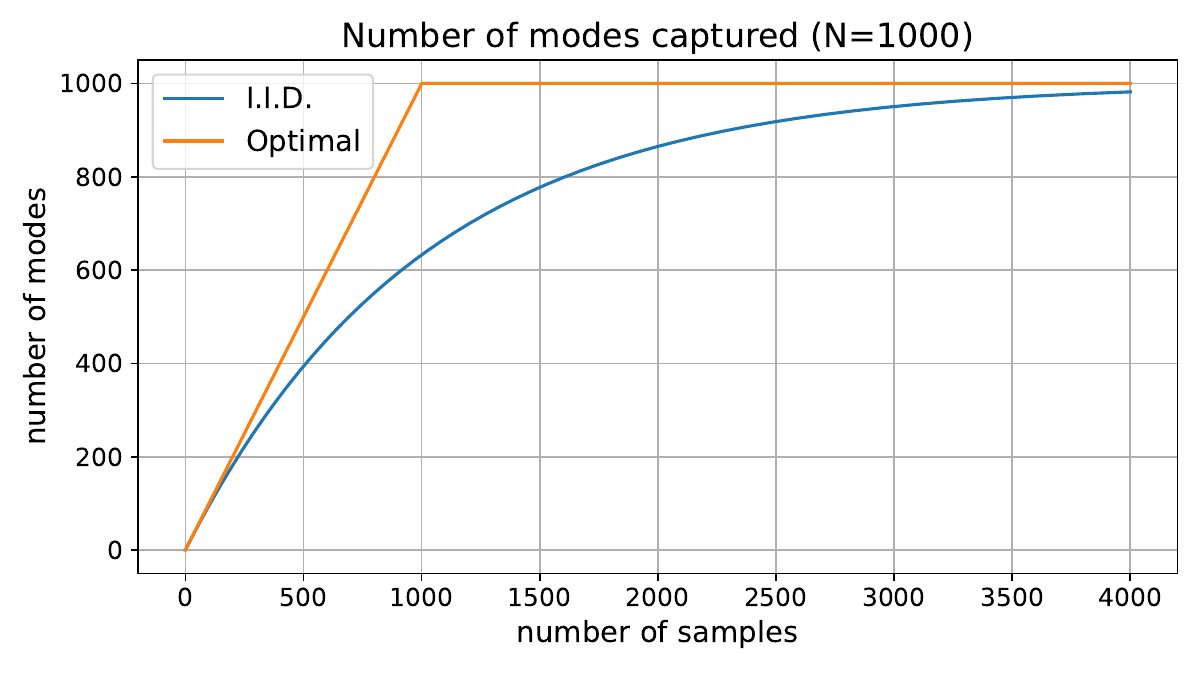}
  \caption{ \rebut{Plot of the functions $y = N (1-(\frac{N-1}{N})^x)$ and $y = \min(x,N)$ for $N=1000$ representing, respectively, the expected number of modes captured by I.I.D. sampling a distribution with N equiprobable modes and the optimal coverage.}
  }
  \label{fig:samples_vs_modes}
\end{figure}

\paragraph{Literature} There is a vast literature that has studied the suboptimality of I.I.D. sampling from a statistical perspective and proposed different solutions. For example, in the field of Bayesian inference, antithetic sampling \citep{geweke1988antithetic} has been proposed as a way to reduce the variance of Monte Carlo estimates. Determinantal Point
Processes \citep{kulesza2012determinantal} have also been widely studied as a technique to improve the diversity of samples.

\subsection{Runtime and Memory Overhead}

The runtime overhead due to the addition of particle guidance to the inference procedure largely depends on the potential that is used and on the size of the set $n$. In particular, while computing kernels tends to be significantly cheaper than running the score model, the number of kernel computations scales quadratically with $n$ while the number of score model executions at every step is $n$.

In terms of memory, particle guidance does not create any significant overhead since the kernel computations can be aggregated per element. However, when running inference on GPU if $n$ is larger than the batch size that fits the GPU memory when running score model inference, the data might have to be moved back and forth between RAM and GPU memory to enable synchronous steps for particle guidance, causing further overhead. 

In the case of our experiments on Stable Diffusion, the number of samples extracted (4) does not create significant overhead. However, in the setting of conformer generation on DRUGS different molecules can have very different numbers of conformers with some even having thousands of them. For efficiency, we therefore cap the size of $n$ in particle guidance to 128 and perform batches of 128 samples until the total number of conformers is satisfied.

\begin{wrapfigure}{r}{0.35\textwidth} 
  \centering
  \vspace{-20pt}
  \includegraphics[width=0.3\textwidth]{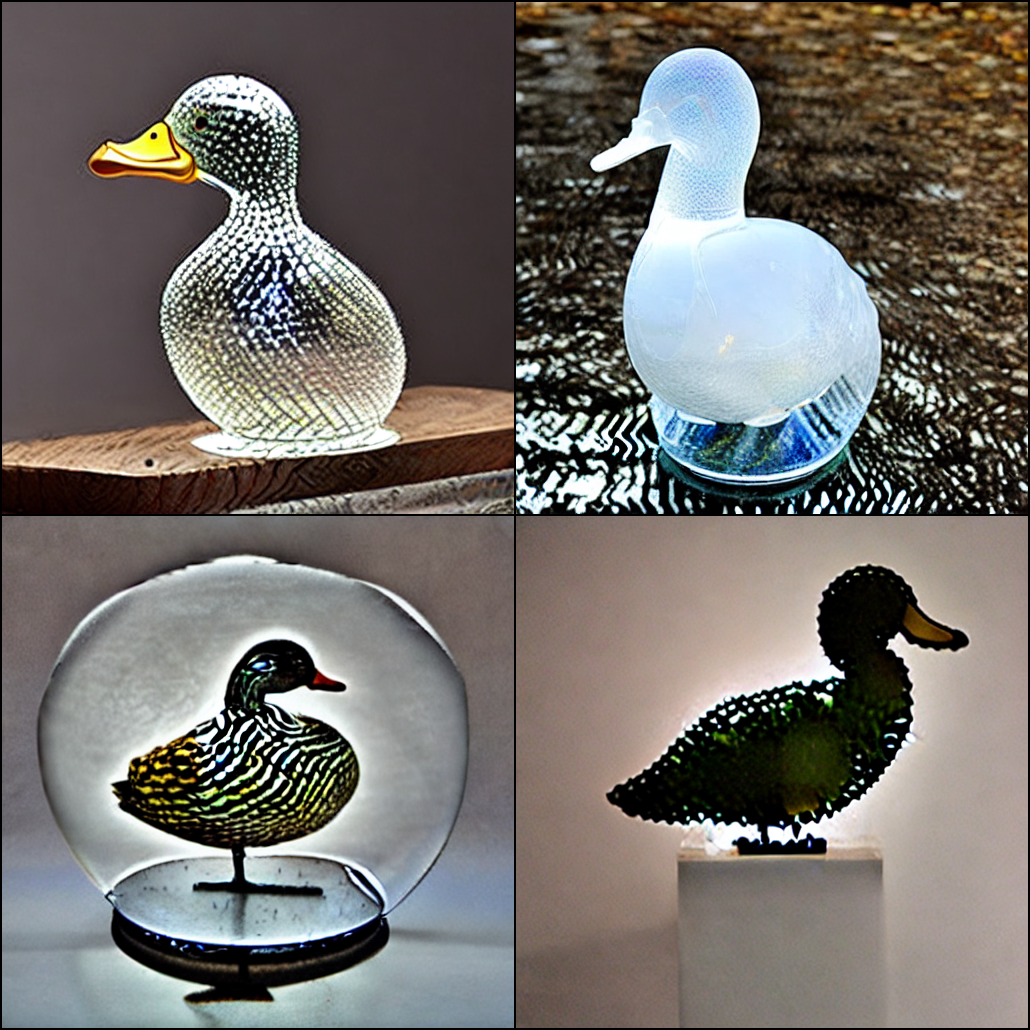} 
  \vspace{-5pt}
  \caption{Example of a too large PG weight causing aliasing artifacts.} \label{fig:aliasing}
\end{wrapfigure}
\paragraph{Other limitations} A badly chosen potential or in general one with a guidance weight too high can overly change the marginal likelihood and negatively impact the sampling quality. As an example in Fig. \ref{fig:aliasing} the use of a particle guidance parameter four times larger than the best one caused various aliasing artifacts on the image. 

However, in fixed potential particle guidance, its parameters can be easily fit at inference time, therefore, it is typically relatively inexpensive to test the optimal value of the guidance for the application of interest and the chosen potential. This leads to the prevention of too high guidance weights and the simple detection of bad potential when optimal weight is close to 0.

\end{rebuttal}

\section{Synthetic Experiments} \label{sec:synthetic_exp}

To show visually the properties of particle guidance and its effect on sample efficiency, we use a two-dimensional Gaussian mixture model. In particular, we consider a mixture of $N=10$ identical Gaussian distributions whose centers are equally spaced over the unit circle and whose variance is 0.005. These Gaussians form a set of approximately disjoint equal bins. As we are interested in inference, no model is trained and the true score of the distribution is given as an oracle.

As expected if one runs normal I.I.D. diffusion, the sample falls in one bin at random. Taking ten samples, as shown in Fig.~\ref{fig:synthetic_experiment}, some of them will fall in the same bin and some bins will be left unfound. The empirical experiments confirm the combinatorial analysis (see Appendix \ref{app:combinatorial}) which shows that the expected number of bins discovered with $N=10$ samples is only 6.5 and it takes on average more than 29 samples to discover all the bins. 

\begin{figure}[h]
  \centering
  \includegraphics[width=0.9\textwidth]{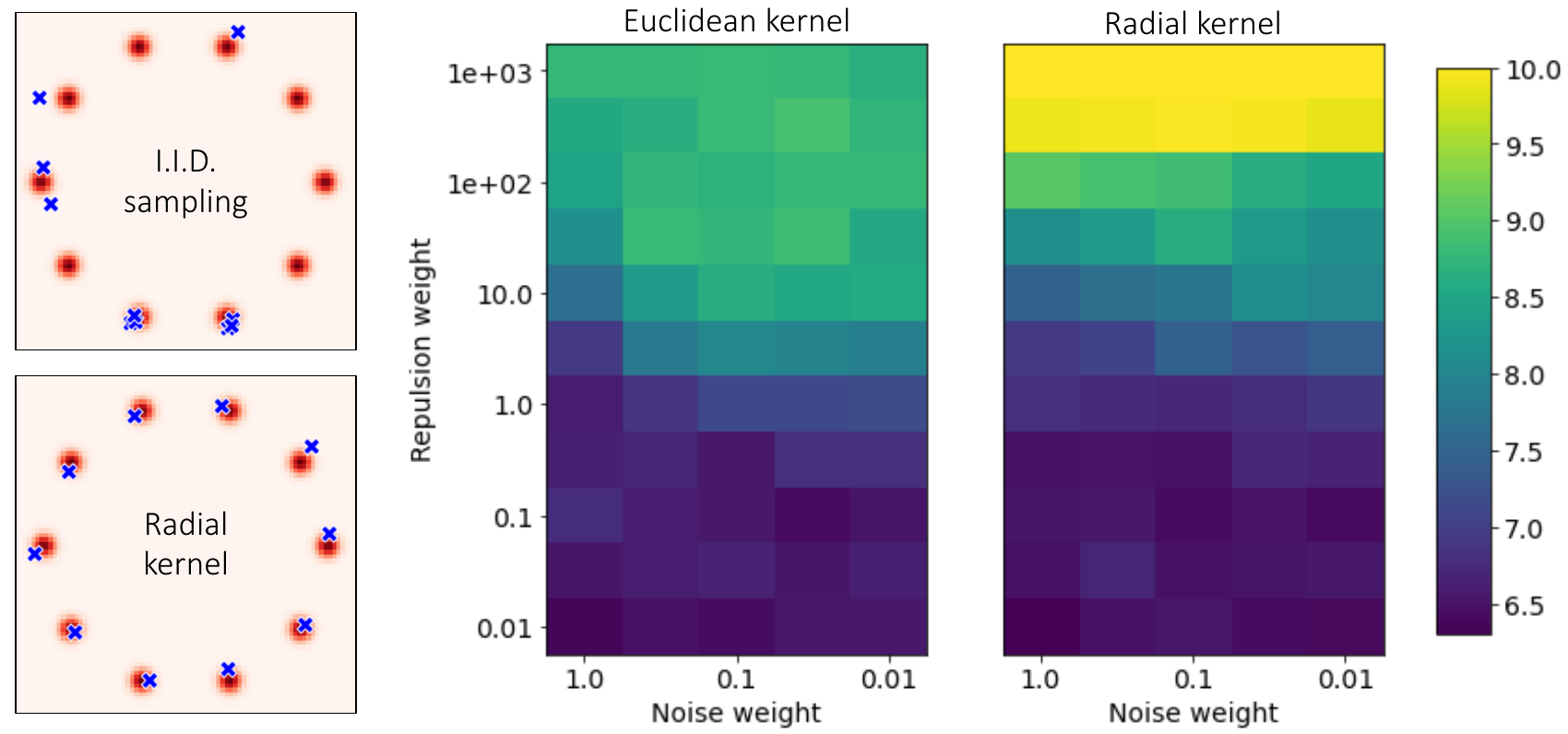}
  \caption{Left: plot of random samples (in blue) of the two-dimensional Gaussian mixture distribution (density depicted in red). I.I.D. samples often recover the same modes, while particle guidance with a radial kernel captures all modes. Right: average number of modes recovered with 10 samples as a function of the weight given by the diffusion noising terms and the potential weight when using an RBF kernel with Euclidean and radial distances respectively. As expected with little weight to the potential terms we obtain approximately 6.5 modes recovered in line with the I.I.D. diffusion performance. Further increasing the potential weight on the Euclidean creates instability. 
  }
  \label{fig:synthetic_experiment}
\end{figure}

In many settings this behavior is suboptimal, and we would want our model to discover all the modes of the distribution with as few samples as possible. Using the straightforward application of particle guidance with a simple RBF kernel based on the squared Euclidean distance, we are able to encourage diversity obtaining, on average, the discovery of nearly 9 bins on average from 10 samples (see Fig.~\ref{fig:synthetic_experiment}). 

Intrinsic diffusion models \citep{corso2023modeling} have shown significant improvements when diffusion models operate on the submanifold where the data lies. Similarly, here building into the kernel the degrees of freedom over which the diversity lies helps the particle guidance to effectively distribute the samples over the distribution. We know that the different modes are distributed in a radial fashion, and thus we build an RBF kernel based on the angle difference w.r.t. the origin. Using this lower-dimensional kernel enables us to consistently discover all modes of the distribution. 
This submanifold observation aligns well with the practice of methods such as metadynamics where the kernels are defined over some lower-dimensional collective variables of interest.

\section{Molecular Conformer Generation Experiments} \label{app:mol_details}

\subsection{Dataset, Metrics and Baselines}

\paragraph{Dataset} We evaluate the method for the task of molecular conformer generation using the data from GEOM \citep{axelrod2022geom}, a collection of datasets that has become the standard benchmark for this task in the machine learning community. In particular, we focus on GEOM-DRUGS, the largest, most pharmaceutically relevant and widely used dataset in GEOM which consists of 304k drug-like molecules. For each of these molecules, an ensemble of conformers was generated with metadynamics in CREST \citep{pracht2020automated}, a procedure that gives accurate structures but is prohibitive in high-throughput applications, costing an average of 90 core-hours per molecule. To be able to use existing pretrained models we rely on the experimental setup and splits introduced by \cite{ganea2021geomol} and used by several papers afterward. As we do not retrain the score model, we do not use the training set, instead, we finetune the inference parameters for \textit{particle guidance} and the other ablation experiments on a random subset of 200 molecules out of 30433 from the validation set. 

\paragraph{Evaluation metrics} To evaluate conformer generation methods we want to test the ability of a method to generate a set of conformers that are both individually good poses (precision) and as a set cover the distribution of true conformers (recall). For this we employ the same evaluation setup and metrics used by several papers in the field starting from \cite{ganea2021geomol}. In this setup, methods are asked to generate twice as many conformers as in the original ensemble and then the so-called Average Minimum RMSD (AMR) and Coverage (COV) are measured for precision (P) and recall (R). For $K=2L$ let $\{C^*_l\}_{l \in [1, L]}$ and $\{C_k\}_{k \in [1, K]}$ be respectively the sets of ground truth and generated conformers:
\begin{equation}
\begin{aligned}
\text{COV-R} &:= \frac{1}{L}\, \bigg\lvert\{ l \in [1..L]: \exists k \in [1..K], \rmsd(C_k, C^*_l) < \delta \, \bigg\rvert\\
\text{AMR-R} &:= \frac{1}{L} \sum_{l \in [1..L]}  \min_{ k\in [1..K]} \rmsd(C_k, C^*_l)
\end{aligned}
\end{equation}

where $\delta$ is the coverage threshold (set to 0.75\AA{} for the GEOM-DRUGS experiments). Swapping ground truth and generated conformers in the equations above we obtain the precision metrics.

\paragraph{Baselines} As baselines we report the performances of previous methods as measured by \citet{ganea2021geomol} and \citet{jing2022torsional}. \rebut{Cheminformatics conformer prediction methods rely on rules and heuristics derived from chemical structures to fix the local degrees of freedom and then use a combination of search and template techniques to set the more flexible degrees of freedom like torsion angles. The most accurate and widely used such methods include the open-source software RDKit ETKDG \citep{landrum2013rdkit} and the commercial tool OMEGA \citep{trott2010autodock, hawkins2012conformer}.}

\rebut{Before the already introduced Torsional Diffusion \citep{jing2022torsional}, a number of other machine learning approaches were proposed for this task, among these: GeoMol \citep{ganea2021geomol} uses a GNN to sample directly from a random seed local structures around each atom and then torsion angles, GeoDiff \citep{xu2021geodiff} defines a equivariant diffusion model over atom coordinates, and CGCF \citep{shi2021learning} learns an energy-based model over the space of pariwise distance matrices.}

\subsection{Particle Guidance Setup}

\paragraph{Reverse diffusion} As discussed in Section \ref{sec:conf_gen}, we applied particle guidance to torsional diffusion, as this is currently considered to be state-of-the-art and it uses, like most ML-based methods before, I.I.D. sampling during inference. We define the particle guidance kernel to operate directly on the implicit hypertorus manifold where torsional diffusion defines the diffusion process, this, at the same time, makes the kernel lower dimensional and it involves a minor modification to the existing inference procedure. The reverse diffusion process that we apply is:
\begin{align*}
    d\bm{\tau}_i = & \underbrace{\frac{1}{2} \; g^2(T-t) \;  \rvs(\bm{\tau}_i, L, T-t) \;  dt}_{\text{diffusion ODE}} + \underbrace{\beta_{T-t} \bigg(\frac{1}{2} \;  g^2(t) \;  \rvs(\bm{\tau}_i, L, T-t) \;  dt + g(T-t) \;  d \rvw\bigg)}_{\text{Langevin diffusion SDE}}  \\ & + \underbrace{\frac{\gamma_{T-t}}{2}  \; g^2(T-t) \;   \nabla_{\bm{\tau}_i} \log  \Phi_{T-t}(\bm{\tau}_1, ..., \bm{\tau}_n)dt}_{\text{particle guidance}} 
\end{align*}
where we follow the idea from \citet{karras2022elucidating} of dividing the different components of the reverse diffusion and tuning their individual parameters. The potential was chosen to be:
\begin{align} \label{eq:torsional_kernel}
    \log \Phi_t(\bm{\tau}_1, ... \bm{\tau}_n) = -  \frac{\alpha_t}{2n} \sum_{i,j} k_t(\bm{\tau}_i, \bm{\tau}_j) \quad \text{where} \quad k_t(\bm{\tau}_i,\bm{\tau}_j) = \exp(-\frac{\lnorm \bm{\tau}_i - \bm{\tau}_j \rnorm ^2}{h_t}) 
\end{align}
where the difference of each torsion angle is computed to be in $(-\pi, \pi]$. $\alpha_t$, $\beta_t$, $\gamma_t$ and $h_t$ are inference hyperparameters that are 'logarithmically interpolated' between two end values chosen with hyperparameter tuning (using $T=1$), e.g. $\alpha_t = \exp (t \log(\alpha_1) + (1-t) \log(\alpha_0))$.

\paragraph{Permutation invariant kernel} Since the kernel operates on the torsion angle differences it is naturally invariant to SE(3) transformations, i.e. translations or rotations, of the conformers in space. Moreover, as illustrated in Fig. 2 of \cite{jing2022torsional}, while exact torsion angle values depend on arbitrary choices of neighbors or orientation (to compute the dihedral angle) differences in torsion angles are invariant to these choices. However, one 
transformation that the kernel in Equation~\ref{eq:torsional_kernel} is not invariant to are permutations of the atoms in the molecule. Many of these permutations lead to isomorphic molecular graphs where however each of the torsion angles may now refer to a different dihedral. To maximize the sample efficiency we make the kernel invariant to these by taking the minimum over the values of the kernel under all such permutations:
\begin{align*}
    k'_t(\bm{\tau}_i,\bm{\tau}_j) = \min_{\pi \in \Pi} k_t(\bm{\tau}_i,P_{\pi}\bm{\tau}_j)
\end{align*}
where $\Pi$ is the set of all permutations that keep the graph isomorphic (but do change the torsion angles assignment) and $P_{\pi}$ is the permutation matrix corresponding to some permutation $\pi$. In practice, these isomorphisms can be precomputed efficiently, however, to limit the overhead from applying the kernel multiple times, whenever there are more than 32 isomorphic graphs leading to a change in dihedral assignments we subsample these to only keep 32.

\begin{rebuttal}
\paragraph{Batch size} The number of conformers one has to generate is given, for every molecule, by the benchmark (2L) and can vary significantly. To avoid significant computational overheads, we use batches of up to $n=128$ until all the conformers for that particular molecule are generated.   
\end{rebuttal}

\subsection{Full Results}

We provide in Table \ref{tab:conformer_gen_full} again the results reported in Table \ref{tab:conformer_gen} with the additions of other baselines and ablation experiments. In particular, as ablations, on top of running non-invariant particle guidance i.e. without the minimization over the permutations described in the previous section, we also test low-temperature sampling, another variation of the inference-time procedure that has been proposed for diffusion models that we applied as described below.

\paragraph{Low-temperature sampling. } Low-temperature sampling of some distribution $p(\xx)$ with temperature $\lambda^{-1}<1$ consists of sampling the distribution $p_{\lambda}(\xx) \propto p(\xx)^{\lambda}$. This helps mitigate the overdispersion problem by concentrating more on high-likelihood modes and trading off sample diversity for quality.
Exact low-temperature sampling is intractable for diffusion models, however, various approximation schemes exist. We use an adaptation of Hybrid Langevin-Reverse Time SDE proposed by \citet{ingraham2022illuminating}:
\begin{equation*}
    d \boldsymbol{\tau} = -\bigg(\lambda_t + \frac{\lambda \; \psi}{2} \bigg)\; \ss_{\theta, G}(C, t) \; g^2(t) \; dt + 
    \sqrt{1+\psi} \; g(t) \; d\mathbf{w} \quad
    \text{with } \lambda_t = \frac{\sigma_d + \sigma_t}{\sigma_d + \sigma_t/\lambda}
\end{equation*}
where $\lambda$ (the inverse temperature), $\psi$ and $\sigma_d$ are parameters that can be tuned.

\begin{table}[h]
\caption{Quality of generated conformer ensembles for the GEOM-DRUGS test set in terms of Coverage (\%) and Average Minimum RMSD (\AA). Minimizing recall and precision refers to the hyperparameter choices that minimize the respective median AMR on the validation set.  }\label{tab:conformer_gen_full}
     \vspace{5pt}
 \centering
    \begin{tabular}{l|cccc|cccc} \toprule & \multicolumn{4}{c|}{Recall} & \multicolumn{4}{c}{Precision}  \\
                   & \multicolumn{2}{c}{Coverage $\uparrow$} & \multicolumn{2}{c|}{AMR $\downarrow$} & \multicolumn{2}{c}{Coverage $\uparrow$} & \multicolumn{2}{c}{AMR $\downarrow$} \\
 Method & Mean & Med & Mean & Med & Mean & Med & Mean & Med \\ \midrule
 RDKit ETKDG & 38.4 & 28.6 & 1.058 & 1.002 & 40.9 & 30.8 & 0.995 & 0.895 \\
 OMEGA & 53.4 & 54.6 & 0.841 & 0.762 & 40.5 & 33.3 & 0.946 & 0.854 \\

 CGCF & 7.6 & 0.0 & 1.247 & 1.225 & 3.4 & 0.0 & 1.837 & 1.830 \\

 GeoMol & 44.6 & 41.4 & 0.875 & 0.834 & 43.0 & 36.4 & 0.928 & 0.841 \\
 GeoDiff & 42.1 & 37.8 & 0.835 & 0.809 & 24.9 & 14.5 & 1.136 & 1.090 \\ \midrule
 Torsional Diffusion & 72.7 & 80.0 & 0.582 & 0.565 & 55.2 & 56.9 & 0.778 & 0.729     \\   \midrule
 TD w/ low temperature  & & & & & \\
 - minimizing recall & 73.3 & 77.7 & 0.570 & 0.551 & 66.4 & 73.8 & 0.671 & 0.613     \\ 
- minimizing precision & 68.0 & 69.6 & 0.617 & 0.604 & 72.4  & 81.3 & 0.607 & 0.548 \\ \midrule
 TD w/ non-invariant PG & & & & & \\
 - minimizing recall & 75.8 & 81.5 & \textbf{0.542} & \textbf{0.520} & 66.2 & 72.4 & 0.668 &  0.607    \\
 - minimizing precision & 58.9 & 56.8 &  0.730 & 0.746 & \textbf{76.8} & \textbf{88.8} &  \textbf{0.555} & \textbf{0.488}     \\ \midrule

 TD w/ invariant PG & & & & & \\
 - minimizing recall & \textbf{77.0} & \textbf{82.6} & 0.543 & \textbf{0.520} & 68.9 & 78.1 & 0.656 & 0.594     \\
 - minimizing precision & 72.5 & 75.1 & 0.575 & 0.578 & 72.3 & 83.9 & 0.617 & 0.523     \\\bottomrule
\end{tabular}
\end{table}

\section{Experimental Details on Stable Diffusion} \label{app:image_details}

\subsection{Setup}
\label{app:img-setup}
In this section, we detail the experimental setup on Stable Diffusion. We replace the score function~($\nabla_{\rvx_i} \log p_{t'} (\rvx_i)$) in the original particle guidance formula~(\Eqref{eq:guidance_kernel}) with the classifier-free guidance formula~\citep{Ho2022ClassifierFreeDG}:
$$\Tilde{s}(\rvx_i, c, t') = w\nabla_{\rvx_i} \log p_{t'} (\rvx_i, c) + (1-w)\nabla_{\rvx_i} \log p_{t'} (\rvx_i) $$
where $c$ symbolizes the text condition, $w \in \mathbb{R}^+$ is the guidance scale, and $\nabla_{\rvx_i} \log p_{t'} (\rvx_i, c) / \log p_{t'} (\rvx_i)$ is the conditional/unconditional scores, respectively. As probability ODE with classifier-free guidance is the prevailing method employed in text-to-image models~\citep{Saharia2022PhotorealisticTD}, we substitute the reverse-time SDE in \Eqref{eq:guidance_kernel} with the marginally equivalent ODE. Assuming that $f(\rvx_i,t')=0$, the new backward ODE with particle guidance is
\begin{align*} 
    d\rvx_i &= \bigg[\frac12 g^2(t') \bigg(\Tilde{s}(\rvx_i, c, t') - \alpha_{t'} \nabla_{\rvx_i} \sum_{j=1}^n k_{t'}(\rvx_i, \rvx_j) \bigg) \bigg] dt 
\end{align*}
Following SVGD~\citep{liu2016stein}, we employ RBF kernel $k_t(\bm{\tau}_i,\bm{\tau}_j) = \exp(-\frac{\lnorm \bm{\tau}_i - \bm{\tau}_j \rnorm ^2}{h_t})$ with $h_t = m_t^2/\log n$, where $m_t$ is the median of particle distances. We implement the kernel both in the original down-sampled pixel space~(the latent of VAE) or the feature space of DINO-VIT-b/8~\citep{caron2021emerging}. Defining the DINO feature extractor as $g_{\textrm{DINO}}$, the formulation in the feature space becomes:
\begin{align*} 
    d\rvx_i &= \bigg[\frac12 g^2(t') \bigg(\Tilde{s}(\rvx_i, c, t')- \alpha_{t'} \nabla_{\rvx_i^0} \sum_{j=1}^n k_{t'}\left(g_{\textrm{DINO}}(\rvx_i^0), g_{\textrm{DINO}}(\rvx_j^0)\right) \bigg) \bigg] dt 
\end{align*}
where we set the input to the DINO feature extractor $g_{\textrm{DINO}}$ to the $\rvx_0$-prediction: $\rvx_i^0=\rvx_i+\sigma(t')^2\Tilde{s}(\rvx_i, c, t')$, as $\rvx_0$-prediction lies in the data manifold rather than noisy images. $\sigma(t)$ is the standard deviation of Gaussian perturbatio kernel given time $t$ in diffusion models. The gradient w.r.t. $\rvx_i^0$ can be calculated by forward-mode auto-diff. We hypothesize that defining Euclidean distance in the feature space is markedly more natural and effective compared to the pixel space, allowing the repulsion in a more semantically meaningful way. Our experimental results in \Figref{fig:quant} corroborate the hypothesis.

To construct the data for evaluation, we randomly sample 500 prompts from the COCO validation set~\citep{Lin2014MicrosoftCC}. For each prompt, we generate a batch of four images. To get the average CLIP score/Aesthetic score versus in-batch similarity score curve, for I.I.D. sampling, we use $w \in \{6, 7.5, 8.5, 9\}$. \rebut{We empirically observed that particle guidance achieved a much lower in-batch similarity score (better diversity) than IID sampling. As diversity typically improves with smaller guidance weights~\citep{Ho2022ClassifierFreeDG, Saharia2022PhotorealisticTD}, we chose a set of smaller guidance weights for I.I.D. sampling to further improve its diversity, keeping it in the relatively similar range as particle guidance. Hence for particle guidance, we use a set of larger guidance scales: $w\in \{7.5, 8, 9, 9.5, 10\}$. Indeed, the experimental results suggest that even though I.I.D. sampling used a smaller guidance weight to promote diversity, its in-batch similarity score was still worse than that of particle guidance.
}. We set the hyper-parameter $\alpha_{t'}$ to $8\sigma(t)$ in particle guidance~(feature) and $30\sigma(t)^2$ in particle guidance~(pixel). We use an Euler solver with 30 NFE in all the experiments.

\subsection{In-batch similarity score}

We propose in-batch similarity score to capture the diversity of a small set of generated samples $\{\rvx_1, \dots, \rvx_n\}$ given a prompt $c$:
\begin{align*}
    \textrm{In-batch similarity score}(\rvx_1, \dots, \rvx_n) = \frac{1}{n(n-1)}\sum_{i\not= j}\frac{g_{\textrm{DINO}}(\rvx_i)^Tg_{\textrm{DINO}}(\rvx_j)}{||g_{\textrm{DINO}}(\rvx_i)||_2||g_{\textrm{DINO}}(\rvx_j)||_2}
\end{align*}
To save memory, we use the DINO-VIT-s/8~\citep{caron2021emerging} as the feature extractor $g_{\textrm{DINO}}$.

\section{Extended Image Samples} \label{app:image_samples}

\label{app:extended}
In \Figref{fig:vis1}-\Figref{fig:vis5}, we visualize samples generated by the I.I.D. sampling process, particle guidance in the pixel space, and particle guidance in the DINO feature space, on four different prompts. For \Figref{fig:vis1}-\Figref{fig:vis3}, we select the prompts in \citet{somepalli2023diffusion}, with which Stable Diffusion is shown to replicate content directly from the LAION dataset. We also include the generated samples of SVGD-guidance, in which we replace the particle guidance term with SVGD formula~(\Eqref{eq:svgd}). In \Figref{fig:svgd}, we observe that SVGD generally leads to blurry images when increasing the guidance scale $\alpha_t$. This is predictable as the guidance term in SVGD involves a weighted sum of scores of nearby samples, which will steer the samples toward the mean of samples. 

\begin{rebuttal}
    
Further, we provide at this URL \url{https://anonymous.4open.science/r/pg-images/} the (non-cherry picked) images generated from the first 50 text prompts of COCO using the same hyperparameters, the same random seed. We set the guidance scale to commonly used $w=8$. When comparing visually the differences are subjective, but, at our eyes, many of the generations with PG show clear improvements in sample diversity. For example, for the first prompt “A baby eating a cake with a tie around his neck with balloons in the background.”, conventional iid sampling tends to generate a brown-haired white child, while particle guidance generates diverse depictions of babies with varying hair and skin colors.
\end{rebuttal}

\newpage
\begin{figure*}
\subfigure[I.I.D.]{\includegraphics[width=0.32\textwidth]{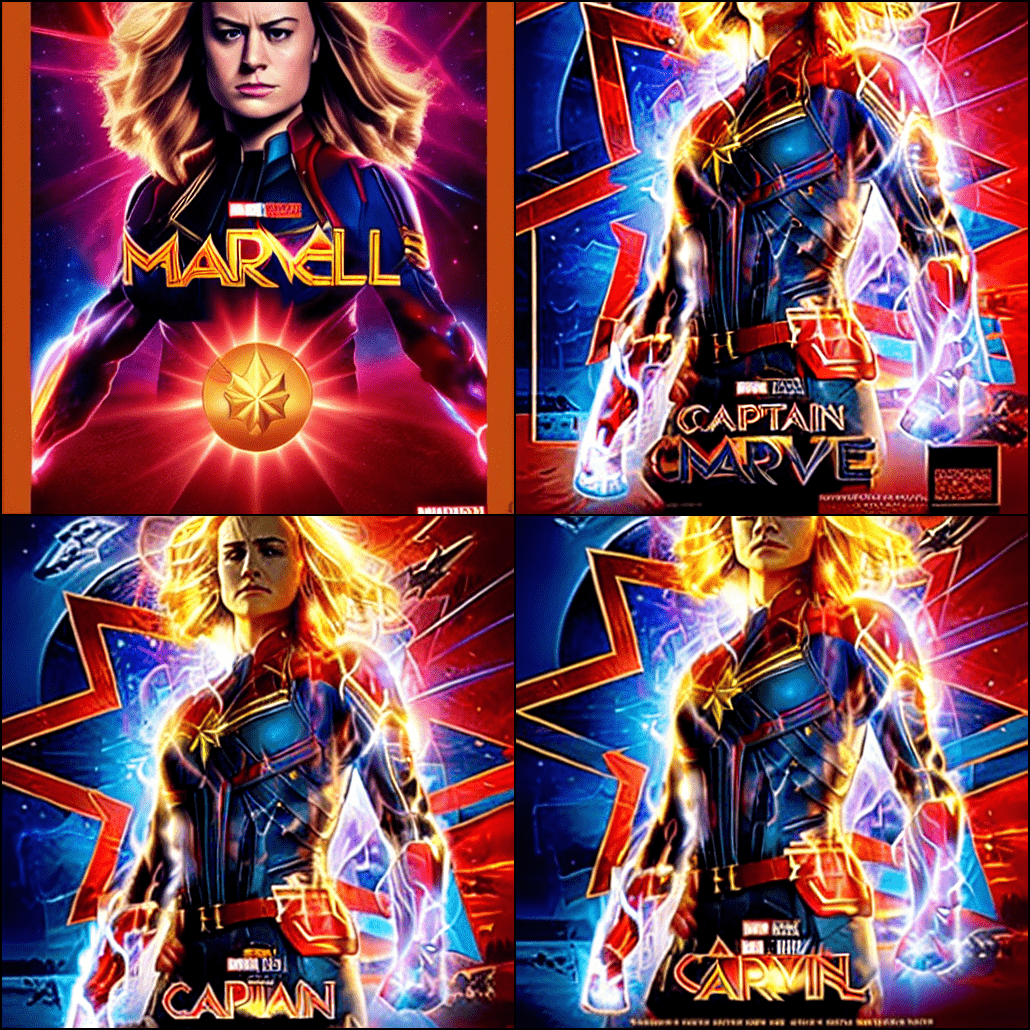}}\hfill
\subfigure[particle guidance (pixel)]{\includegraphics[width=0.32\textwidth]{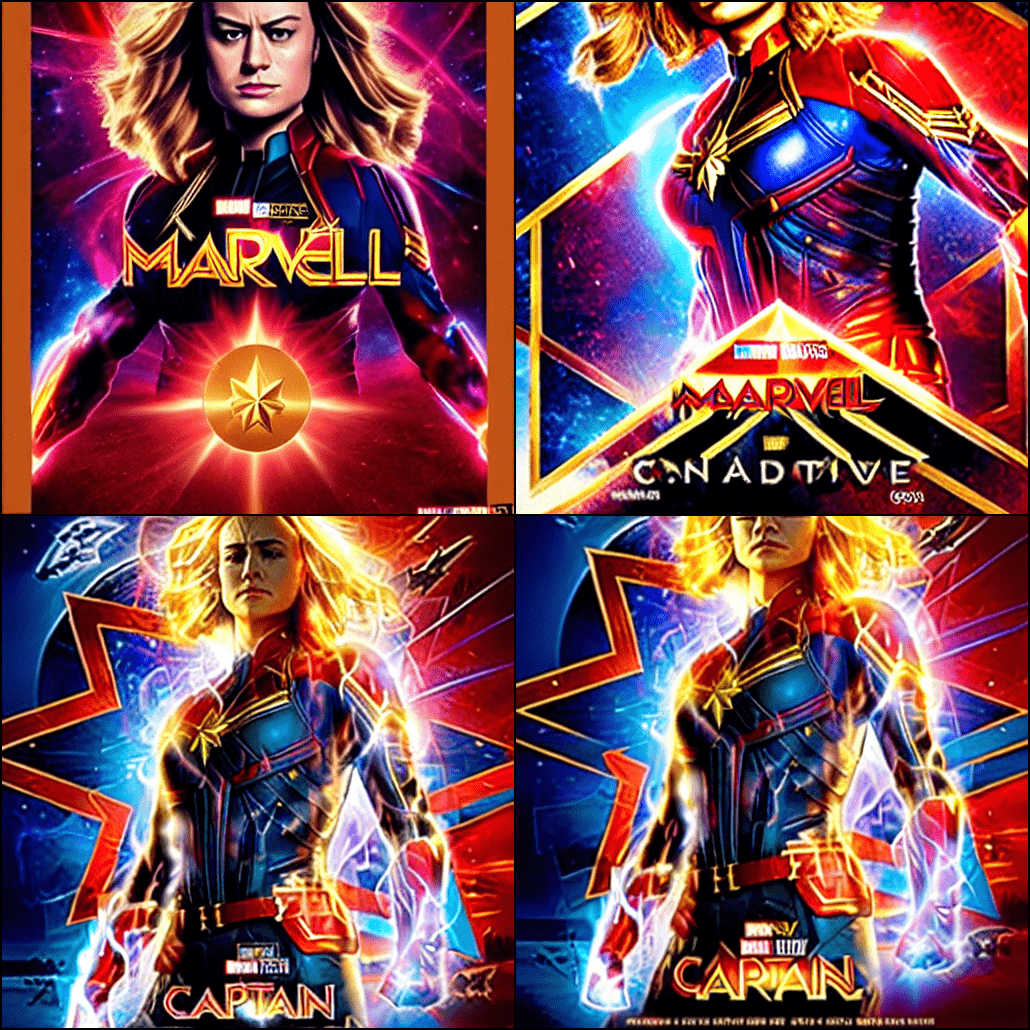}}\hfill
\subfigure[particle guidance (feature)]{\includegraphics[width=0.32\textwidth]{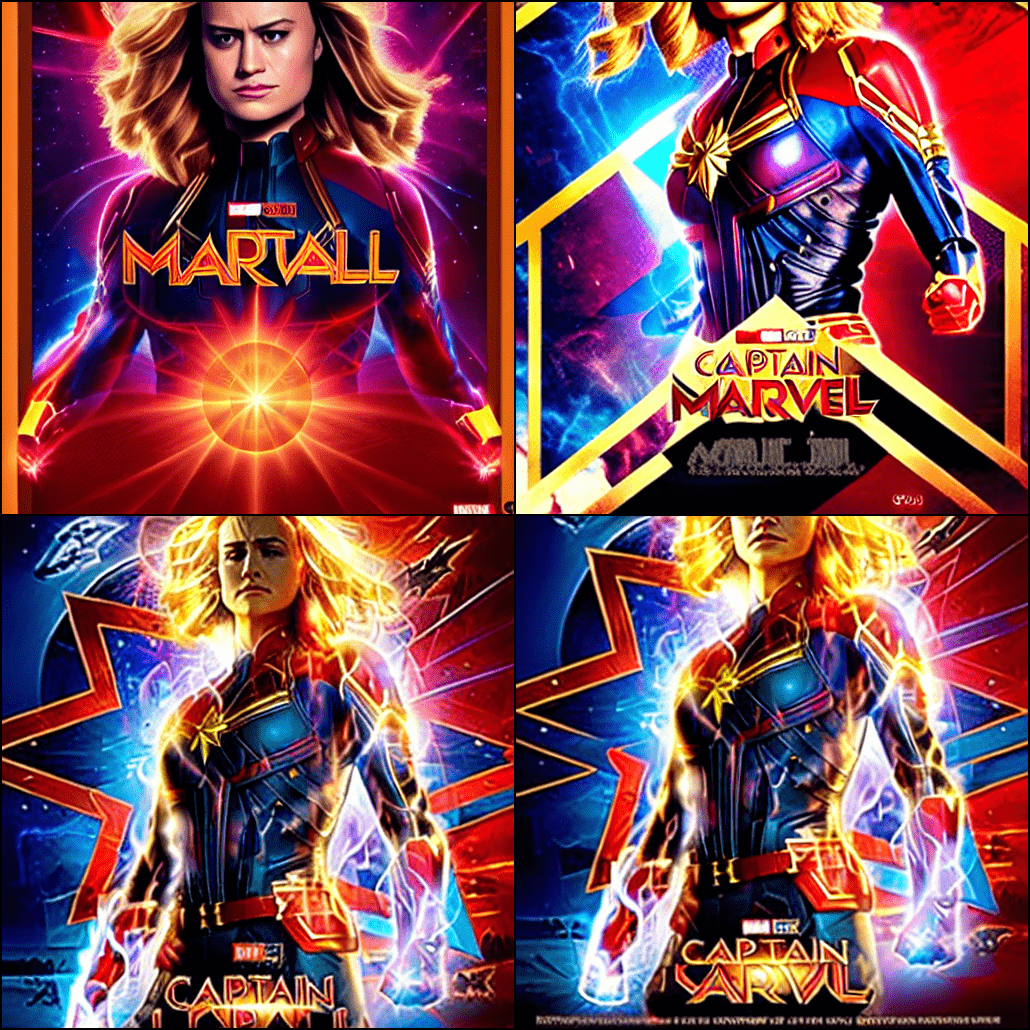}}
\caption{Text prompt: Captain Marvel Exclusive Ccxp Poster Released Online By Marvel}
\label{fig:vis1}
\end{figure*}

\begin{figure*}

\subfigure[I.I.D.]{\includegraphics[width=0.32\textwidth]{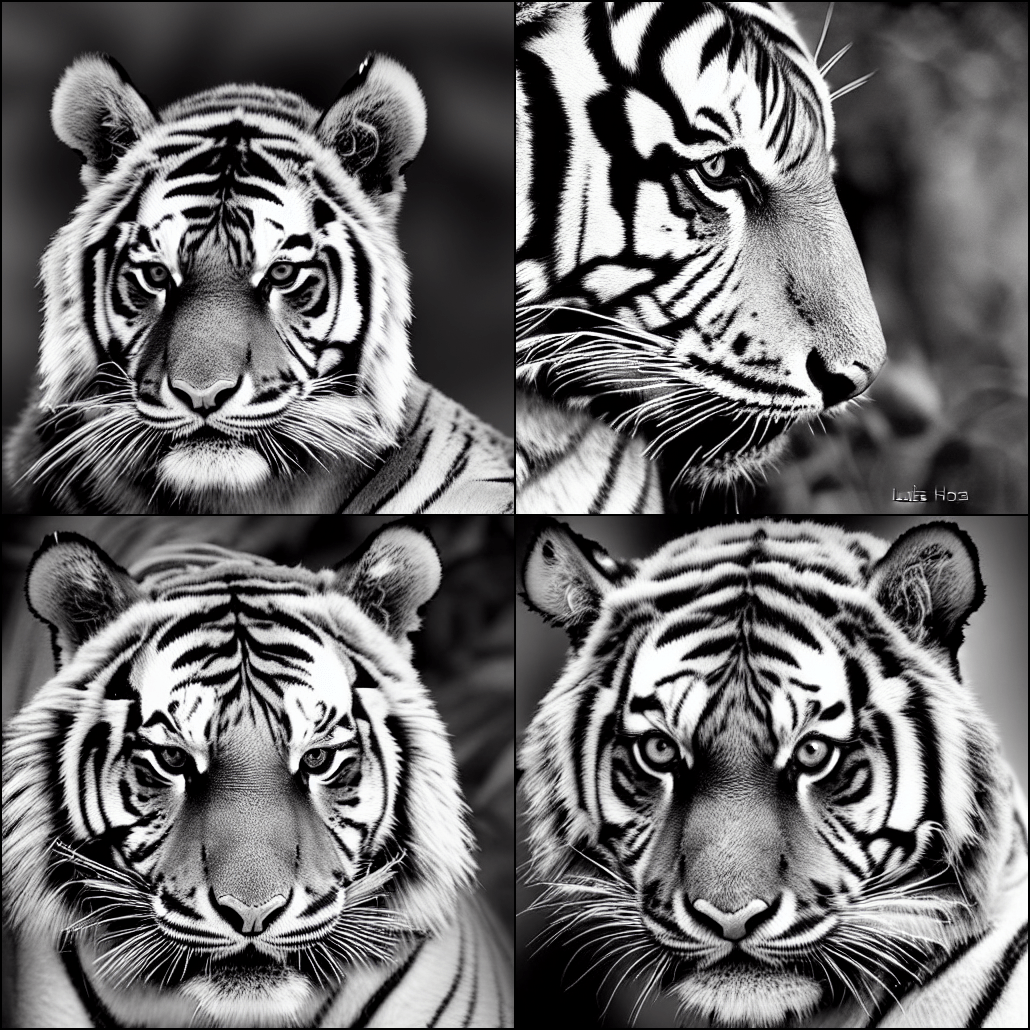}}\hfill
\subfigure[particle guidance (pixel)]{\includegraphics[width=0.32\textwidth]{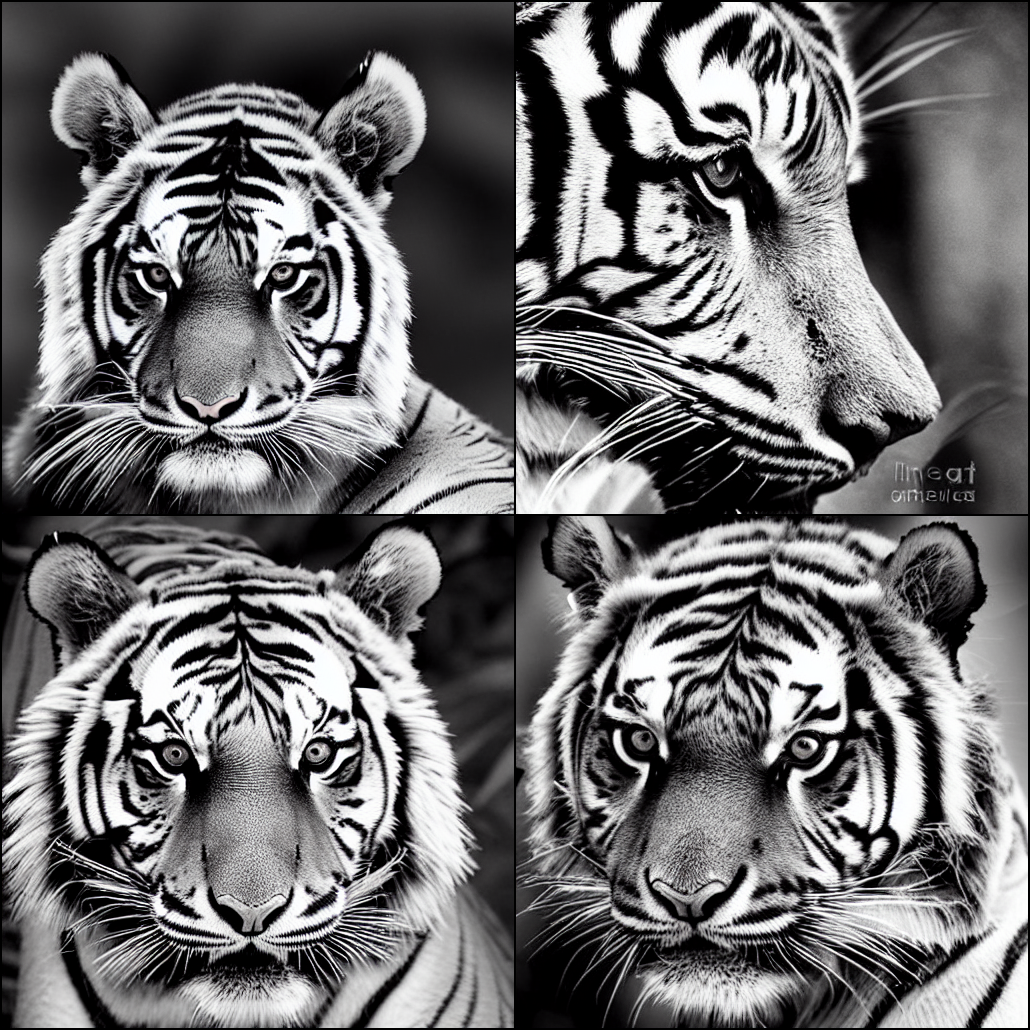}}\hfill
\subfigure[particle guidance (feature)]{\includegraphics[width=0.32\textwidth]{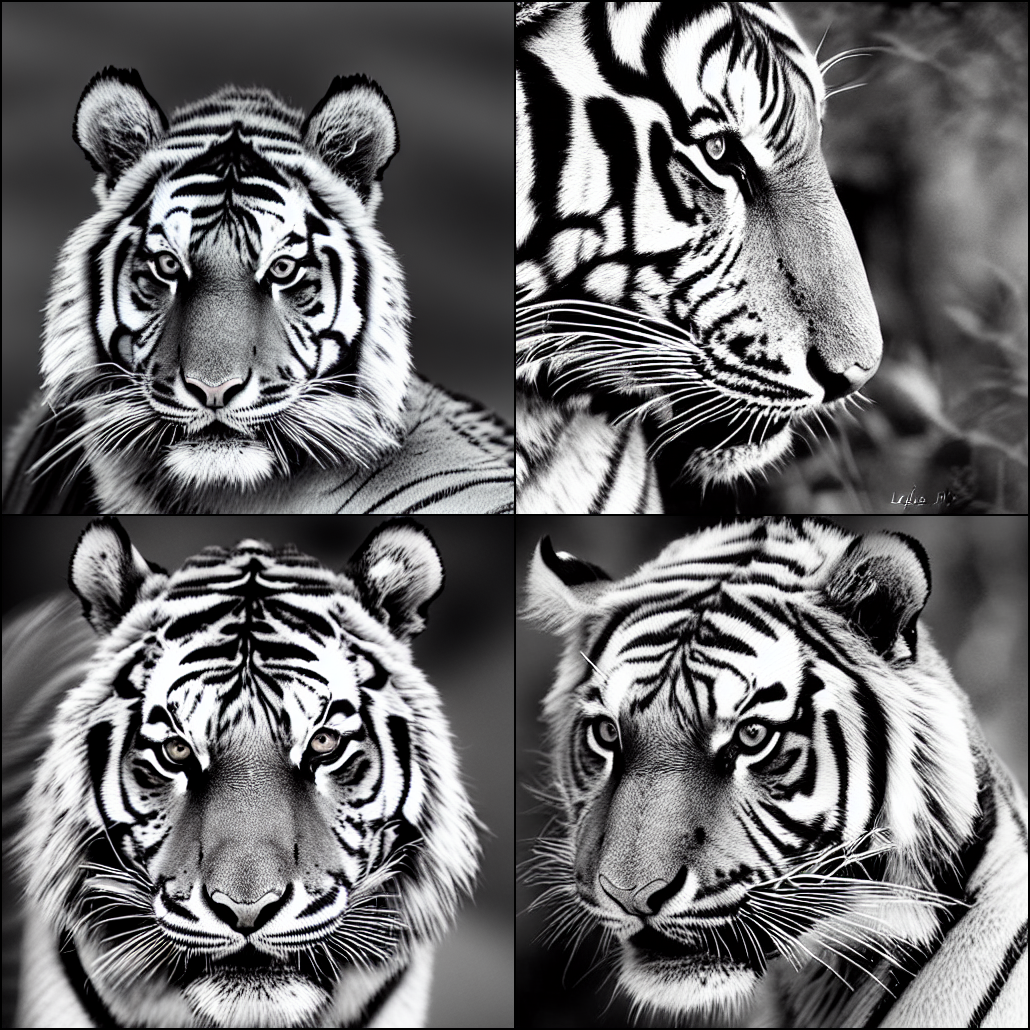}}
\caption{Text prompt: Portrait of Tiger in black and white by Lukas Holas}
\label{fig:vis2}
\end{figure*}

\begin{figure*}

\subfigure[I.I.D.]{\includegraphics[width=0.32\textwidth]{images/TERASSE_DDIM_restart_False_steps_30_w_8.0_seed_6_coeff_0.0_dino_False-min.png}}\hfill
\subfigure[particle guidance (pixel)]{\includegraphics[width=0.32\textwidth]{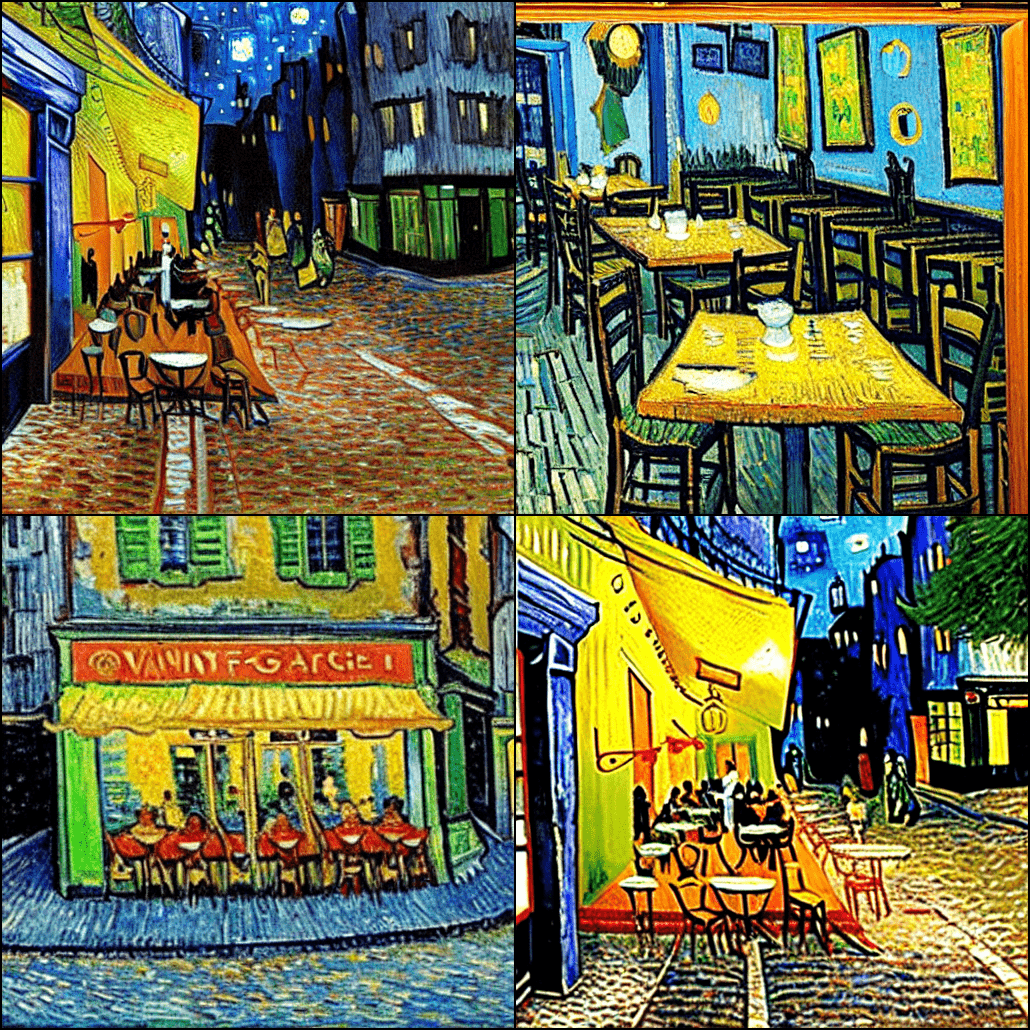}}\hfill
\subfigure[particle guidance (feature)]{\includegraphics[width=0.32\textwidth]{images/TERASSE_DDIM_restart_False_steps_30_w_8.0_seed_6_coeff_5.0_dino_True-min.png}}
\caption{Text prompt: VAN GOGH CAFE TERASSE copy.jpg}
\label{fig:vis3}
\end{figure*}

\begin{figure*}

\subfigure[I.I.D.]{\includegraphics[width=0.32\textwidth]{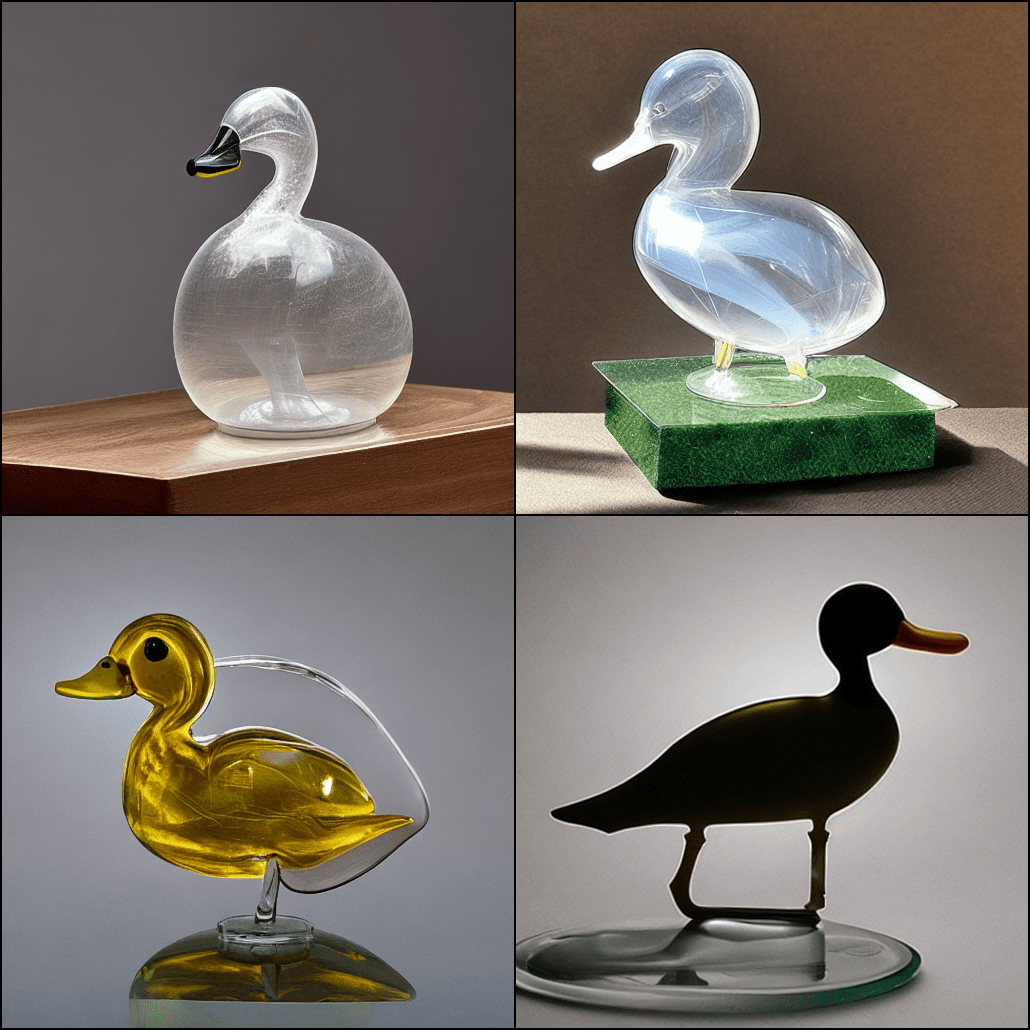}}\hfill
\subfigure[particle guidance (pixel)]{\includegraphics[width=0.32\textwidth]{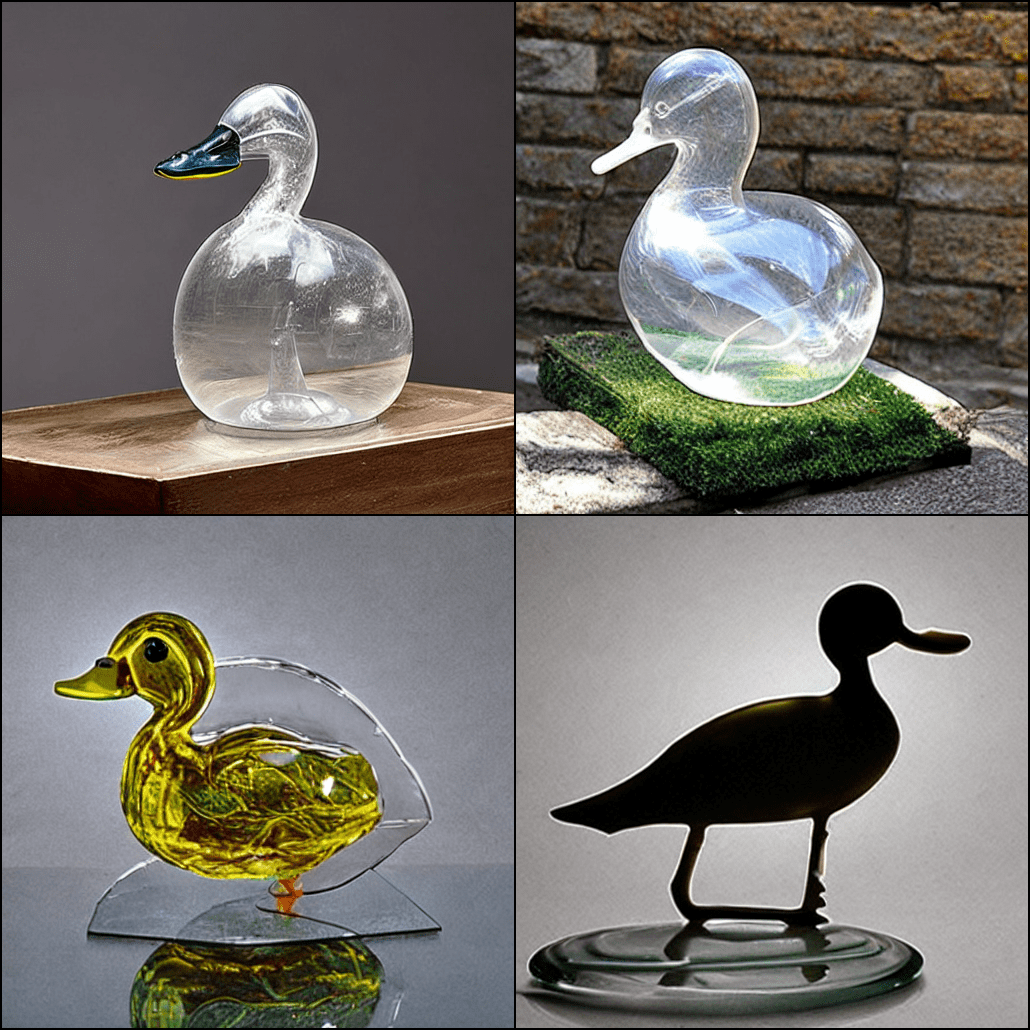}}\hfill
\subfigure[particle guidance (feature)]{\includegraphics[width=0.32\textwidth]{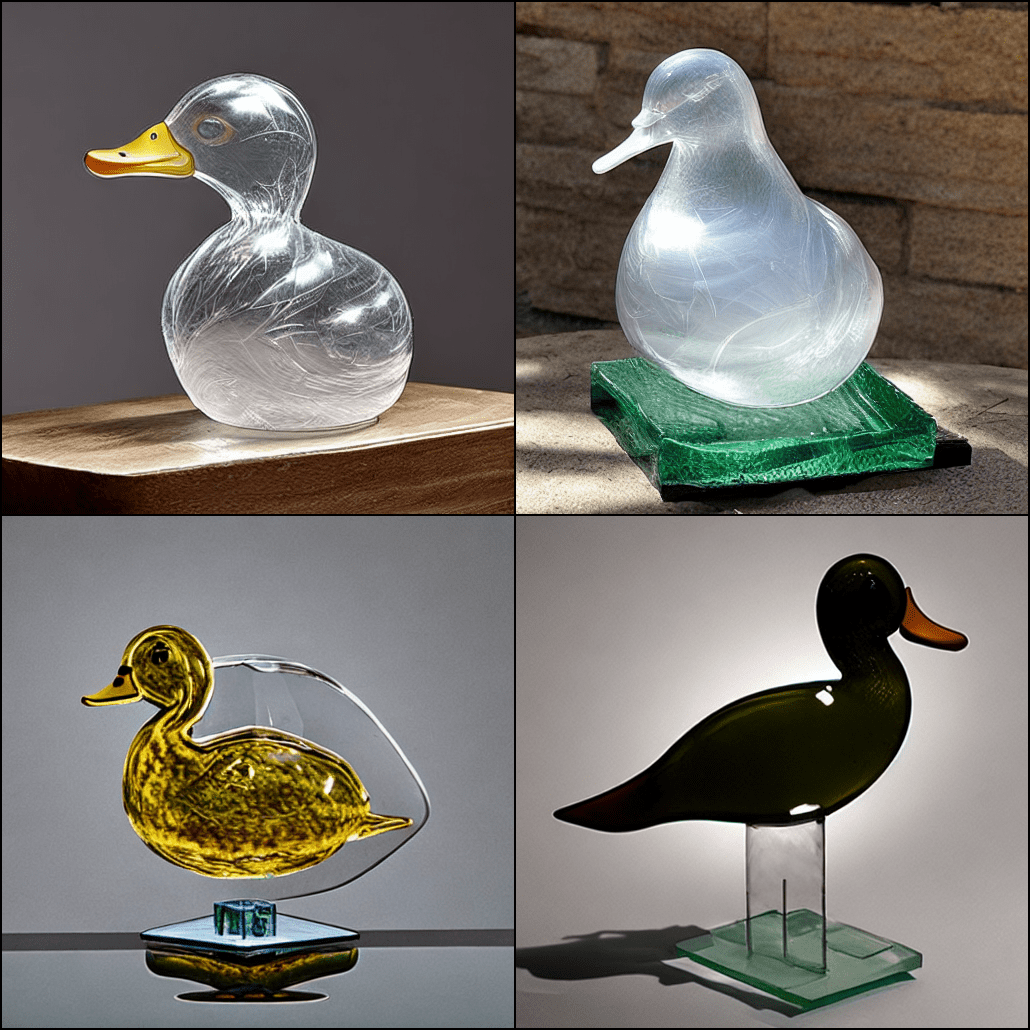}}
\caption{Text prompt: A transparent sculpture of a duck made out of glass}
\label{fig:vis4}
\end{figure*}

\begin{figure*}
\subfigure[I.I.D.]{\includegraphics[width=0.32\textwidth]{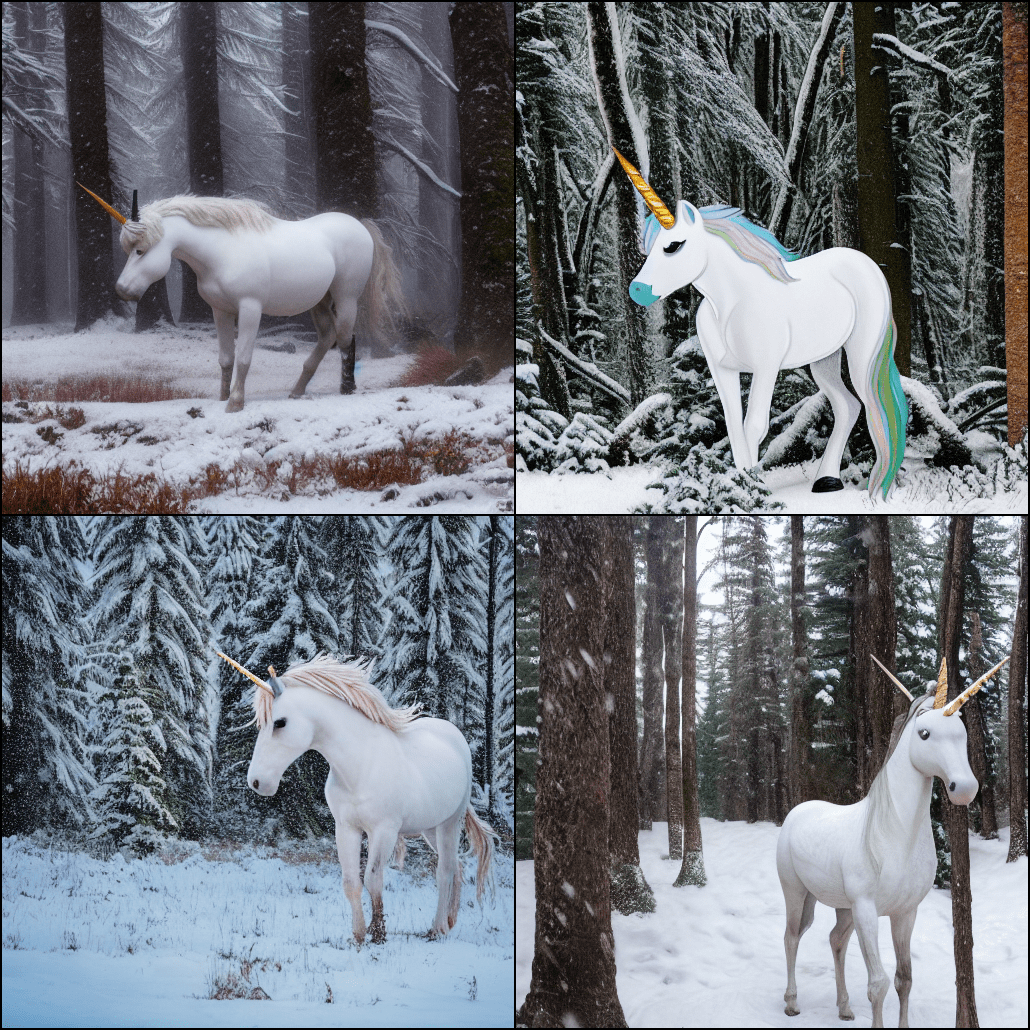}}\hfill
\subfigure[particle guidance (pixel)]{\includegraphics[width=0.32\textwidth]{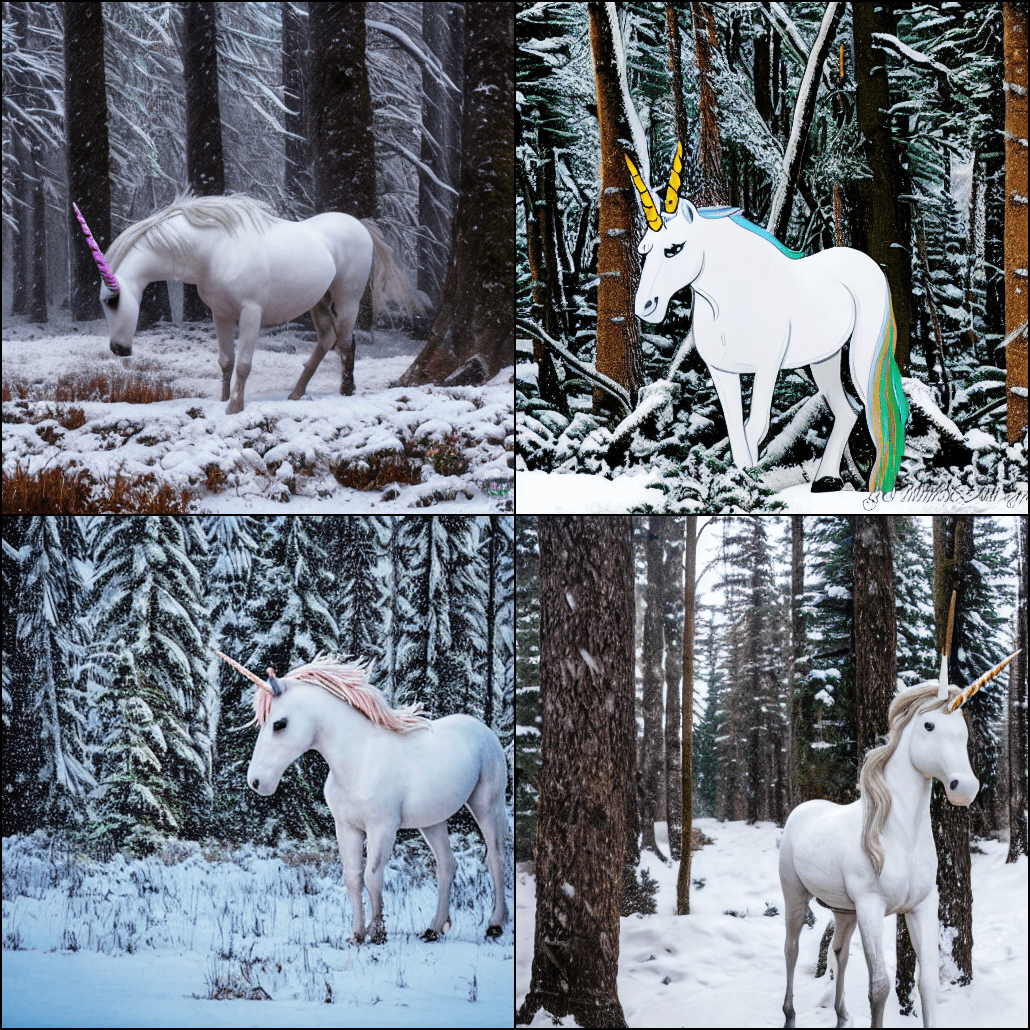}}\hfill
\subfigure[particle guidance (feature)]{\includegraphics[width=0.32\textwidth]{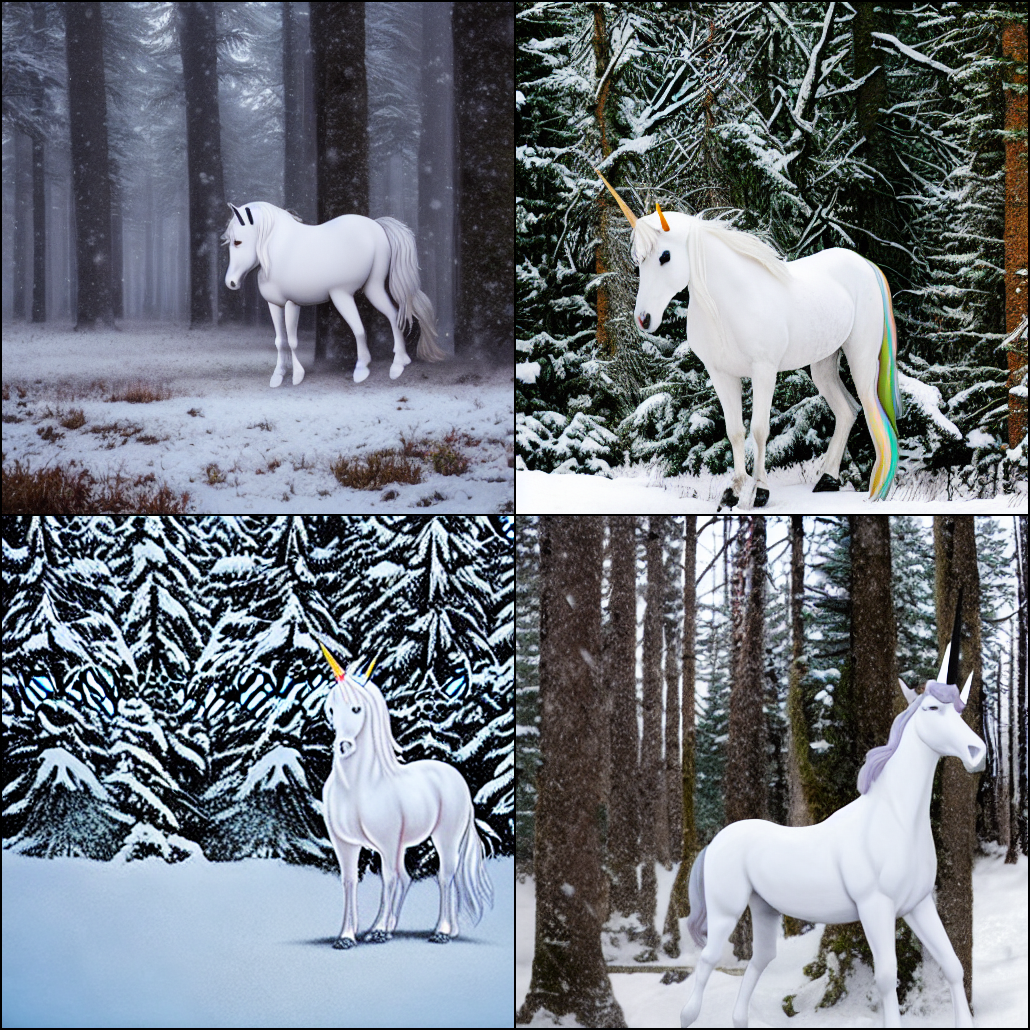}}
\caption{Text prompt: A unicorn in a snowy forest}
\label{fig:vis5}
\end{figure*}

\begin{figure*}
\subfigure[$\alpha_t=0.1$]{\includegraphics[width=0.24\textwidth]{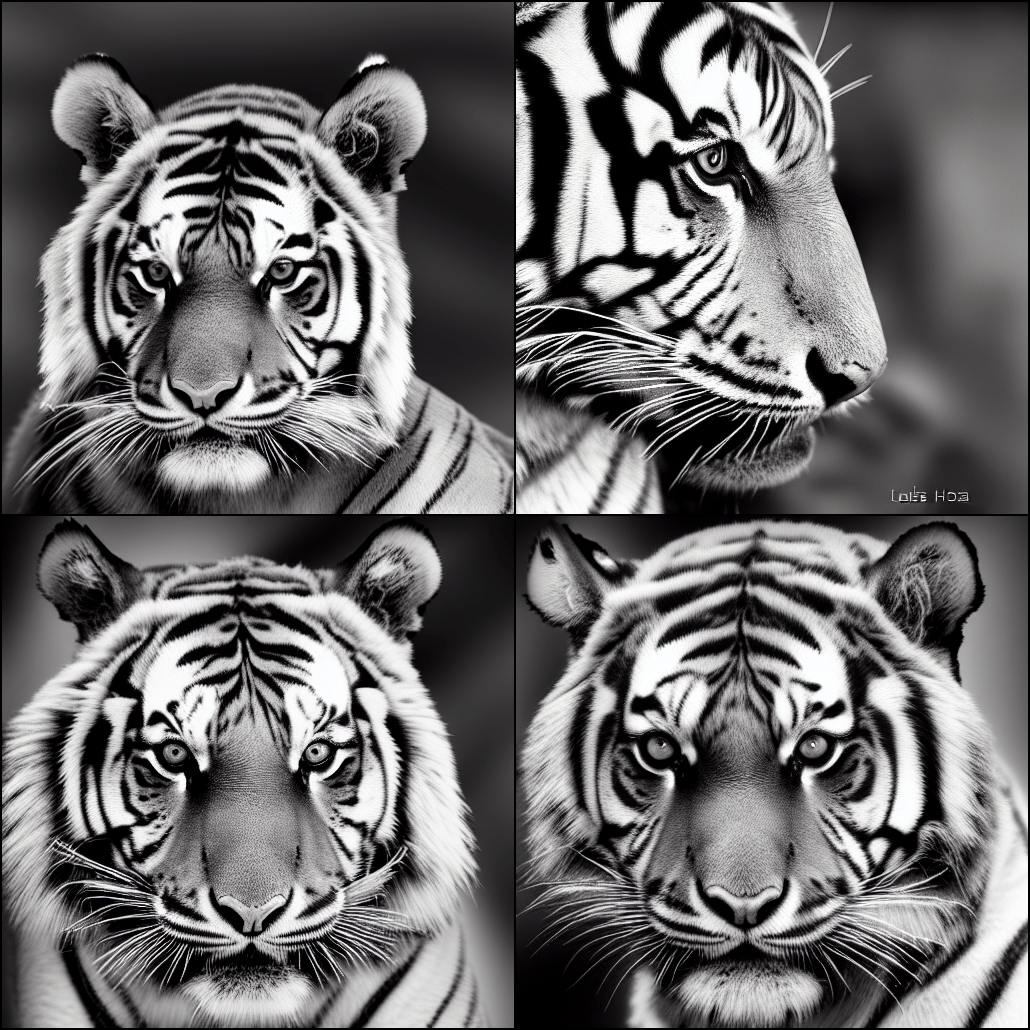}}\hfill
\subfigure[$\alpha_t=0.3$]{\includegraphics[width=0.24\textwidth]{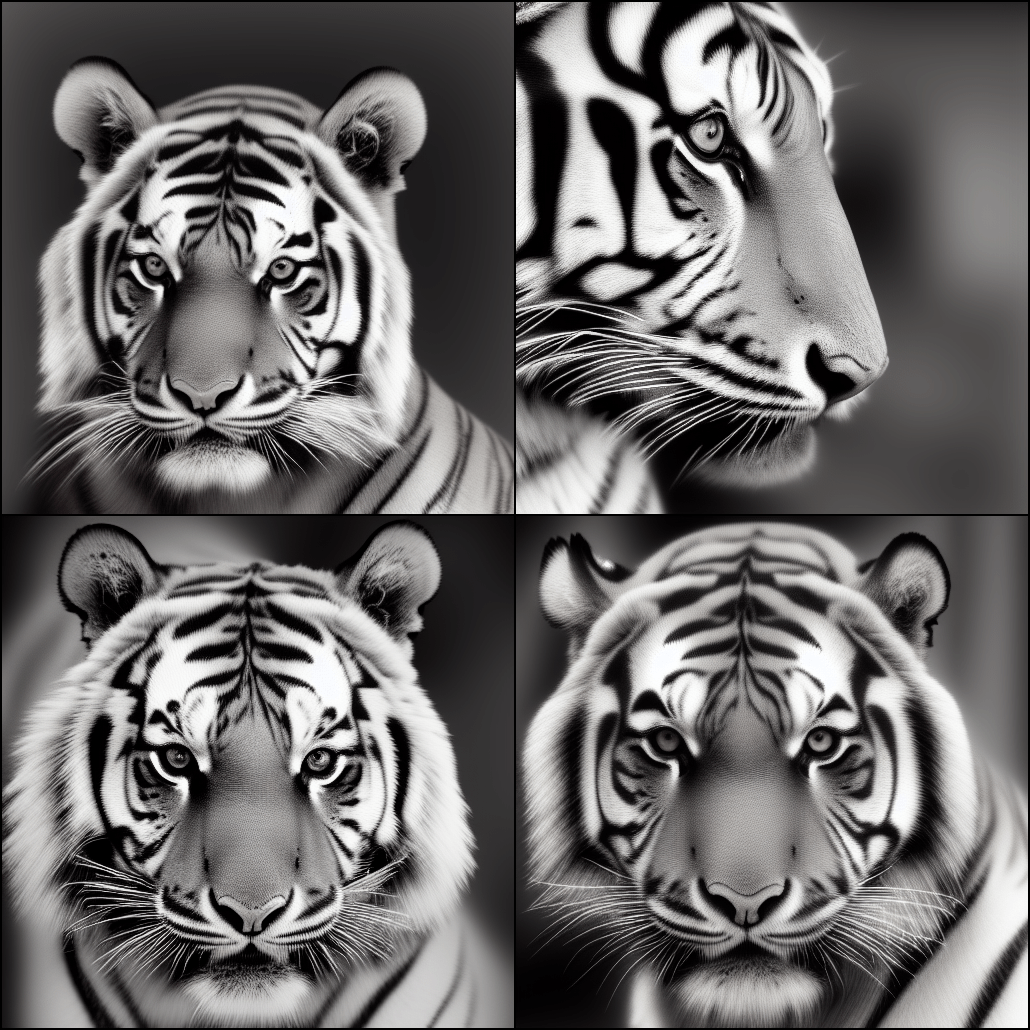}}
\subfigure[$\alpha_t=1$]{\includegraphics[width=0.24\textwidth]{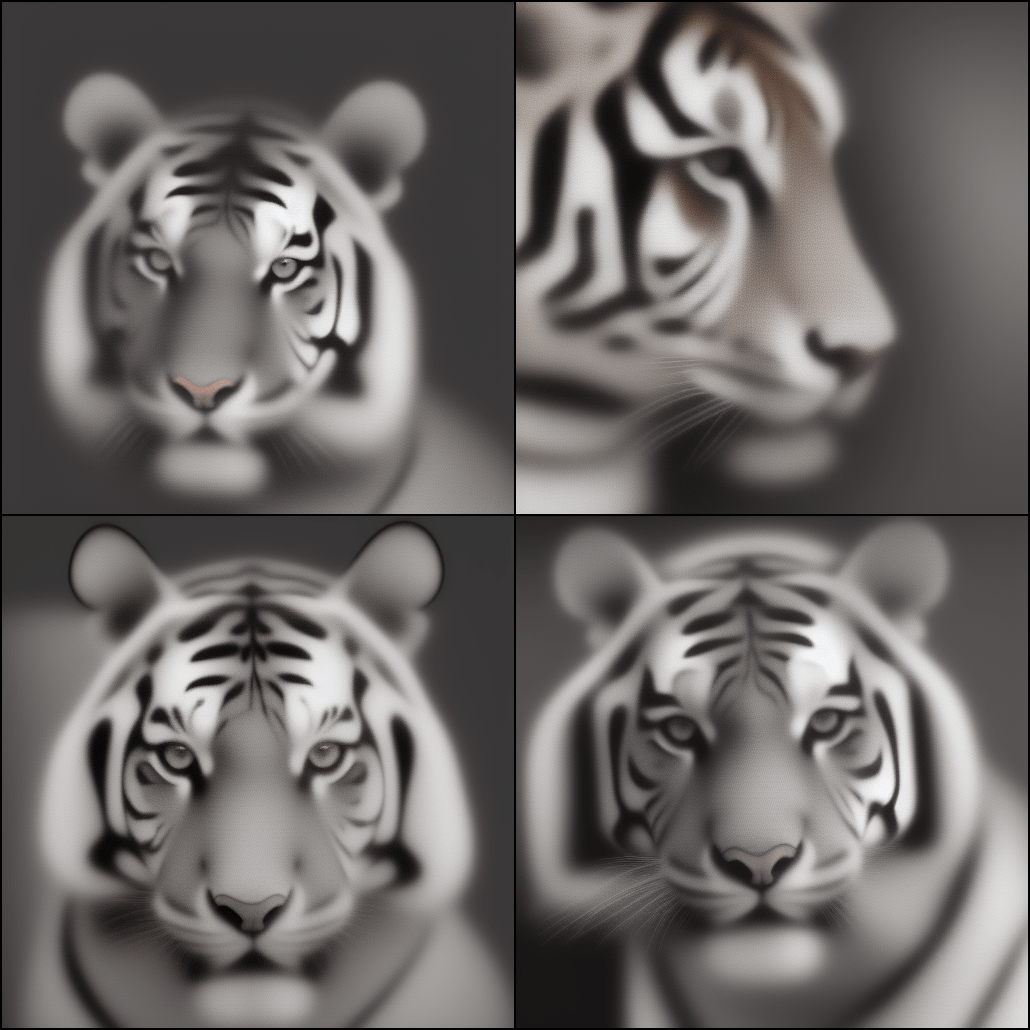}}
\subfigure[$\alpha_t=2$]{\includegraphics[width=0.24\textwidth]{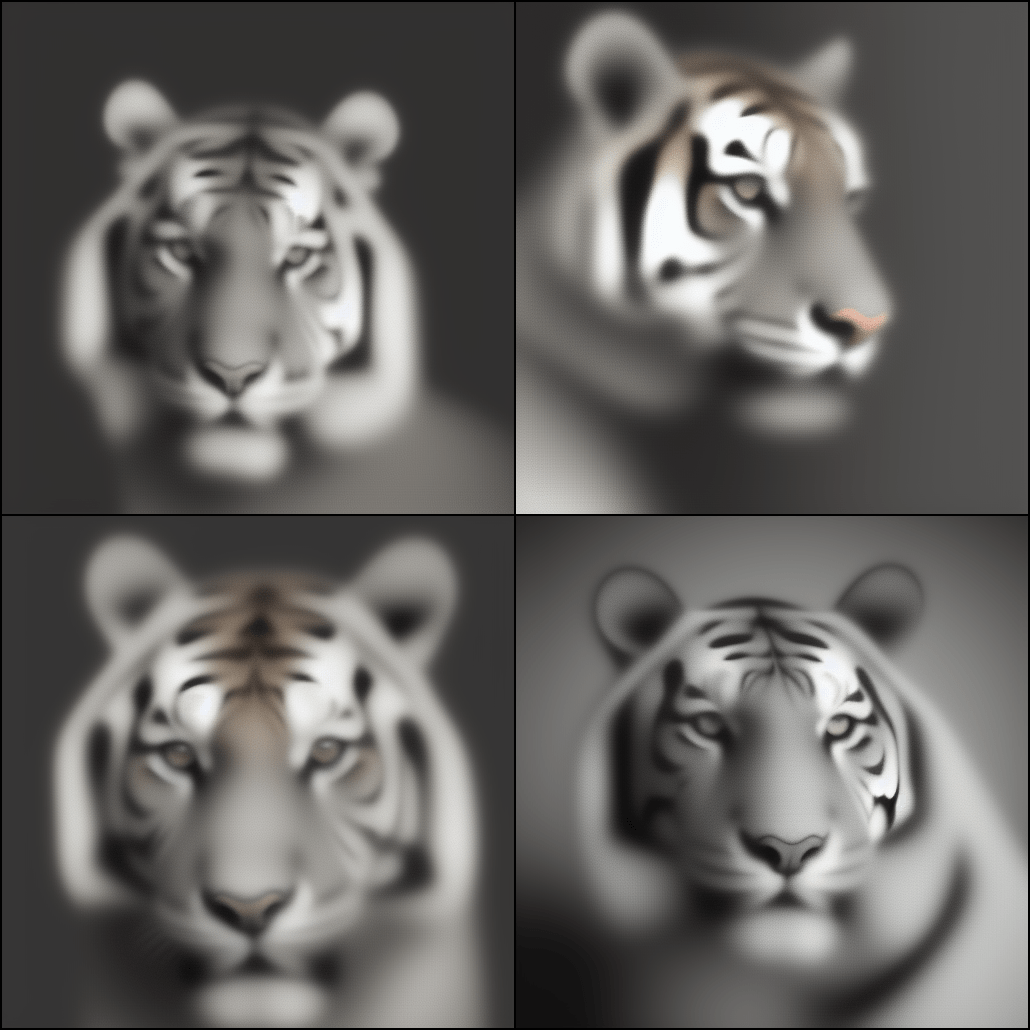}}
\caption{SVGD guidance, with varying $\alpha_t$}
\label{fig:svgd}
\end{figure*}

\end{document}